\newtheorem{theorem}{Theorem}
\newtheorem{corollary}{Corollary}
\newtheorem{lemma}{Lemma}
\newtheorem{assumption}{Assumption}
\newtheorem{proposition}{Proposition}
\newtheorem{definition}{Definition}
\newtheorem{remark}{Remark}
\DeclareMathOperator*{\argmax}{arg\,max}
\DeclarePairedDelimiter{\ceil}{\lceil}{\rceil}
\DeclarePairedDelimiter{\floor}{\lfloor}{\rfloor}
\icmltitlerunning{Cooperative Multiagent Bandits with Heavy Tails}
\begin{document}

\twocolumn[
\icmltitle{Cooperative Multi-Agent Bandits with Heavy Tails}

\begin{icmlauthorlist}
\icmlauthor{Abhimanyu Dubey}{to}
\icmlauthor{Alex Pentland}{to}
\end{icmlauthorlist}

\icmlaffiliation{to}{Media Lab and Institute for Data, Systems and Society,
  Massachusetts Institute of Technology}
% It is OKAY to include author information, even for blind
% submissions: the style file will automatically remove it for you
% unless you've provided the [accepted] option to the icml2019
% package.

% List of affiliations: The first argument should be a (short)
% identifier you will use later to specify author affiliations
% Academic affiliations should list Department, University, City, Region, Country
% Industry affiliations should list Company, City, Region, Country

% You can specify symbols, otherwise they are numbered in order.
% Ideally, you should not use this facility. Affiliations will be numbered
% % in order of appearance and this is the preferred way.
\icmlcorrespondingauthor{Abhimanyu Dubey}{dubeya@mit.edu}

% You may provide any keywords that you
% find helpful for describing your paper; these are used to populate
% the "keywords" metadata in the PDF but will not be shown in the document
\icmlkeywords{multiagent systems, stochastic bandits, heavy tailed, robustness}

\vskip 0.3in
]

% this must go after the closing bracket ] following \twocolumn[ ...

% This command actually creates the footnote in the first column
% listing the affiliations and the copyright notice.
% The command takes one argument, which is text to display at the start of the footnote.
% The \icmlEqualContribution command is standard text for equal contribution.
% Remove it (just {}) if you do not need this facility.

\printAffiliationsAndNotice{}  % leave blank if no need to mention equal contribution
% \printAffiliationsAndNotice{\icmlEqualContribution} % otherwise use the standard text.
\begin{abstract}
We study the heavy-tailed stochastic bandit problem in the cooperative multi-agent setting, where a group of agents interact with a common bandit problem, while communicating on a network with delays. Existing algorithms for the stochastic bandit in this setting utilize confidence intervals arising from an averaging-based communication protocol known as~\textit{running consensus}, that does not lend itself to robust estimation for heavy-tailed settings. We propose \textsc{MP-UCB}, a decentralized multi-agent algorithm for the cooperative stochastic bandit that incorporates robust estimation with a message-passing protocol. We prove optimal regret bounds for \textsc{MP-UCB} for several problem settings, and also demonstrate its superiority to existing methods. Furthermore, we establish the first lower bounds for the cooperative bandit problem, in addition to providing efficient algorithms for robust bandit estimation of location.
\end{abstract}
\section{Introduction}
The multi-agent cooperative bandit is an increasingly relevant decision-making problem in the era of large-scale and distributed inference. In this setting, a group of agents, $v \in V$, are each faced with a decision-making problem (e.g., a multi-armed bandit), while simultaneously communicating with other agents over a network. The motivation for studying this problem stems from the rise in decentralized computing systems, and technical interest arises from designing algorithms to leverage cooperation and accelerate collective decision-making~\cite{landgren2016distributed, landgren2018social}.

Consider the case when the agents interact with the classical multi-armed stochastic bandit. For each agent $v \in V$, the problem proceeds in a series of synchronous rounds $t = 1, 2, ..., T$. In each round, each agent selects an action $A_{v, t} \in \mathcal A$, where the space of actions $\mathcal A$ is assumed to be finite and countable $(|\mathcal A|=K)$. It then obtains an i.i.d. reward $x_{v, t}$, with mean $\mu_{A_{v, t}},A_{v, t} \in \mathcal A$. The overall objective of the agents is to minimize the \textit{group regret} $R_\mathcal G(T)$.
\begin{equation}
R_\mathcal G(T) = \sum_{v \in \mathcal G} R_v(T) = \mu_*T|V| - \sum_{v \in V}\sum_{t=1}^T \mu_{A_{v, t}}.
\end{equation}
Here, $\mu_* = \max_{a \in \mathcal A} \mu_a$ denotes the mean of the optimal arm, and $\Delta_a = \mu_*-\mu_a$ is the suboptimality of arm $a$ from the optimal arm $A_* = \argmax_{a \in \mathcal A}\mu_a$. In the cooperative bandit, each agent faces the same problem (i.e., $\mathcal A$ is the same for each agent). In addition to the exploration-exploitation tradeoff succintly captured by the conventional bandit problem, each agent faces the additional complexity of leveraging feedback from its neighbors. This feedback is typically available to the agent after a (variable) \textit{delay}, based on the communication network between the agents. This communication network is conveniently determined by a graph $\mathcal G = (V, E)$, where $V$ denotes the set of agents, and $E$ is the set of edges, where edge $(i, j) \in E$ if agents $i$ and $j$ are neighbors. Messages are sent between trials of the bandit problem, and take $d(v, v')$ trials to be communicated from agent $v$ to $v'$ (where $d(v, v')$ is the distance between the agents $v$ and $v'$ in $\mathcal G$). 

Research on this problem has exclusively been on reward distributions that are sub-Gaussian~\cite{landgren2016distributed, landgren2016distributed2, martinez2018decentralized}. While this is certainly applicable in several domains, increasing evidence suggests that assumptions of sub-Gaussianity may not hold for numerous applications  specific to \textit{distributed} decision-making, in problems such as distributed load estimation of internet traffic~\cite{hernandez2004variable, crovella1998heavy}, multi-agent modeling of supply chain networks~\cite{thadakamaila2004survivability}, modeling information cascades in economic multi-agent models~\cite{de1999information, konovalov2010information} and, among others, numerous problems in distributed modeling for social science~\cite{barabasi2005origin, eom2015tail}. Moreover, in sub-Gaussian environments, if the multi-agent communication is noisy with a (small) non-zero bias, this can lead to model misspecification equivalent to the communicated variable exhibiting heavy tails~\cite{mcculloch2011prediction}. It is therefore prudent to devise methods that are robust to such \textit{heavy-tailed} effects, which is the central theme of this paper. We define heavy-tailed random variables as follows. 

\begin{definition}[Heavy-Tailed Random Variables]
A random variable $X$ is \textit{heavy-tailed} if it does not admit a finite moment generating function, i.e., there is no $u_0>0$ such that,
\begin{equation*}
    \forall |u| \leq u_0, \ M_X(u) \triangleq \mathbb E[\exp(uX)] < \infty.
\end{equation*}
$X$ is $(1+\varepsilon)$-heavy tailed if $\mathbb E[|X|^{t}] = \infty$ for all $t >1+\varepsilon$. 
\end{definition}

Almost all aforementioned research on this problem proposes algorithms that average opinions via a \textit{consensus} protocol~\cite{landgren2016distributed}, and ensure a group regret of $O(\ln T)$ in sub-Gaussian settings. However, these algorithms, as a consquence, employ the empirical mean, which, as we demonstrate in this paper, leads to $O(T^{2/3})$ regret under $2$-heavy tails (i.e., finite variance, see Section~\ref{sec:consensus}). Moreover, the group regret achieved has a $O(\log |V|)$ dependence on $|V|$ (i.e., the number of agents), which we demonstrate to be suboptimal. To our knowledge, no prior work addresses $|V|$-optimality for the stochastic problem\footnote{There is recent research on the \textit{adversarial} cooperative case~\cite{cesa2019delay} with an $\Omega(\sqrt{|V|T})$ lower bound. For the stochastic case, we propose algorithms that achieve lower \textit{problem-dependent} rates, matching our bound of $\Omega(K \ln T)$.}. 

\textbf{Contributions}. Our first contribution is the first (problem-dependent) asymptotic lower bound of $\Omega(K\Delta^{-1/\varepsilon}\ln T)$ on the \textit{group regret} for the cooperative multi-armed stochastic bandit. This result holds for any connected graph $\mathcal G$ and arbitrary communication protocol (following mild conditions). Note that this implies that agents on average can achieve a maximum of $O(\tfrac{1}{|V|})$ reduction in regret when cooperating, compared to the earlier benchmark of $O(\tfrac{\ln |V|}{|V|})$ \cite{landgren2016distributed, martinez2018decentralized}. These results generalize the lower bound for the stochastic bandit with multiple pulls, obtained by \citet{anantharam1987asymptotically}, to the case when rewards are obtained after delays. In the heavy-tailed case, this lower bound matches the $\Omega(\Delta^{-1/\varepsilon}\ln T)$ rate obtained by \citet{bubeck2013bandits}, and in the sub-Gaussian case, matches the $\Omega(\Delta^{-1}\ln T)$ problem-dependent rates~\cite{agrawal2012analysis}.

Next, we present an algorithm \textsc{MP-UCB} for the cooperative multi-agent stochastic bandit under heavy-tailed densities. The key concept utilized in the development of the algorithm is to provide an alternate technique to control the variance of the arm estimators across the network $\mathcal G$ when the \textit{consensus} protocol provides suboptimal guarantees. This is done by utilizing an alternate communication protocol titled \textsc{Local}~\cite{linial1992locality} to share information between agents, and then incorporating robust mean estimators to achieve optimal rates. In this process, we also outline a subroutine to efficiently compute the univariate robust mean for the bandit problem (i.e., when confidence $\delta$ changes with time).

We demonstrate that \textsc{MP-UCB} achieves $O\left(\bar\chi(\mathcal G_\gamma)K\ln T\right)$ group regret when run in a completely decentralized manner (i.e., agents select actions independently), and $O\left(\alpha(\mathcal G_\gamma)K\ln T\right)$ when run in a centralized manner (i.e., some agents mimic others), similar to the regret bounds obtained by~\citet{cesa2019delay} for the adversarial case. Here, $\gamma \leq \text{diameter}(\mathcal G)$ is a parameter controlling the density of communication, and $\mathcal G_\gamma$ is the $\gamma$ graph power of $\mathcal G$. $\bar\chi$ denotes the clique covering number, and $\alpha \leq \bar\chi$ denotes the independence number of a graph. These results are optimal, in the sense that when our algorithm is run with $\gamma = \text{diameter}(\mathcal G)$, both variants obtain a group regret of $O(K \ln T)$, matching the lower bound. This $O(\ln |V|)$ improvement is achieved since the \textsc{Local} protocol allows us to partition the power graph $\mathcal G_\gamma$ in a manner that induces a constant regret overhead from the communication delay, in contrast to the \textit{consensus} protocol that diffuses information slowly for sparse $\mathcal G$. Furthermore, when we allow $O(K)$-sized messages per round, we demonstrate that \textsc{MP-UCB} obtains $O(\alpha(\mathcal G_\gamma)K\ln T)$ regret without knowledge of $\mathcal G$. 

We evaluate our algorithms on a benchmark of real-world and random graphs. While we consider heavy-tailed densities, it can be easily seen that \textsc{MP-UCB} can be applied to sub-Gaussian densities with optimal rates as well.

\section{Related Work}
\textbf{Cooperative Decision-Making}. Cooperative decision-making for the stochastic multi-armed bandit has recently seen a lot of research interest. Decentralized cooperative estimation has been explored for sub-Gaussian stochastic bandits using a \textit{running consensus} protocol in~\cite{landgren2016distributed, landgren2016distributed2, martinez2018decentralized} and for adversarial bandits~\cite{cesa2019delay, bar2019individual} using a message-passing protocol. Localized decision-making for sub-Gaussian rewards has also been explored in the work of~\cite{landgren2018social}, and a fully-centralized algorithm in~\cite{shahrampour2017multi}, where all agents select the same action via voting.The stochastic bandit with multiple pulls~\cite{xia2016budgeted, anantharam1987asymptotically} is equivalent to the cooperative multi-armed bandit on a complete $\mathcal G$ with a centralized actor (since there are no delays and all agents have the same information $\forall t \in [T]$).

Contrasted to cooperative settings, there is extensive research in competitive settings, where multiple agents compete for arms~\cite{bistritz2018distributed, bubeck2019non, liu2010distributed, liu2010distributed2, liu2010decentralized}. For strategic experimentation,~\citet{branzei2019multiplayer} provide an interesting comparison of exploration in cooperative and competitive agents.

A closely-related problem setting is the single-agent~\textit{social network} bandit, where a user is picked at random every trial, and the algorithm must infer its contextual mean reward~\cite{cesa2013gang, li2016collaborative, gentile2014online, gentile2017context}, while assuming an underlying ~\textit{clustering} over the users. This problem setting, while relevant, crucially differs from the one considered herein, since (a) this is a single-agent setting (only one action is taken every round), and (b) there are no delays in the rewards obtained. While a multi-agent variant has been considered~\cite{korda2016distributed}, this work also assumes no delays in communication. 

\textbf{Heavy-Tailed Bandits}. \citet{bubeck2013bandits} first discuss the problem of stochastic bandits with heavy-tailed rewards, and propose the~\textsc{Robust-UCB} algorithm that uses robust mean estimators to obtain logarithmic regret. \citet{vakili2013deterministic} introduce DSEE, an algorithm that sequences phases of exploration and exploitation to obtain sublinear regret. Thompson Sampling~\cite{thompson1933likelihood} has been analysed for exponential family bandits (that include Pareto and Weibull heavy-tailed distributions) in the work of~\citet{korda2013thompson}, however, these distributions have ``lighter'' tails owing to the existence of higher order moments. \citet{dubey2019thompson} provide an algorithm for Thompson Sampling for $\alpha$-stable densities~\cite{borak2005stable}, at family of heavy-tailed densities typically with infinite variance.~\citet{yupure} provide a purely exploratory algorithm for best-arm identification for $\varepsilon$-heavy tailed rewards. For the linear bandit,~\cite{shao2018almost, medina2016no} provide nearly-optimal algorithms under heavy tails. To our knowledge, this paper is the first to study robust bandit learning in the context of decentralized multi-agent estimation.
\section{Preliminaries}
\textbf{Finite-Armed Stochastic Bandit}. We consider the family of bandit problems $\mathcal E$ for a finite, countable set of actions $\mathcal A$, such that $|\mathcal A| = K$. $\mathcal E$ is considered to be \textit{unstructured}, i.e. the rewards from each arm are independent of the others.
\begin{definition}[Unstructured Bandit Problem]
An environment class of bandit problems $\mathcal E$ is unstructured if its action space $\mathcal A$ is finite, and there exists a set of distributions $\mathcal M_a$ for each $a \in \mathcal A$ such that
\begin{equation*}
    \mathcal E = \{\bm\nu = (P_a : a \in \mathcal A) : P_a \in \mathcal M_a \forall a \in \mathcal A\}.
\end{equation*}
\end{definition}
Agents face a common stochastic bandit with $K$ arms. Rewards from arm $k \in [K]$ are drawn from an $\varepsilon$-heavy tailed distribution $\nu_k$ with mean $\mu_k$, and known bounds on the $(1+\varepsilon)$ moments $\mathbb E[|X-\mu_k|^{1+\varepsilon}] \leq \rho$ and $\mathbb E[|X|^{1+\varepsilon}] \leq u$. The optimal arm is given by $k^* = \arg\max_{k \in [K]}\mu_k$.

\textbf{Cooperative Problem Setting}. We consider $M$ agents communicating via a connected, undirected graph $\mathcal G = (V, E)$. Communication is bidirectional, and any message sent from agent $v$ is obtained by agent $v'$ after $d(v, v') - 1$ rounds of the bandit problem, where $d(v, v')$ denotes the length of the shortest path between the agents. Let $\bm L$ and $\bm A$ denote the graph Laplacian and adjacency matrix of $\mathcal G$, and ${\bm P} = {\bm I}_M - \kappa\cdot d_{\max}^{-1}{\bm L}$ is a row stochastic matrix, where ${\bm I}_M$ is the identity matrix of order $M$, $\kappa > 0$ is a constant and $d_{\max} = \max_{m \in \mathcal G} \text{degree}(m)$. We assume that the eigenvalues $\lambda_i$ of $\bm P$ are ordered such as $\lambda_1 =1 \geq ... \geq \lambda_M > -1$.  Let $A_{mt}$ denote the action taken by agent $m$ at time $t$, and $X_{mt}$ denote the corresponding reward. $n_k(T)$ denotes the total number of times any arm $k$ is pulled across all agents, and $n^m_k(T)$ denotes the times agent $m$ has pulled the arm. Let the power graph of order $\gamma$ of $\mathcal G$ be given by $\mathcal G_{\gamma}$, i.e., $\mathcal G_\gamma$ contains an edge $(i, j)$ if there exists a path of length at most $\gamma$ in $\mathcal G$ between agents $i$ and $j$. For any agent $v \in V$, let the neighborhood of $m$ in $\mathcal G_\gamma$ be given by $\mathcal N_\gamma(v)$. The policy of agent $v \in V$ is given by $(\pi_{v, t})_{t \in [T]}$, and the collective policy is given by $\Pi = (\pi_{v, t})_{v \in V, t \in [T]}$. 
\begin{definition}[Consistent Bandit Policy]
Let $\Pi$ be any bandit policy, potentially running over multiple agents. $\Pi$ is \textbf{consistent} if, for any suboptimal arm $k \in [K], k \neq k^*$, horizon $T > 0$, one has $\mathbb E[N_k(T)] = o(T^a)$ for any $a > 0$.
\end{definition}

\textbf{Univariate Robust Estimation}. Optimal algorithms have been proposed for robust estimation of location in the univariate setting with polynomial running time. The simplest of these is the trimmed mean, that rejects outlying samples based on an upper bound on the moments. Its runtime for $N$ samples obtained sequentially (with changing confidence $\delta$) is $ O(N^2)$, which we improve to $O(N \ln N)$ (Algorithm~\ref{alg:online_mean}).
\begin{definition}[Trimmed Mean]
Consider $n$ copies $X_1, ..., X_n$ of a heavy-tailed random variable $X$ such that $\mathbb E[X] = \mu, \mathbb E[X^{1+\varepsilon}]\leq u$ for some $\varepsilon\in(0, 1]$. The online trimmed mean, for some $\delta \in (0, 1)$ is defined as
\begin{equation*}
    \hat{\mu}_O = \frac{1}{n}\sum_{i=1}^n X_i \bm{1}\left\{|X_i| \leq \left(\frac{ui}{\log \delta^{-1}}\right)^{\frac{1}{1+\varepsilon}}\right\}.
\end{equation*}
\label{def:trimmed_mean}
\end{definition}
Several alternative robust mean estimators exist, such as the median-of-means or Catoni's estimator~\cite{catoni2012challenging}. Under stricter tail assumptions, they provide better estimates, however, for simplicity, we continue with the trimmed mean. In the analysis, we assume that a mean estimator exists that achieves the following optimal rate (up to constants).
\begin{assumption}[Rate Assumption]
Let $X_1, ..., X_n$ be $n$ samples of an $\varepsilon$-heavy tailed random variable, where $\varepsilon \in (0, 1]$, and $\mathbb E[X] = \mu$. For positive constants $c, \rho$ suppose that there exists a robust estimator $\hat\mu(\delta, n)$ such that, with probability at least $1-\delta$,
\begin{equation*}
    |\hat\mu(\delta, n) - \mu| \leq 2 \rho^{\frac{1}{1+\varepsilon}}\left(\frac{c\log(\delta^{-1})}{n}\right)^{\frac{\varepsilon}{1+\varepsilon}}.
\end{equation*}
\label{ass:robust_mean_estimator}
\end{assumption}
\citet{catoni2012challenging} provides this as the optimal achievable rate under heavy tails, and \citet{bubeck2013bandits} demonstrate that the trimmed mean achieves this rate (see appendix).
\section{Lower Bounds}
We now present lower bounds on cooperative decision-making. All full proofs are presented in the appendix for brevity. We consider $|V| = M$ agents communicating over graph $\mathcal G$, with $\text{diameter}(\mathcal G) = \gamma_* \ll M$. We first make some (mild) assumptions on the communication protocol.
\begin{assumption}[Communication Protocol]
The communication protocol considered follows:
\begin{enumerate}
    \item Any agent $m$ is capable of sending a message $\bm q_{m}(t)$ to any other agent $m' \in [M]$, which is earliest received at time $t+\min(0, d(m, m') - 1)$.
    \item $\bm q_{m}(t)$ is a function of the action-reward pairs of agent $m$, i.e. $\bm q_{m}(t) = {\bm F}_t(A_{m, 1}, X_{m, 1}, ..., A_{m, t}, X_{m, t})$ for any deterministic, bijective and differentiable set of functions ${\bm F}_t = (f_{i, t})_{i \in [L]}, f_{i,t}: \mathbb R^{2t} \rightarrow \mathbb R$.
    \item  ${\bm F}_t$ satisfies $|\text{det} \left({\bm J}_t\right)| = \Lambda(m, t).$ Here, ${\bm J}_t(\cdot)$ is the Jacobian of ${\bm F}_t$, and $\Lambda$ is only a function of $m$ and $t$.
\end{enumerate}
\label{ass:comm_protocol}
\end{assumption}
This assumption ensures that (a) information can flow between any two agents, and (b) that the messages are not stochastic and are independent the bandit problem. We can then derive a lower bound on the group regret.

\begin{theorem}[Lower Bound]
For any consistent cooperative multi-agent policy $\Pi = (\Pi_t)_{t \in [T]}$ on $M$ agents that satisfies Assumption~\ref{ass:comm_protocol} the following is true.
\begin{equation*}
  \liminf_{T\rightarrow\infty}\frac{R_\mathcal G(T)}{\ln T} \geq \sum_{k:\Delta_k > 0} \frac{\Delta_k}{\mathbb D^{\mathsf {inf}}_k}.
\end{equation*}
Here, $\mathbb D^{\mathsf{inf}}_k = \inf_{\nu' \in \mathcal M_k}\left\{\mathbb D_{\mathsf {KL}}(\nu, \nu') : \mu(\nu') > \mu^*\right\}$, and $\mathbb D_{\mathsf{KL}}(\cdot, \cdot)$ denotes the Kullback-Leibler divergence.
\label{thm:lower_bound_pulls}
\end{theorem}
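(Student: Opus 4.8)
The plan is to follow the change-of-measure template of Lai--Robbins, generalised to the $M$-agent, delayed-communication setting, with Assumption~\ref{ass:comm_protocol} doing the work of ensuring the messages are invisible to the information-theoretic argument. First I would reduce the group regret to pull counts. Since every suboptimal pull of arm $k$ by any agent contributes $\Delta_k$ to the group regret, the identity
\begin{equation*}
R_{\mathcal G}(T) = \sum_{k:\Delta_k>0}\Delta_k\,\mathbb E_{\bm\nu}[n_k(T)],
\qquad n_k(T)=\sum_{m\in V}n^m_k(T),
\end{equation*}
reduces the theorem to proving $\liminf_{T}\mathbb E_{\bm\nu}[n_k(T)]/\ln T \ge 1/\mathbb D^{\mathsf{inf}}_k$ for each suboptimal $k$ and summing the bounds.

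Next I would set up the change of measure. Fix a suboptimal arm $k$ and, for any $\eta>0$, choose $\nu'\in\mathcal M_k$ with $\mathbb D_{\mathsf{KL}}(\nu_k,\nu')\le \mathbb D^{\mathsf{inf}}_k+\eta$ and $\mu(\nu')>\mu^*$; let $\bm\nu'$ be the instance obtained from $\bm\nu$ by replacing only the law of arm $k$ by $\nu'$, so that $k$ becomes the unique optimal arm. The central step is the divergence decomposition for the \emph{entire system trajectory} $\mathcal H_T$, which collects all agents' actions, rewards, and exchanged messages. Writing the likelihood of $\mathcal H_T$ and cancelling the policy factors (identical functions of the observed history under $\bm\nu$ and $\bm\nu'$), each message $\bm q_m(t)=\bm F_t(\cdot)$ contributes, by the change-of-variables formula, a Jacobian factor $\Lambda(m,t)$. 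Assumption~\ref{ass:comm_protocol}(3) forces $\Lambda(m,t)$ to depend only on $(m,t)$ and not on the realised rewards, so these factors are the same under both measures and cancel in the log-likelihood ratio; bijectivity (Assumption~\ref{ass:comm_protocol}(2)) guarantees no reward information is lost or duplicated. The ratio therefore collapses to the arm-$k$ reward terms alone, yielding
\begin{equation*}
\mathbb D_{\mathsf{KL}}\!\left(\mathbb P^{\mathcal H_T}_{\bm\nu},\,\mathbb P^{\mathcal H_T}_{\bm\nu'}\right) = \mathbb E_{\bm\nu}[n_k(T)]\cdot \mathbb D_{\mathsf{KL}}(\nu_k,\nu').
\end{equation*}

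Finally I would extract the bound through the data-processing inequality. Consider the event $\mathcal A=\{n_k(T)\le MT/2\}$, and set $p_T=\mathbb P_{\bm\nu}(\mathcal A)$, $q_T=\mathbb P_{\bm\nu'}(\mathcal A)$. Under $\bm\nu$ arm $k$ is suboptimal, so consistency and Markov's inequality give $p_T\to 1$; under $\bm\nu'$ arm $k$ is optimal, hence the remaining arms are pulled $o(T^a)$ times in total and $q_T=O(T^{a-1})\to 0$. Pushing forward through $\mathbf 1\{\mathcal A\}$,
\begin{equation*}
\mathbb E_{\bm\nu}[n_k(T)]\,\mathbb D_{\mathsf{KL}}(\nu_k,\nu') \ge \mathbb D_{\mathsf{KL}}\!\left(\mathrm{Ber}(p_T),\mathrm{Ber}(q_T)\right) \ge (1-a)\ln T - o(\ln T),
\end{equation*}
so $\liminf_{T}\mathbb E_{\bm\nu}[n_k(T)]/\ln T \ge (1-a)/(\mathbb D^{\mathsf{inf}}_k+\eta)$; letting $a,\eta\to 0$ and summing over suboptimal arms gives the theorem.

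I expect the main obstacle to be the message-passing divergence decomposition, i.e.\ verifying that the $M$-agent trajectory, with its network-delayed and history-dependent messages, still admits the single-statistic likelihood ratio above. This is exactly where Assumption~\ref{ass:comm_protocol} is indispensable: absent the deterministic, bijective, value-independent-Jacobian structure, a cleverly engineered protocol could encode arm-$k$ information into the messages, inflating the right-hand side and weakening the bound. The delays are harmless here, since they only further restrict the admissible policies; the genuine subtlety is purely in showing that the communicated quantities contribute no additional relative entropy.
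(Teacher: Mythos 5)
Your proposal is correct and follows essentially the same route as the paper: the heart in both cases is the multiagent divergence decomposition, where the deterministic, bijective, reward-independent-Jacobian structure of Assumption~\ref{ass:comm_protocol} makes the message factors cancel in the likelihood ratio, collapsing the system-level KL to $\mathbb E_{\bm\nu}[n_k(T)]\,\mathbb D_{\mathsf{KL}}(\nu_k,\nu')$, after which consistency under both instances forces logarithmic pulls. The only difference is in the finishing inequality: the paper applies the Bretagnolle--Huber inequality to the event $\{N_k(T)>T/2\}$ and lower-bounds the sum of the two instances' regrets, whereas you apply the data-processing inequality to a binary KL together with Markov's inequality on the pull counts --- two standard, interchangeable ways of completing the Lai--Robbins argument.
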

\begin{remark} 
Theorem~\ref{thm:lower_bound_pulls} does not guarantee an overhead from delayed communication, since it includes protocols that allow information to flow completely through the (connected) network $\mathcal G$, albeit at a delay (which is independent of $T$). Making stronger assumptions about the connectivity of $\mathcal G$ and communication protocol can lead to stronger bounds.
\end{remark}
\begin{remark}
This result generalizes that obtained by~\citet{anantharam1987asymptotically} for a centralized agent with multiple pulls to the case where rewards are obtained after finite delays. This can be understood by considering a complete $\mathcal G$, which is equivalent to having a centralised agent (since there is no difference in information between agents). The comparison with a single agent pulling $MT$ arms\cite{martinez2018decentralized}, is therefore an incorrect benchmark.
\end{remark}
For the specific case of $(1+\varepsilon)$-heavy tailed rewards, the single-agent lower bound provided by~\cite{bubeck2013bandits} can be easily extended to the cooperative multi-agent case.
\begin{corollary}[Lower Bound on Heavy-Tailed Cooperative Regret]
For any $\Delta \in (0, 1/4)$, there exist $K \geq 2$ distributions $\nu_1, ..., \nu_K$ satisfying $\mathbb E_{X \sim \nu_k}[|X|^{1+\varepsilon}] \leq u$, and $\mathbb E_{X \sim \nu_*}[X] - \mathbb E_{X \sim \nu_k}[X] = \Delta \forall k \in K$, such that any consistent decentralized policy $\Pi_t = (\pi_{m, t})_{m \in [M], t \in [T]}$ that satisfies Assumption~\ref{ass:comm_protocol} obtains group regret of $\Omega(K\Delta^{-1/\varepsilon}\ln T)$.
\end{corollary}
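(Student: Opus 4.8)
The plan is to reduce the corollary to Theorem~\ref{thm:lower_bound_pulls} and then supply an explicit heavy-tailed instance, following the single-agent construction of \citet{bubeck2013bandits}. Theorem~\ref{thm:lower_bound_pulls} already absorbs all of the multi-agent and delayed-communication structure: for any consistent policy respecting Assumption~\ref{ass:comm_protocol}, it gives
\begin{equation*}
\liminf_{T\to\infty}\frac{R_\mathcal G(T)}{\ln T} \geq \sum_{k:\Delta_k>0}\frac{\Delta_k}{\mathbb D^{\mathsf{inf}}_k}.
\end{equation*}
Hence it suffices to exhibit $K$ distributions $\nu_1,\dots,\nu_K$ with the stated moment bound and common gap $\Delta$, for which each suboptimal arm has $\mathbb D^{\mathsf{inf}}_k = O(\Delta^{(1+\varepsilon)/\varepsilon})$; plugging this into the display gives $\Delta_k/\mathbb D^{\mathsf{inf}}_k = \Omega(\Delta^{-1/\varepsilon})$, and summing over the $\Theta(K)$ suboptimal arms yields the claimed $\Omega(K\Delta^{-1/\varepsilon}\ln T)$ bound. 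The point is that the corollary needs no further appeal to the network beyond this single invocation.

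For the construction I would take $\mathcal M_k$ to be the set of distributions with $\mathbb E[|X|^{1+\varepsilon}]\leq u$, and let each arm be a two-atom distribution supported on $\{0,B\}$ with $B \asymp \Delta^{-1/\varepsilon}$, placing mass $p_k$ on $B$ so that the mean is $\mu_k = p_k B$. The moment constraint reads $p_k B^{1+\varepsilon}\leq u$, i.e.\ $p_k \lesssim u\,\Delta^{(1+\varepsilon)/\varepsilon}$, and fixing the gaps $\mu^* - \mu_k = \Delta$ is achieved by taking the optimal arm's atom mass larger by $\Delta/B$. The restriction $\Delta\in(0,1/4)$ guarantees all probabilities lie in $[0,1]$ and that the moment bound can be met with room to spare.

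The heart of the argument is the upper bound on $\mathbb D^{\mathsf{inf}}_k$. Since $\mathbb D^{\mathsf{inf}}_k$ is an \emph{infimum} of $\mathbb D_{\mathsf{KL}}(\nu_k,\nu')$ over competitors $\nu'\in\mathcal M_k$ with $\mu(\nu')>\mu^*$, it is enough to exhibit one admissible competitor at small KL cost. I would take $\nu'$ to be the two-atom distribution on the same support $\{0,B\}$ with atom mass $p' = p_k + (1+o(1))\Delta/B$, which raises the mean just past $\mu^*$. Two checks are needed: (i) $\nu'\in\mathcal M_k$, i.e.\ $p'B^{1+\varepsilon}\leq u$, which holds because the perturbation of $p_k$ is not larger in order than $p_k$ itself for small $\Delta$; and (ii) the KL cost is $\mathbb D_{\mathsf{KL}}(\nu_k,\nu') = \Theta((p'-p_k)^2/p_k)$, via the standard estimate $\mathbb D_{\mathsf{KL}}(\mathrm{Bern}(p),\mathrm{Bern}(p'))\leq (p-p')^2/(p'(1-p'))$, which here evaluates to $\Theta(\Delta^{(1+\varepsilon)/\varepsilon})$.

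The main obstacle I anticipate is the bookkeeping in step (ii): choosing $B$ and $p_k$ jointly so that $\nu'$ simultaneously (a) stays inside the moment ball $\mathcal M_k$, (b) has mean strictly exceeding $\mu^*$, and (c) has KL cost of order $\Delta^{(1+\varepsilon)/\varepsilon}$ rather than a larger power, uniformly in $\varepsilon\in(0,1]$. The right choice saturates the moment constraint with $B\asymp\Delta^{-1/\varepsilon}$ and correspondingly $p_k\asymp\Delta^{(1+\varepsilon)/\varepsilon}$, so that both the atom mass $p_k$ and the mean-shifting perturbation $\delta p = \Delta/B$ are of order $\Delta^{(1+\varepsilon)/\varepsilon}$; the ratio $(\delta p)^2/p_k$ is then exactly of this order, which is precisely what drives $\mathbb D^{\mathsf{inf}}_k$ down and produces the $\Delta^{-1/\varepsilon}$ blow-up in $\Delta_k/\mathbb D^{\mathsf{inf}}_k$. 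Once this instance is verified, the corollary follows immediately from Theorem~\ref{thm:lower_bound_pulls}.
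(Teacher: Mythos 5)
Your proposal is correct and follows essentially the same route as the paper: invoke Theorem~\ref{thm:lower_bound_pulls}, instantiate the two-atom scaled-Bernoulli construction of \citet{bubeck2013bandits} with atom at $\asymp \Delta^{-1/\varepsilon}$ and atom mass $\asymp \Delta^{(1+\varepsilon)/\varepsilon}$, and show the relevant divergence term is $O\bigl(\Delta^{(1+\varepsilon)/\varepsilon}\bigr)$, which yields $\Omega(K\Delta^{-1/\varepsilon}\ln T)$. The only cosmetic difference is that you upper-bound the infimum $\mathbb D^{\mathsf{inf}}_k$ by exhibiting an explicit competitor inside the moment class, whereas the paper identifies the instance with Bernoulli bandits and applies the bound $\mathbb D_{\mathsf{KL}}(\mathcal B(\theta_2), \mathcal B(\theta_1)) \leq (\theta_1-\theta_2)^2/(\theta_1(1-\theta_1))$ --- the same estimate you use.
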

The $O(\Delta^{-1/\varepsilon})$ dependency is unavoidable, as shown in~\cite{bubeck2013bandits}, and it can be matched using robust estimators. The formulation of robust estimators makes averaging-based communication protocols infeasible, such as the \textit{running consensus}, as shown in the following section.
\section{The Limits of Running Consensus}
\label{sec:consensus}
Under the consensus protocol, agents maintain an estimate of values of interest, which they average with their neighbors every round. The protocol stores $2K$ opinion vectors $\hat{\bm s}_k(t) = (\hat s^v_k(t))_{v \in V}, k \in [K]$ and $\hat{\bm n}_k(t) = (\hat n^v_k(t))_{v \in V}, k \in [K]$, that are updated as follows.
\begin{align}
    \hat{\bm s}_k(t) &= {\bm P} \left(\hat{\bm s}_k(t-1) + {\bm r}_k(t) \circ  {\bm \zeta}_k(t)\right). \\
    \hat{\bm n}_k(t) &= {\bm P} \left(\hat{\bm n}_k(t-1) + {\bm \zeta}_k(t)\right).
\end{align}
Here $\hat{\bm s}_k(t)$ is a vector of reward sums for arm $k$ for each agent, ${\bm \zeta}_k(t)$ is a vector of indicators of whether the agents pulled arm $k$ at time $t$, and ${\bm r}_k(t)$ is the vector of rewards obtained by the agents from arm $k$. Using this, any agent $v \in V$ computes the empirical mean of each arm $k$.
\begin{equation}
    \hat\mu_k ^{(v)}(t) = \hat{s}^{(v)}_k(t)/\hat{n}^{(v)}_k(t).
\end{equation}

When $\varepsilon = 1$, i.e. the reward distributions have finite variance, we can  design a UCB algorithm \textsc{Consensus-UCB}~\cite{landgren2016distributed}, where each agent chooses the arm that maximizes the following UCB. 
\begin{equation}
    A_{v, t} = \arg\max_{k \in [K]} \left\{\hat\mu_k ^{(v)}(t) +\sqrt{\frac{6\rho t^{2/3}}{|V|}\left(\frac{\hat{n}^v_k(t) + \epsilon^k}{\hat{n}^v_k(t)^2}\right)}\right\}.
\end{equation}

\begin{theorem}
The \textsc{Consensus-UCB} algorithm obtains a group regret of $O\left((1+h(\mathcal G))KT^{\frac{2}{3}}\right)$ after $T$ trials, where $h(\mathcal G)$ is, for constants $a_{pj}$ that only depend on $\mathcal G$,
\begin{equation*}
  h(\mathcal G) = \sum_{p=1}^{|V|} \sum_{j=2}^{|V|} \frac{|\lambda_p\lambda_j|}{1-|\lambda_p\lambda_j|}a_{pj}.
\end{equation*}
\label{thm:coop_ucb_basic}
\end{theorem}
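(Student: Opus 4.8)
The plan is to follow the standard UCB regret decomposition $R_\mathcal{G}(T) = \sum_{k:\Delta_k>0}\Delta_k\,\mathbb{E}[n_k(T)]$ and bound the expected number of pulls $\mathbb{E}[n_k(T)]$ of each suboptimal arm. The only departure from the single-agent analysis is that each agent acts on the \emph{consensus} estimate $\hat\mu_k^{(v)}(t)=\hat s_k^{(v)}(t)/\hat n_k^{(v)}(t)$ rather than its own empirical mean, so the first task is to quantify how well this diffused ratio tracks $\mu_k$. To that end I would unroll the two recursions to $\hat{\bm s}_k(t)=\sum_{\tau=1}^t \bm P^{t-\tau+1}(\bm r_k(\tau)\circ\bm\zeta_k(\tau))$ and $\hat{\bm n}_k(t)=\sum_{\tau=1}^t \bm P^{t-\tau+1}\bm\zeta_k(\tau)$, exhibiting each as a sum of per-agent samples weighted by powers of the (symmetric, doubly stochastic) mixing matrix $\bm P$. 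Since $\bm P^{s}\to \tfrac{1}{|V|}\bm 1\bm 1^\top$, I would split each estimate into its projection onto the consensus direction $\lambda_1=1$ — which reproduces the centralized uniform average over all $|V|$ agents and hence concentrates around $\mu_k$ using the \emph{global} count $n_k(t)$ — plus a transient network-error term supported on the eigenspaces of $\lambda_2,\dots,\lambda_{|V|}$.

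The heart of the argument is bounding the variance of the diffused sum and the diffused count around this ideal average. Writing $\bm P=\sum_p \lambda_p\,\bm u_p\bm u_p^\top$ and accumulating the contributions of the mixing powers $\bm P^{t-\tau+1}$ over trials, the correlation built up between eigen-components $p$ and $j$ is governed by a geometric series $\sum_{s\ge 1}(\lambda_p\lambda_j)^s = |\lambda_p\lambda_j|/(1-|\lambda_p\lambda_j|)$; summing these across all eigenvalue pairs, with the eigenvector overlaps at agent $v$ furnishing the coefficients $a_{pj}$, reproduces exactly $h(\mathcal G)$. Dropping the consensus eigenvalue $\lambda_1=1$ from the second index (it carries the ideal average, handled separately) is why the inner sum starts at $j=2$. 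This shows the consensus mean has variance at most a factor $(1+h(\mathcal G))$ times that of the centralized average, and that the gap between $\hat n_k^{(v)}(t)$ and the true global count is absorbed by the additive $\epsilon^k$ term in the confidence radius.

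Because we assume only finite variance ($\varepsilon=1$), I would invoke a Chebyshev/Bernstein-type tail bound rather than a sub-Gaussian one, and this is the step that forces the $T^{2/3}$ scaling. Calibrating the confidence width to $\sqrt{6\rho t^{2/3}(1+h(\mathcal G))/(|V|\,\hat n_k^{(v)}(t))}$ makes the failure probability polynomially summable over $t$, and the usual UCB argument then shows a suboptimal arm is selected only while its effective pull count satisfies $\hat n_k^{(v)}(t)\lesssim (1+h(\mathcal G))\,t^{2/3}/\Delta_k^2$. Translating back to the global count gives $\mathbb{E}[n_k(T)] = O\!\left((1+h(\mathcal G))\,T^{2/3}/\Delta_k^2\right)$, and summing $\Delta_k\,\mathbb{E}[n_k(T)]$ over the $K$ arms yields the claimed $O\!\left((1+h(\mathcal G))KT^{2/3}\right)$ group regret.

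I expect the main obstacle to be the spectral variance computation for the \emph{ratio} estimator. Both $\hat s_k^{(v)}(t)$ and $\hat n_k^{(v)}(t)$ are diffused random quantities, and the indicators $\bm\zeta_k(\tau)$ are adapted to the algorithm's filtration, so the samples are neither independent across time nor across agents; controlling the cross terms in the double eigenvalue sum, showing they collapse cleanly into the $|\lambda_p\lambda_j|/(1-|\lambda_p\lambda_j|)$ factors, and doing so \emph{uniformly} over agents and over the data-dependent times at which arms are queried, is the delicate part. The ratio structure compounds this, since deviations of the denominator $\hat n_k^{(v)}(t)$ must be propagated through to the mean and folded into the $\epsilon^k$ correction without inflating the order of the bound.
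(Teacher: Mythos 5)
Your proposal follows essentially the same route as the paper's proof: the standard three-event UCB decomposition per agent, Chebyshev-type (finite-variance) concentration with the confidence width inflated by $t^{\alpha}$ and the exponent calibrated to $\alpha = 2/3$ so that the pull-count threshold $\propto (1+h(\mathcal G))\,T^{2/3}/\Delta_k^2$ and the accumulated failure probabilities balance at order $T^{2/3}$, then summing over arms and agents. The only real difference is that you propose to derive the consensus estimator's properties (unbiasedness, the variance bound carrying the $|\lambda_p\lambda_j|/(1-|\lambda_p\lambda_j|)$ correction, and the count deviation $\epsilon$) from scratch via the unrolled recursion and eigendecomposition of $\bm P$, whereas the paper imports exactly these facts as a black-box proposition cited from Landgren et al.\ (2016); your spectral sketch, including the geometric series in $\lambda_p\lambda_j$ and the exclusion of the consensus eigenvalue from the inner sum, is precisely how that cited result is established, so this is an expansion of a cited lemma rather than a genuinely different argument.
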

A full description of this algorithm is included in the appendix. Since we are utilizing the empirical mean for \textsc{Consensus-UCB}, the UCB utilized cannot be made tighter, suggesting that the algorithm is suboptimal in $T$. More importantly, it can be expected that any algorithm that utilizes the empirical mean (running consensus converges to the empirical mean as $T \rightarrow \infty$~\cite{aysal2010convergence}) is suboptimal in $T$ owing to the suboptimality of the empirical mean itself (see appendix).
\section{Message-Passing Cooperative UCB}
In the message-passing protocol, agents $v \in V$ communicate via messages ${\bm q}_v(t) = \left\langle v, t, A_{v, t}, X_{v, t} \right\rangle$. This message is first sent to its neighbors in $\mathcal G$, and it is subsequently forwarded by any agent that receives it until time $t+\gamma$, after which it is discarded. $0 \leq \gamma \leq \text{diameter}(\mathcal G)$ is therefore the communication \textit{density}, where lower values of $\gamma$ imply less communication in the network.

Let $Q_v(t)$ denote the set of incoming messages received by agent $v$ at instant $t$. During any trial, the agent first pulls an arm, and creates the message $\bm q_v(t)$. It then processes all messages in $Q_v(t)$, and updates its beliefs as per any bandit algorithm. Finally, it discards all messages older than $t-\gamma$ and forwards all remaining messages in $Q_v(t) \cup \{\bm q_v(t)\}$ to all its neighbors in $\mathcal G$. This protocol has been used in distributed optimization~\cite{moallemi2007message}, non-stochastic bandit settings~\cite{cesa2019delay, bar2019individual} and asynchronous online learning~\cite{suomela2013survey}. This protocol satisfies Assumption~\ref{ass:comm_protocol} with $\gamma=\text{diameter}(\mathcal G)$.
\begin{algorithm}[t]
\caption{\textsc{Decentralized MP-UCB}}
\small
\label{alg:mp_ucb_unknown_G}
\begin{algorithmic}[1] %[1] enables line numbers
\STATE \textbf{Input}: Arms $k \in [K]$, parameters $\varepsilon, c, \rho$, estimator $\hat\mu(n, \delta)$
\STATE $S^v_k \leftarrow \phi\ \forall k \in [K]$, $Q_v(t) \leftarrow \phi$, $\forall v \in V$.
\FOR{each iteration $t \in [T]$}
\FOR{each agent $v \in V$}
\IF{$t \leq K$}
\STATE $A_{m, t} \leftarrow t$.
\ELSE
\FOR{Arm $k \in [K]$}
\STATE $\hat\mu^{(v)}_k \leftarrow \hat\mu(S^v_k, 1/t^2)$.
\STATE $\text{UCB}_k^{(v)}(t) \leftarrow \rho^{\frac{1}{1+\varepsilon}}\left(\frac{2c\ln t}{|S^v_k|}\right)^{\frac{\varepsilon}{1+\varepsilon}}$.
\ENDFOR
\STATE $A_{v, t} \leftarrow \arg\max_{k \in [K]} \left\{\hat\mu_k ^{(v)}(t) + \text{UCB}_k^{(v)}(t)\right\}$.
\ENDIF
\STATE $X_{v, t} \leftarrow\textsc{Pull}(A_{v, t})$.
\STATE $S_{A_{v,t}}^v \leftarrow S_{A_{v,t}}^v \cup \{X_{v, t}\}$
\STATE $Q_v(t) \leftarrow Q_v(t) \cup \{\left\langle v, t, A_{v, t}, X_{v, t}\right\rangle\}$.
\FOR{each neighbor $v'$ in $\mathcal N_1(v)$}
\STATE \textsc{SendMessages}$(v, v', Q_v(t))$.
\ENDFOR
\ENDFOR
\FOR{each agent $v \in V$}
\STATE $Q_v(t+1) \leftarrow \phi$.
\FOR{each neighbor $v'$ in $\mathcal N_1(v)$}
\STATE $Q' \leftarrow $\textsc{ReceiveMessages}$(v',v)$
\STATE $Q_v(t+1) \leftarrow Q_v(t+1) \cup Q'$.
\ENDFOR
\FOR{$\langle v',t',a',x'\rangle \ \in Q_v(t+1)$}
\IF{$v' \in \textsc{Clique}(v, \mathcal G_\gamma)$}
\STATE $S_{a'}^v \leftarrow S_{a'}^v \cup \{x'\}$.
\ENDIF
\ENDFOR
\ENDFOR
\ENDFOR
\end{algorithmic}
\end{algorithm}
\subsection{Decentralized Algorithm}
In the decentralized setting, each agent acts independently, i.e., there is no centralized controller that dictates actions. In this setting, each agent $v$ maintains a set $S^v_k(t)$ of rewards obtained from arm $k$, which it updates at each trial from its own pulls and incoming messages. Then it computes the robust mean of $S^v_k(t)$ via the estimator $\hat\mu(|S^m_k(t)|, \delta)$. Using Assumption~\ref{ass:robust_mean_estimator}, it then estimate a UCB for each arm mean, and selects the arm with the largest UCB (Algorithm~\ref{alg:mp_ucb_unknown_G}).
\begin{theorem}
The group regret for Algorithm~\ref{alg:mp_ucb_unknown_G} when run with parameter $\gamma$ and mean estimator $\hat\mu(n, \delta)$ that satisfies Assumption~\ref{ass:robust_mean_estimator} with constants $c$ and $\rho$ satisfies:
\begin{multline*}
    R_\mathcal G(T) \leq C\bar\chi\left(\mathcal G_\gamma\right)\left(\sum_{k : \Delta_k > 0} (2\Delta_k)^{-1/\varepsilon}\right)\ln T +\\\left(3M + \gamma\bar{\chi}\left(\mathcal G_\gamma\right)\left(M-1\right)\right)\left(\sum_{k : \Delta_k > 0}\Delta_k\right).
\end{multline*}
Here, $C > 0$ is a constant independent of $T, K, M$, and $\bar\chi(\cdot)$ refers to the clique number.
\end{theorem}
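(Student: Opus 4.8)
The plan is to write the group regret as $R_\mathcal{G}(T) = \sum_{k:\Delta_k>0}\Delta_k\,\mathbb{E}[n_k(T)]$ and to bound the expected number of pulls $\mathbb{E}[n_k(T)]$ of each suboptimal arm. The organizing idea is the clique decomposition built into the algorithm: I would fix a minimum clique cover $\{C_1,\dots,C_{\bar\chi}\}$ of $\mathcal{G}_\gamma$ with $\bar\chi=\bar\chi(\mathcal{G}_\gamma)$, observing that the update step only merges a received reward $x'$ into $S^v_{a'}$ when the originating agent lies in the same clique as $v$. Since any two agents of a common clique are adjacent in $\mathcal{G}_\gamma$, they are at distance at most $\gamma$ in $\mathcal{G}$, so every reward pulled inside a clique at round $t$ is incorporated by every other clique member by round $t+\gamma$. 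Consequently each clique behaves like a single agent pooling i.i.d. rewards up to a bounded communication lag, and $\mathbb{E}[n_k(T)] = \sum_{j}\mathbb{E}[N^{C_j}_k(T)]$, where $N^C_k(t)$ is the collective count of arm $k$ inside clique $C$.

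Next I would set up the standard optimism analysis. Define the good event on which every agent's robust estimate obeys the Assumption~\ref{ass:robust_mean_estimator} deviation bound with $\delta=t^{-2}$; because the pooled set $S^v_k$ consists of i.i.d. samples of arm $k$, Assumption~\ref{ass:robust_mean_estimator} applies with $n=|S^v_k|$, and a union bound over the (at most $M$) agents and the two relevant arms, together with $\sum_t t^{-2}=O(1)$, shows that the contribution of the failure events to the regret is $O(M)\sum_k\Delta_k$. On the good event the usual UCB argument forces a pull of suboptimal arm $k$ only when the local sample count is below the threshold obtained by solving $\rho^{1/(1+\varepsilon)}(2c\ln t/n)^{\varepsilon/(1+\varepsilon)}\gtrsim\Delta_k/2$, i.e. $n^*_k = O(\Delta_k^{-(1+\varepsilon)/\varepsilon}\ln T)$. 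Multiplying by $\Delta_k$ gives the per-clique main term $\Delta_k\,n^*_k = O(\Delta_k^{-1/\varepsilon}\ln T)$, and summing over the $\bar\chi$ cliques produces the leading term $C\bar\chi(\mathcal{G}_\gamma)\sum_k(2\Delta_k)^{-1/\varepsilon}\ln T$.

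The crux, and the step I expect to be hardest, is converting the threshold on \emph{local} counts into a bound on the \emph{collective} clique count in the presence of the delay $\gamma$. The key observation is that when agent $v\in C$ pulls arm $k$ at round $t$ it has already received all clique rewards generated up to round $t-1-\gamma$, so $|S^v_k(t-1)|\ge N^C_k(t-1-\gamma)$; hence no clique member pulls $k$ once $N^C_k$ has exceeded $n^*_k$ at least $\gamma$ rounds earlier. Letting $\tau$ be the first round with $N^C_k(\tau)\ge n^*_k$, all pulls of $k$ after $\tau$ occur within the length-$\gamma$ window during which the crossing has not yet propagated, and in each such round at most $|C|-1$ agents can over-pull, bounding the overshoot by $\gamma(|C|-1)\le\gamma(M-1)$ per clique. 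I must be careful that the sample counts are random, so the threshold argument should be run through a peeling/union bound over the possible values of $|S^v_k|$ rather than as a fixed-$n$ statement; the adaptive choice $\delta=t^{-2}$ is exactly what makes this pass.

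Finally I would collect the three additive sources: the initialization (each of the $M$ agents pulls every arm once, contributing $M\sum_k\Delta_k$), the concentration-failure mass ($O(M)\sum_k\Delta_k$), and the delay overshoot ($\gamma\bar\chi(M-1)\sum_k\Delta_k$), which assemble into the stated constant $(3M+\gamma\bar\chi(\mathcal{G}_\gamma)(M-1))\sum_k\Delta_k$. Combining with the leading logarithmic term yields the claimed bound.
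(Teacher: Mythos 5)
Your proposal is correct and follows essentially the same route as the paper's proof: decompose the group regret over a clique cover of $\mathcal G_\gamma$, run the standard three-event optimism argument per clique with confidence $\delta = t^{-2}$ (union-bounding over possible sample counts), and convert the local-sample-count threshold into a collective clique count via the delay deficit bound $|S^v_k(t)| \geq N^{\mathcal C}_k(t) - (M-1)(\gamma-1)$, which is exactly the paper's Lemma~\ref{lemma:broadcast_delay}. Your stopping-time/overshoot framing of that delay step and the paper's direct counting argument are interchangeable, and both assemble the same additive constants (initialization, concentration failures, and the $\gamma\bar\chi(\mathcal G_\gamma)(M-1)$ delay term).
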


\textit{Proof (sketch)}. We first bound the regret in each clique $\mathcal C$ within the clique covering $\bm C_\gamma$ of $\mathcal G_\gamma$. This is done by noticing that the upper confidence bound for any arm at a selected $t$ deviates by a constant amount between agents based on the number of times each agent has pulled an arm. By bounding this deviation, we obtain a relationship between the confidence bound of each arm for each agent within the clique $\mathcal C$. Next, we bound the probability of pulling a suboptimal arm within the clique $\mathcal C$ using the previous result. Summing over the clique cover $\bm C_\gamma$ delivers the final form of the result. The complete proof is included in the appendix for brevity.
\begin{remark}
Communication density determines the group regret dependence or ``cooperation'' in Algorithm~\ref{alg:mp_ucb_unknown_G}. When $\gamma = \text{diameter}(\mathcal G), \bar\chi(\mathcal G_\gamma)=1$, and we incur optimal group regret $O(K \Delta^{-1/\varepsilon}\ln T)$, and also satisfies both assumptions of Assumption~\ref{ass:comm_protocol}. However, when $\gamma = 0$, i.e. agents do not communicate, regret is $O(|V|K \Delta^{-1/\varepsilon}\ln T)$.
\end{remark}
Each agent in Algorithm~\ref{alg:mp_ucb_unknown_G} utilizes observations only from its own clique in $\mathcal G_\gamma$ to make decisions, effectively paritioning $\mathcal G$. When $\mathcal G$ is sparse (e.g., small-world networks~\cite{barabasi2005origin}), the clique number of the graph $\mathcal G_\gamma$ can be large. In this case, a centralized variant can provide lower regret.

\subsection{Centralized Algorithm}
In the centralized setting, we present a version of the ``follow-the-leader'' strategy. Here, the agents are partitioned into ``leaders'' and ``followers''. The leader agents follow the same procedure identically to Algorithm~\ref{alg:mp_ucb_unknown_G}, and the follower agents simply copy the most recent action they have observed of their associated leader. We now describe how the graph $\mathcal G$ is partitioned into leaders and followers.
\begin{definition}[Maximal Weighted Independent Set]
An indepedent set of a graph $\mathcal G = (V, E)$ is a set of vertices $V' \subseteq V$ such that no two vertices in $V'$ are connected. A maximal independent set $V^*$ is the largest independent set in $\mathcal G$, and the independence number $\alpha(\mathcal G) = |V^*|$. For a vertex-weighted graph, a maximal weighted independent set $V_w' \subseteq V$ is the maximal independent set such that the sum of weights for all vertices in $V_w'$ is the largest possible.
\end{definition}

We select the leaders as the members of a maximal independent set $V' \subseteq V$ of $\mathcal G_\gamma$. For each follower agent $v \in V \setminus V'$ we assign a leader $l(v)$ to it such that $(a)$ there is an edge between $v$ and $l(v)$ in $\mathcal G_\gamma$, and $(b)$ $l(v)$ has maximum degree in $V' \cap \mathcal N_\gamma(v)$, i.e. $l(v) \in V'$ such that $l(v) = \arg\max_{v' \in V' \cap \mathcal N_1(v)} \text{degree}(v)$. It is trivial to demonstrate that each agent will either be a leader node, or be connected to a leader (see appendix). Algorithm~\ref{alg:ftl_mp_ucb} describes this algorithm particularly from its differences with the decentralized version.
\begin{algorithm}[t!]
\caption{\textsc{Centralized MP-UCB}}
\label{alg:ftl_mp_ucb}
\small
\begin{algorithmic}[1] %[1] enables line numbers
\STATE \textbf{Input}: Same as Algorithm~\ref{alg:mp_ucb_unknown_G}.
\STATE Set $S^v_k \leftarrow \phi\ \forall k \in [K]$, $Q_v(t) \leftarrow \phi$, $A^*_v \leftarrow 1$, for all $v \in V$.
\FOR{each iteration $t \in [T]$}
\FOR{each agent $v \in V$}
\IF{$t \leq K$}
\STATE $A_{v, t} \leftarrow t$.
\ELSIF{$v \in V'$ \OR $t \leq d(v, l(v))$}
\STATE Run lines 8-12 of Algorithm~\ref{alg:mp_ucb_unknown_G}.
\ELSE
\STATE $A_{v, t} \leftarrow A^*_v$.
\ENDIF
\STATE Run lines 14-19 of Algorithm~\ref{alg:mp_ucb_unknown_G}.
\ENDFOR
\FOR{each agent $v \in V$}
\STATE Run lines 22-26 of Algorithm~\ref{alg:mp_ucb_unknown_G}.
\FOR{$\langle v',t',a',x'\rangle \ \in Q_v(t+1)$}
\STATE $S_{a'}^v \leftarrow S_{a'}^v \cup \{x'\}$.
\ENDFOR
\STATE $A^*_v = $\textsc{ChooseLastAction}($\cup_k S_k^v(t+1)$). 
\ENDFOR
\ENDFOR
\end{algorithmic}
\end{algorithm}
\begin{theorem}
Algorithm~\ref{alg:ftl_mp_ucb} run with parameters $\gamma, c, \rho$ obtains the following group regret  (where $\alpha(\cdot)$ denotes the independence number).
\begin{equation*}
    R_\mathcal G(T) = O\left(\alpha(\mathcal G_\gamma)\left(\sum_{k:\Delta_k>0}\Delta_k^{-1/\varepsilon}\right)\ln T\right).
\end{equation*}
\end{theorem}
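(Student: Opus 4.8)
The plan is to reduce the centralized scheme to a union of at most $\alpha(\mathcal{G}_\gamma)$ independent ``leader-centred'' multiple-pull bandit instances and add up their regrets. First I would fix the combinatorial backbone. Since the leader set $V'$ is an inclusion-maximal independent set of $\mathcal{G}_\gamma$ it is a dominating set, so every follower $w \notin V'$ is adjacent in $\mathcal{G}_\gamma$ to its assigned leader $l(w) \in V'$, and $|V'| \le \alpha(\mathcal{G}_\gamma)$. This induces a partition of $V$ into groups $G_v = \{v\} \cup \{w : l(w)=v\}$ over the $\le \alpha(\mathcal{G}_\gamma)$ leaders, and the decomposition $R_\mathcal{G}(T) = \sum_{v \in V'}\sum_{w \in G_v} R_w(T)$. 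It then suffices to show that each group contributes $O\!\left(\sum_{k:\Delta_k>0}\Delta_k^{-1/\varepsilon}\ln T\right)$, up to a term that is lower order in $T$.

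The crux is to bound the aggregated number of suboptimal pulls within one group $G_v$, and the key structural fact is that the followers' rewards are routed back to the leader: every member of $G_v$ copies the leader's most recent action (up to the copying delay) and forwards its reward, so the leader's sample set $S^v_k$ accumulates the \emph{group-wide} pulls of arm $k$. Invoking Assumption~\ref{ass:robust_mean_estimator} with $\delta = t^{-2}$ (matching the $2c\ln t$ in $\mathrm{UCB}^{(v)}_k$), I would define the good event on which, simultaneously for the optimal arm and a given suboptimal arm $k$, $\hat\mu^{(v)}_{k^*}(t)+\mathrm{UCB}^{(v)}_{k^*}(t) \ge \mu^*$ and $\hat\mu^{(v)}_k(t) \le \mu_k + \mathrm{UCB}^{(v)}_k(t)$. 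On this event the leader selects $k$ only while $2\,\mathrm{UCB}^{(v)}_k(t)\ge\Delta_k$, i.e. only while the \emph{aggregated} count $|S^v_k(t)|$ stays below a threshold $N_k = O\!\left(\Delta_k^{-(1+\varepsilon)/\varepsilon}\ln T\right)$. Because this threshold constrains the aggregated count rather than the leader's own pulls, the group-wide number of suboptimal pulls is decoupled from the group size $|G_v|$; this is precisely what produces $\alpha$ in place of $M$.

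I would then convert the stopping condition into a pull count. Letting $\tau$ be the last round in which the leader selects $k$, the selection rule forces $|S^v_k(\tau)| < N_k$; since a follower reward reaches the leader after a round trip of at most $2(\gamma-1)$ rounds, all group pulls of $k$ up to round $\tau-2(\gamma-1)$ are already counted in $S^v_k(\tau)$ and hence number fewer than $N_k$. After $\tau$ the leader never selects $k$, and each follower (lagging by at most $\gamma-1$ rounds, plus a bounded number of initial private rounds) pulls $k$ at most $O(\gamma)$ further times, so $\sum_{w \in G_v} n^w_k(T) = O\!\left(\Delta_k^{-(1+\varepsilon)/\varepsilon}\ln T + |G_v|\gamma\right)$ on the good event. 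Weighting by $\Delta_k$, summing over arms, and adding back the failure probabilities (a convergent $\sum_t t^{-2}$ contributing only a constant) bounds the group regret by $O\!\left(\sum_{k:\Delta_k>0}\Delta_k^{-1/\varepsilon}\ln T + |G_v|\gamma\sum_{k:\Delta_k>0}\Delta_k\right)$. Summing over the $\le\alpha(\mathcal{G}_\gamma)$ groups with $\sum_{v\in V'}|G_v|=M$ gives the leading term $O\!\left(\alpha(\mathcal{G}_\gamma)\left(\sum_{k:\Delta_k>0}\Delta_k^{-1/\varepsilon}\right)\ln T\right)$, the residual $O(M\gamma\sum_k\Delta_k)$ being lower order.

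The step I expect to be the main obstacle is the delayed bookkeeping in the third paragraph: one must argue rigorously that the follower pulls committed during the round-trip window, before the leader's estimate updates, contribute only an additive $O(|G_v|\gamma)$ and never multiply the $\ln T$ term, and that the absence of a clique filter in the centralized variant (so $S^v_k$ may also absorb rewards from agents outside $G_v$) can only enlarge $|S^v_k|$ and thus tighten the stopping threshold, never weaken it. Making the good-event union bound uniform over all rounds and both arms while keeping the failure mass summable is the remaining technical point.
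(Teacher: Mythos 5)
Your proposal is correct and follows essentially the same route as the paper's own proof: the same leader/follower decomposition over the (dominating) independent set $V'$, the same key fact that the leader's sample set $S^v_k$ accumulates the group-wide pulls of arm $k$ up to an additive one-way-delay correction, the same UCB good-event argument from Assumption~\ref{ass:robust_mean_estimator} bounding the \emph{aggregated} count by $O(\Delta_k^{-(1+\varepsilon)/\varepsilon}\ln T)$, and the same additive $O(|G_v|\gamma)$ bookkeeping for follower lag before summing over the at most $\alpha(\mathcal G_\gamma)$ leaders. Your stopping-time framing and the paper's three-event (A)/(B)/(C) framing are equivalent, and the technical point you flag at the end (keeping the union bound over random sample counts summable) is precisely the step the paper itself treats only loosely.
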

\textit{Proof (sketch)}. The key idea is to partition $\mathcal G_\gamma$ into non-overlapping sets given by $V'$ and to note that $R_v \leq R_{l(v)} + \gamma$ for any $v \in V \setminus V'$. Then, we can bound the number of times any element in $V'$ selects an arm until time $t$ as a function of its neighborhood in $\mathcal G_\gamma$. Using this bound, we can then create an UCB to bound the probability of pulling a suboptimal arm for any agent $v \in V'$, and collectively bound the group regret of the entire neighborhood. Summing over $v \in V'$ delivers the final result, since $V'$ forms a vertex cover in $\mathcal G_\gamma$. The complete proof is available in the appendix.

Since $\alpha(\mathcal G) \leq \bar\chi({\mathcal G})$ for any graph $\mathcal G$, the centralized version of the MP-UCB algorithm obtains regret strictly no worse compared to the decentralized version. We are aware that the set of leader nodes must form a maximal independent set in $\mathcal G_\gamma$, however, for large graphs there may be multiple maximal independent sets present, and selecting a suboptimal independent set can increase group regret. Agents present more ``centrally'' may be a better choice as leaders, compared to ``peripheral'' agents. Our choice of independent set is motivated by the following result.
\begin{corollary}
For agent $v \in \mathcal G$, let $v^*$ denote its corresponding leader agent ($v = v^*$ for leaders), and $F(v^*)$ denote the corresponding set of follower agents for $v^*$ (including $v^*$). The following holds for the regret $R_v(T)$.
\begin{align*}
    R_v(T) = O\left(\frac{K\ln T}{|F(v^*)|\Delta^{1/\varepsilon}_{\min}}\right).
\end{align*}
\end{corollary}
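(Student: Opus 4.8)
The plan is to reduce the per-agent regret of an arbitrary agent $v$ to the regret of its assigned leader $v^*$, and then to exploit the fact that $v^*$ pools the observations of its entire cluster $F(v^*)$ to obtain a confidence bound that contracts $|F(v^*)|$ times faster than for an agent acting alone. Throughout I would work with the decomposition $R_{v^*}(T) = \sum_{k:\Delta_k>0}\Delta_k\,\mathbb E[n^{v^*}_k(T)]$ and bound the expected suboptimal pull counts of the leader.

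First I would handle the follower overhead. For any $v \in F(v^*)$, the follower runs its own UCB only while $t \leq d(v, l(v)) \leq \gamma$ and thereafter simply copies the most recent action it has observed of $v^*$. Since a message from $v^*$ reaches $v$ within $\gamma$ rounds, the number of times $v$ pulls a suboptimal arm $k$ satisfies $n^v_k(T) \leq n^{v^*}_k(T) + \gamma$, so that $R_v(T) \leq R_{v^*}(T) + \gamma\sum_{k:\Delta_k>0}\Delta_k$. The additive term is independent of $T$ and is absorbed into the $O(\cdot)$, so it suffices to bound $R_{v^*}(T)$.

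Next I would relate the leader's pooled sample count to its own pull count. Every agent in $F(v^*)$ lies within distance $\gamma$ of $v^*$ in $\mathcal G$, so $v^*$ receives every cluster observation with delay at most $\gamma$, and each follower's arm-$k$ count tracks the leader's up to the same lag. This yields $|S^{v^*}_k(t)| \geq |F(v^*)|\bigl(n^{v^*}_k(t) - 2\gamma\bigr)$; that is, the leader's effective sample size for arm $k$ is essentially $|F(v^*)|$ times its individual count, up to an $O(\gamma M)$ correction that is constant in $T$. This is the step I expect to be the main obstacle: making the delay bookkeeping rigorous, since followers copy actions with a lag and their observations return with a further lag, so one must argue that both offsets are lower-order and do not perturb the leading $\ln T$ term.

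Finally I would run the standard optimistic UCB argument for $v^*$ using Assumption~\ref{ass:robust_mean_estimator} with $\delta = t^{-2}$, so that $\log(\delta^{-1}) = 2\ln t$ matches the algorithm's $\text{UCB}$ expression. On the good event that $|\hat\mu^{(v^*)}_k - \mu_k| \leq \text{UCB}^{(v^*)}_k(t)$ for the relevant arms, selecting a suboptimal $k$ at round $t$ forces $2\,\text{UCB}^{(v^*)}_k(t) \geq \Delta_k$, which upon substituting the form of the bound and the sample-size relation gives $|F(v^*)|\,n^{v^*}_k(t) \lesssim \rho^{1/\varepsilon}\Delta_k^{-(1+\varepsilon)/\varepsilon}\ln t$. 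The concentration-failure events contribute only $\sum_t K t^{-2} = O(K)$ to the regret and are absorbed. Dividing by $|F(v^*)|$, multiplying by $\Delta_k$, and using $\Delta_k\cdot\Delta_k^{-(1+\varepsilon)/\varepsilon} = \Delta_k^{-1/\varepsilon}$, then summing over suboptimal arms and bounding $\sum_{k:\Delta_k>0}\Delta_k^{-1/\varepsilon}\leq K\Delta_{\min}^{-1/\varepsilon}$ delivers $R_v(T) = O\bigl(K\ln T/(|F(v^*)|\Delta_{\min}^{1/\varepsilon})\bigr)$. As a consistency check, summing $R_v(T)$ over all $v$ weights each cluster by $|F(v^*)|$, cancelling the $|F(v^*)|^{-1}$ factor, and leaves one term per leader; since there are $\alpha(\mathcal G_\gamma)$ leaders this recovers the group bound of the preceding theorem.
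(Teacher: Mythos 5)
Your proposal follows essentially the same route as the paper's proof: reduce the follower's regret to the leader's via $R_v(T) \leq R_{v^*}(T) + \gamma$, relate the leader's pooled sample count $|S^{v^*}_k(t)|$ to $|F(v^*)|$ times its own pull count up to a constant-in-$T$ delay correction, and then run the standard three-event robust-UCB argument so that the $\ln T$ term is divided by the cluster size, finishing with $\sum_{k:\Delta_k>0}\Delta_k^{-1/\varepsilon} \leq K\Delta_{\min}^{-1/\varepsilon}$. The minor differences (your $2\gamma$ lag bookkeeping versus the paper's $\sum_f d(m,f)$ term, and the exact confidence-level/union-bound constants) are at the same level of rigor as the paper's own treatment and do not change the result.
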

We see that, intuitively, even for agents that themselves are not well-connected, as long as they are connected to a well-connected leader (with large $|F(v^*)|$), the individual regret will be low. By this result, we select the weight assigned to any agent $v$ as its degree in $\mathcal G_\gamma$, since, asymptotically performance depends on $(|F(v^*)|^{-1})$. A few additional remarks can be made, inspired by~\citet{bar2019individual}.
\begin{remark}
The average regret from Algorithm~\ref{alg:ftl_mp_ucb} is $O((\alpha(\mathcal G_\gamma)/|V|)K\ln T)$, i.e. optimal when $\gamma = \text{diam}(\mathcal G)$. When $\gamma = \sqrt{K}$, Algorithm~\ref{alg:ftl_mp_ucb} can obtain a per-agent regret of $O(\Delta_*^{-1/\varepsilon}\sqrt{K}\ln T)$. This can be shown following the procedure in~\citet{bar2019individual}, by noticing that when $\mathcal G$ is connected, $\alpha(\mathcal G_\gamma) \leq \ceil*{2|V|/(\gamma + 2)}$. Also note that we need only $\sqrt{K}$ leaders at most to obtain this regret. When $\gamma = \text{diam}(\mathcal G)$, then, only 1 arbitrarily chosen leader can deliver optimal regret, regardless of its position in $\mathcal G$.
\end{remark}
\subsection{Additional Optimizations}
\textbf{$O(K)$ Per-round Communication}. We now demonstrate that communicating additional information beyond just action-reward pairs can significantly improve performance, and obtain optimal regret \textit{without knowledge of $\mathcal G$}. In this case, the message $\bm q_v(t)$ is augmented as follows.
\begin{equation}
\resizebox{0.7\linewidth}{!} 
{$\bm q_v(t) = \left\langle v, t, A_{v, t}, X_{v, t}, \hat{\bm \mu}^{(v)}(t), \bm N^v(t) \right\rangle.$}
\end{equation}
Where $\hat{\bm \mu}^{(v)}(t) = (\hat\mu^{(v)}_k(t))_{k \in [K]}$ are the robust mean estimates used by agent $m$ to make decisions at time $t$, and $\bm N^v(t) = (|S^v_k(t)|)_{k \in [K]} $ is the vector containing the number of reward samples possessed by agent $v$ until time $t$. Each agent $v$ also maintains a set $W$ of the most recent $(\hat{\bm{\mu}}^{(v')}(t), \bm N^{v'}(t))$ for each $v' \in \mathcal N_\gamma(v) \cup \{v\}$, which they update with each message received from agent $v'$. At any instant, the agent chooses, for each arm $k$, the corresponding $\hat\mu_k^{v^*}(t)$ and $N^{v^*}(t)$ in $W$ with the largest $N^{v^*}(t)$ (and ``tightest'' UCB) to construct its upper confidence bound. The full algorithm (KMP-UCB) is described in the appendix.
\begin{figure*}[th]
\centering
  \includegraphics[width=\linewidth]{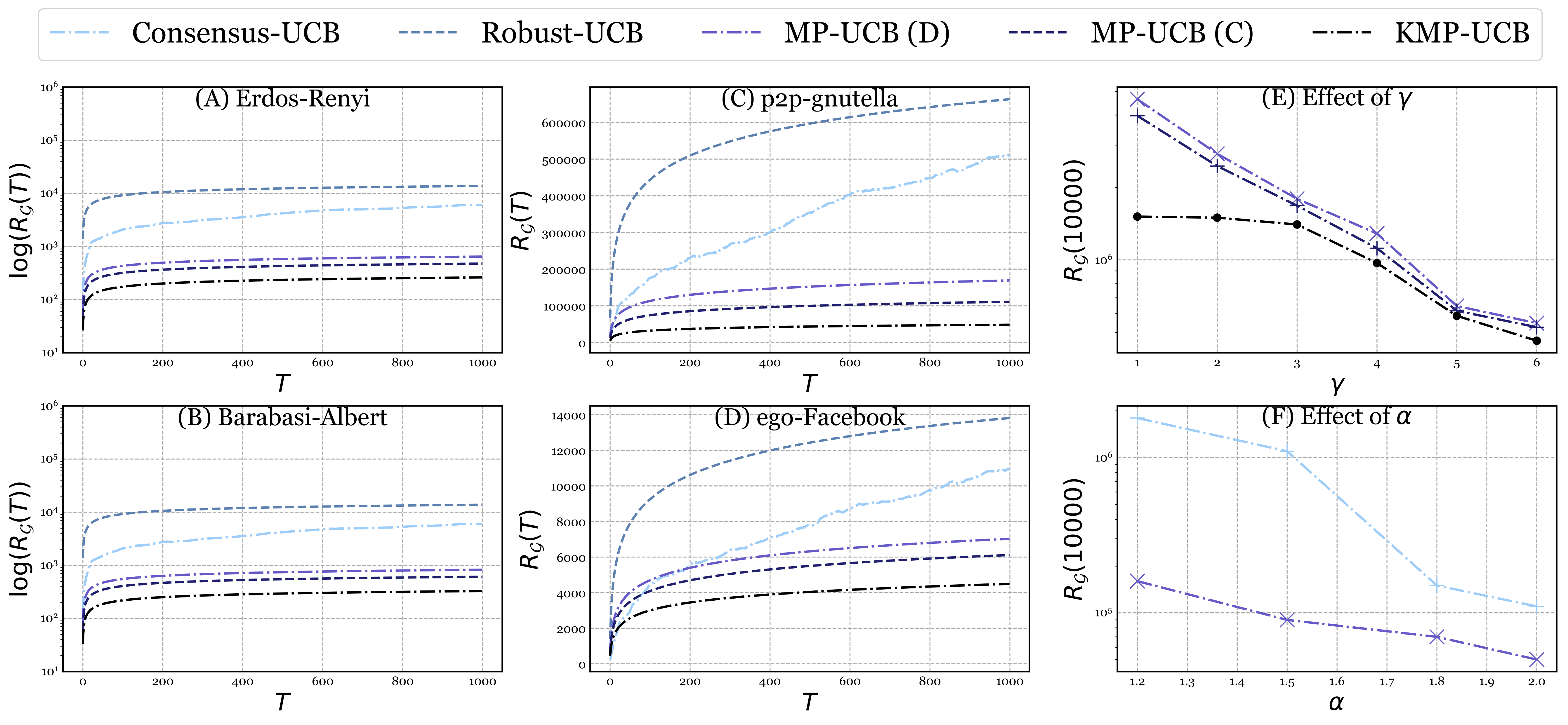}
  \caption{Experimental benchmarks, where each experiment is averaged over 100 trials. Figures (A) and (B) compare performance on samples of random graphs; (C) and (D) compare performance on two classes of real-world networks, and (E) and (F) are ablations.}
  \label{fig:main}
\end{figure*}
\begin{theorem}
\textsc{KMP-UCB} obtains group regret $R_{\mathcal G}(T)$ of  $O(\alpha(\mathcal G_\gamma)K\Delta^{-1/\varepsilon}\ln T)$ over any connected graph $\mathcal G$.
\end{theorem}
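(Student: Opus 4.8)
The plan is to bound, for each suboptimal arm $k$, the total number of pulls $n_k(T) = \sum_{v\in V} n^v_k(T)$ and then assemble the regret as $R_{\mathcal G}(T) = \sum_{k:\Delta_k>0}\Delta_k\, n_k(T)$. First I would set up the standard high-probability ``clean'' event: invoking Assumption~\ref{ass:robust_mean_estimator} with confidence $\delta = 1/t^2$, and union-bounding over the (at most $M$) candidate neighbour estimates an agent may select at a round, every robust mean estimate actually \emph{used} by any agent concentrates within its stated confidence width. The complementary event is summable in $t$ (it scales like $\sum_t 1/t^2$) and so enters the final bound only as an $O(1)$-in-$T$ additive remainder. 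Under the clean event, the usual UCB comparison shows that an agent $v$ picks a suboptimal arm $k$ at time $t$ only when the confidence width of the estimate it uses for $k$ exceeds $\Delta_k/2$; solving that width inequality against Assumption~\ref{ass:robust_mean_estimator} gives that $v$ pulls $k$ only while its \emph{effective} sample count satisfies $N^{v^*}_k(t) \le \tau_k$, where $\tau_k = O\!\left(\Delta_k^{-(1+\varepsilon)/\varepsilon}\ln T\right)$ and hence $\Delta_k\tau_k = O(\Delta_k^{-1/\varepsilon}\ln T)$.

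The crux of the improvement from the clique-cover quantity $\bar\chi(\mathcal G_\gamma)$ to the independence number $\alpha(\mathcal G_\gamma)$ lies in how $N^{v^*}_k$ is formed. Because each agent pools the raw samples it forwards within distance $\gamma$, the pooled count of any agent $w$ equals the number of pulls of $k$ inside $\overline{\mathcal N}_\gamma(w) := \mathcal N_\gamma(w)\cup\{w\}$, up to a delay of at most $\gamma$. The $O(K)$-sized messages additionally broadcast these aggregated counts, and each agent uses the \emph{largest} count available to it. Thus for any $w \in \overline{\mathcal N}_\gamma(v)$ we have $N^{v^*}_k(t) \ge N^{w}_k(t-\gamma)$, i.e. $v$ automatically inherits the most-informed aggregate in its neighbourhood. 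This is precisely what the raw-sample protocol of Algorithm~\ref{alg:mp_ucb_unknown_G} cannot achieve without a precomputed clique cover, and it is what lets us dispense with knowledge of $\mathcal G$.

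The main step, and the part I expect to be hardest, is the combinatorial charging that produces $\alpha(\mathcal G_\gamma)$. I would fix a maximum independent set $I$ of $\mathcal G_\gamma$, so that $|I| = \alpha(\mathcal G_\gamma)$ and, being maximal, $I$ dominates $\mathcal G_\gamma$; assign to each agent $v$ a leader $\ell(v) \in I$ with $d(v,\ell(v)) \le \gamma$. For a fixed leader $i \in I$, every $v \in \overline{\mathcal N}_\gamma(i)$ sees $i$'s aggregated count, so by the previous paragraph $v$ stops pulling $k$ as soon as $i$'s neighbourhood aggregate reaches $\tau_k$. Consequently the total pulls of $k$ inside $\overline{\mathcal N}_\gamma(i)$ obey $\sum_{v\in\overline{\mathcal N}_\gamma(i)} n^v_k(T) \le \tau_k + O(M\gamma)$, the second term accounting for the at-most-$\gamma$ rounds the ``stop'' signal needs to propagate. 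Since the assignment sends each agent to exactly one leader and $\{v:\ell(v)=i\}\subseteq\overline{\mathcal N}_\gamma(i)$, summing this subset bound over $i\in I$ yields $n_k(T) \le \alpha(\mathcal G_\gamma)\bigl(\tau_k + O(M\gamma)\bigr)$. The delicate points are making the ``everyone in $\overline{\mathcal N}_\gamma(i)$ halts within $\gamma$'' statement precise against variable communication delays, and checking that the data-dependent choice of which neighbour's estimate to borrow does not break the clean-event union bound (this costs at most an additive $\ln M$ inside $\tau_k$, a constant in $T$).

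Finally I would assemble the regret as $R_{\mathcal G}(T) = \sum_{k:\Delta_k>0}\Delta_k n_k(T) \le \alpha(\mathcal G_\gamma)\sum_{k:\Delta_k>0}\bigl(\Delta_k\tau_k + O(M\gamma\Delta_k)\bigr)$. The leading term is $O\!\left(\alpha(\mathcal G_\gamma)\sum_{k:\Delta_k>0}\Delta_k^{-1/\varepsilon}\ln T\right) = O(\alpha(\mathcal G_\gamma)K\Delta^{-1/\varepsilon}\ln T)$, while the delay and clean-event remainders carry no $\ln T$ factor and are therefore lower order, establishing the claim for any connected $\mathcal G$. I would emphasise that the argument never uses $\mathcal G$ algorithmically: the independent set $I$ and the assignment $\ell(\cdot)$ appear only in the analysis, so the guarantee holds without knowledge of the graph.
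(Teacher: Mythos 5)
Your proposal is correct and follows essentially the same route as the paper's proof: both fix a maximum (dominating) independent set of $\mathcal G_\gamma$, use the fact that every agent adopts the largest available neighbourhood count so it inherits its leader's aggregate up to a delay of at most $\gamma$, run the standard UCB three-event analysis against Assumption~\ref{ass:robust_mean_estimator} to get the $O(\Delta_k^{-(1+\varepsilon)/\varepsilon}\ln T)$ pull threshold per neighbourhood, and sum over the independent set to obtain the $\alpha(\mathcal G_\gamma)$ factor. Your explicit union bound over the data-dependent choice of which neighbour's estimate is borrowed is a point the paper glosses over, but it does not change the argument's structure or conclusion.
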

\textit{Proof (sketch)}. We first note that there will be an independent set of agents $V' \in V$ that has, at any given $t$, the largest set of observations within their neighborhoods. Since at trial $t+\text{diameter}(\mathcal G_\gamma)$, any other agent will either use the confidence estimates of $v \in V'$, or will have better estimates (from more samples). This provides us a technique to lower bound the number of times the entire group of agents $V$ will pull an arm at any time $t$ in terms of the pulls of $v \in V$, and then construct a UCB for each arm from it. We then proceed by the standard UCB technique for a single-agent, and use concentration of the robust mean to derive regret for each $v \in V'$. Finally, summing over $v$ gives us the desired result.

Contrasted to the regret bound of $O(\sqrt{|V|\alpha(\mathcal G)TK\ln K})$ obtained by~\citet{cesa2019delay} for the nonstochastic case (where communication is also $O(K)$ per agent), our algorithm obtains lower group regret in the stochastic case. Additionally, this implies a $O(\ln |V|)$ improvement over the previous bound in the stochastic case~\cite{martinez2018decentralized}.

\textbf{Online Estimation of Trimmed Mean}. The trimmed mean estimator requires selecting a sample $X_i$ at time $t$ only if $|X_i| \leq (2ui\ln t)^{1/(1+\varepsilon)}$ (Definition \ref{def:trimmed_mean}). This implies that the $i^{th}$ reward sample an agent has will be selected at the smallest time $t$ such that $(|X_i|^{1+\varepsilon}/(i)) \leq 2u\ln t$. When $T$ is knowns, we can utilize a binary search tree to make an update to the robust mean $O(\ln t)$ instead of $O(t)$ at time $t$. We outline this procedure in Algorithm~\ref{alg:online_mean}. 
\begin{algorithm}[t]
\caption{\textsc{Online Trimmed Mean Estimator}}
\small
\label{alg:online_mean}
\begin{algorithmic}[1] %[1] enables line numbers
\STATE \textbf{Input}: $u, T$.
\STATE Create dictionary $D$ of size $T$, where $D(t) = \phi \ \forall t \in [T]$.
\STATE Create BST $B$ with entries $((2u\ln t)^{1/(1+\varepsilon)})_{t \in [T]}$.
\STATE $\hat S_O \leftarrow 0$, $n \leftarrow 0$
\FOR{$t \in [T]$}
\STATE $O_t \leftarrow \textsc{Observations}(t)$.
\FOR{$x_t \in O_t$}
\STATE $n \leftarrow n + 1$
\STATE $i_t \leftarrow \max\left(t, \textsc{Search}(B, (|x_t|^{1+\varepsilon}/n))\right)$.
\STATE $D(i_t) \leftarrow D(i_t) \cup \{x_t\}$.
\ENDFOR
\FOR{$x \in D(t)$}
\STATE $\hat S_O \leftarrow \hat S_O + x$.
\ENDFOR
\STATE $\hat\mu_O(t) \leftarrow \hat S_O/n$.
\ENDFOR
\end{algorithmic}
\end{algorithm}

Algorithm~\ref{alg:online_mean} assumes that for any $t$, a new set of observations $O_t$ is available, which it incorporates into the robust mean with $O(\ln t)$ per sample (instead of typically recomputing the mean for each $t$). The complexity stems from the binary search, assuming the dictionary lookup is $O(1)$.
\section{Experiments}
Our primary contributions are in leveraging cooperation to accelerate overall decision-making, and the most interesting aspects of this study pertain to how graph structures, scalability, heavy tails and decentralized vs. centralized estimation affect the group regret. To this end, we analyse these aspects in our experimental setup, and relegate other comparisons ($\Delta_k$, number of arms, etc.) to the appendix.

\textit{Reward Distributions}. We conduct experiments using $\alpha$-stable densities~\cite{levy1925calcul}, that admit finite moments only of order $< \alpha \leq 2$, and we consider $\alpha$-stable densities where $\alpha \geq 1$. The $\alpha$-stable family includes several widely used distributions, such as Gaussian ($\alpha=2$, only light-tailed density), L\'evy ($\alpha=0.5$) and Cauchy ($\alpha$=1). The primary advantage of this density is that $\alpha$ can be adjusted to alter the heaviness of the reward distribution ($\alpha > 1$).

\textit{Graph Partitioning}. For Algorithm~\ref{alg:ftl_mp_ucb}, we require computing the maximal weighted independent set of $\mathcal G$. This problem is NP-Hard for arbitrary $\mathcal G$, and difficult to approximate. We use the approximate algorithm presented in~\cite{lucas2014ising} that uses the QUBO~\cite{glover2018tutorial} solver.

\textbf{Experiment 1: Random Graphs}. We set $K=5$, $\alpha=1.9$ for the standard $\alpha$-stable density, and sample arm means randomly from the interval $[0, 1]$ for each arm every experiment. We then construct random graphs on 200 agents from the Erdos-Renyi (ER) ($p=0.7$) and Barabasi-Albert (BA) ($m=5$) random graph families, and compare all three of our algorithms (using the trimmed mean estimator, with $\gamma = \text{diam}(\mathcal G)/2$) with the \textsc{Consensus-UCB} and single-agent \textsc{Robust-UCB}\cite{bubeck2013bandits} algorithms. We compare the group regret $R_\mathcal G(T)$ vs. $T$, averaged over 100 random graphs and bandit instances. The results for Erdos-Renyi graphs (Figure~(\ref{fig:main}A)) and Barabasi-Albert graphs (Figure~(\ref{fig:main}B)) demonstrate that while our algorithms outperform the baselines (in the order dictated by regret bounds), the gain is larger for the former. We attribute this to the network connectivity, i.e., since Barabasi-Albert graphs have ``hubs'', the clique number $\bar\chi(\mathcal G)$ for these graphs is larger.

\textbf{Experiment 2: Real-World Networks}. We select the \textsf{p2p-Gnutella04} (Figure~\ref{fig:main}C) and \textsf{ego-Facebook} (Figure~\ref{fig:main}D) network structures from the SNAP repository~\cite{leskovec2016snap} to experiment with in the real-world setting. For both experiments, we sample subgraphs of 500 nodes, and use these subgraphs. A common misconception is to compare our distributed \textit{multi-agent} problem with the \textit{social network clustering} problem~\cite{gentile2014online, li2016collaborative}, which is more scalable since it is \textit{single-agent} (i.e., one action chosen per trial). These networks are chosen because they represent two diverse situations cooperative decision-making can be applicable in -- social networks and peer-to-peer communication networks. In both cases, we observe a similar trend. The gains are larger in the p2p-Gnutella case since ego-Facebook is dense (with fewer nodes), hence \textsc{Consensus-UCB} performs better as well.

\textbf{Experiment 3: Effect of $\gamma$ and $\alpha$}. As ablation experiments, we investigate the effect of communication density $\gamma$ (Figure~\ref{fig:main}E) and tail parameter $\alpha$ (Figure~\ref{fig:main}F) on the group regret.  For both experiments, we construct random graphs on 200 agents from the Erdos-Renyi ($p=0.7$) family. We compare the group regret at $T=10000$ trials as a function of $\gamma$, and $\alpha$, respectively. First, we observe that communication density has a significant effect on all but the \textsc{KMT-UCB} algorithms. Next, we see that \textsc{Consensus-UCB} progressively gets worse as the tail gets heavier (i.e, $\alpha \rightarrow 1^+$).

\section{Conclusion}
In this paper, we presented a treatment of cooperative bandit estimation under heavy tails. We provided the first asymptotic lower bound on cooperative estimation that holds for arbitrary graphs $\mathcal G$ and a wide variety of communication protocols. We present the first robust cooperative estimation algorithms that can all provide optimal regret, even without knowledge of $\mathcal G$. We support our bounds via experiments over random graphs as well. However, our work leaves several open questions in \textit{robust} multi-agent decision-making. 

First, we note that our best algorithm provides an asymptotic group regret of $O(\alpha(\mathcal G_\gamma)K\ln T)$, which is similar to the results obtained in the non-stochastic case~\cite{cesa2019delay, martinez2018decentralized}. The $\alpha(\mathcal G_\gamma)$ overhead can be attributed to the fact that information does not flow completely through the network (cf. Assumption~\ref{ass:comm_protocol}a). This leads us to believe that tighter lower bounds can be obtained by taking this aspect of the communication protocol into account. Moreover, in realistic settings, messages incur stochasticity, i.e. they can be dropped at random, or propagate with varying delay $\gamma$. This line of work has been studied in the single-agent setting~\cite{pike2017bandits, vernade2018contextual}, however the problem becomes more challenging when multiple agents interact simultaneously. 

The extension of our setting to the contextual case is not trivial. Robust single-agent estimation for linear bandits is a difficult problem from both the algorithmic and computational point of view, since statistically optimal multivariate estimators require exponential time to compute~\cite{lugosi2019robust}. Furthermore, delay creates a $\sqrt{\gamma}$ scaling of the regret~\cite{neu2010online}, which is amplified in the multi-agent setting. Addressing such scenarios is a difficult but crucial next step in this line of research.
\newpage
\onecolumn
\appendix 
\section{Appendix}
\subsection{Lower Bound}
The lower bound proceeds in a manner similar to the lower bound achieved in the single-agent case. We first state a few intermediary lemmas and restate the assumptions on the communication protocol.
 \begin{assumption}[Rate Assumption]
Let $X_1, ..., X_n$ be $n$ samples of an $\varepsilon$-heavy tailed random variable, where $\varepsilon \in (0, 1]$, and $\mathbb E[X] = \mu$. For positive constants $c, \rho$ suppose that there exists a robust estimator $\hat\mu(\delta, n)$ such that, with probability at least $1-\delta$,
\begin{equation*}
    |\hat\mu(\delta, n) - \mu| \leq 2 \rho^{\frac{1}{1+\varepsilon}}\left(\frac{c\log(\delta^{-1})}{n}\right)^{\frac{\varepsilon}{1+\varepsilon}}.
\end{equation*}
\label{ass:robust_mean_estimator}
\end{assumption}
 \begin{assumption}[Communication Protocol]
The communication protocol considered follows:
\begin{enumerate}
    \item Any agent $m$ is capable of sending a message $\bm q_{m}(t)$ to any other agent $m' \in [M]$, which is earliest received at time $t+\min(0, d(m, m') - 1)$.
    \item $\bm q_{m}(t)$ is a function of the action-reward pairs of agent $m$, i.e. $\bm q_{m}(t) = {\bm F}_t(A_{m, 1}, X_{m, 1}, ..., A_{m, t}, X_{m, t})$ for any deterministic, bijective and differentiable set of functions ${\bm F}_t = (f_{i, t})_{i \in [L]}, f_{i,t}: \mathbb R^{2t} \rightarrow \mathbb R$.
    \item  ${\bm F}_t$ satisfies $|\text{det} \left({\bm J}_t\right)| = \Lambda(m, t).$ Here, ${\bm J}_t(\cdot)$ is the Jacobian of ${\bm F}_t$, and $\Lambda$ is only a function of $m$ and $t$.
\end{enumerate}
\label{ass:comm_protocol}
\end{assumption}

\begin{theorem}[Carath\'eodory's Extension Theorem]
Let $\left(\Omega_1, \mathcal F_1\right), ..., \left(\Omega_n, \mathcal F_n\right)$ be measurable spaces and $\bar{\mu} : \mathcal F_1 \times ... \times \mathcal F_n \rightarrow [0,1]$ be a function such that (a) $\bar\mu\left(\Omega_1 \times ... \times \Omega_n\right) = 1$, and (b) $\bar\mu\left(\cup_{k=1}^\infty A_k\right) = \sum_{k=1}^\infty \bar\mu(A_k)$ for all sequences of disjoint sets with $A_k \in \mathcal F_1 \times ... \times \mathcal F_n$. Let $\Omega = \Omega_1 \times ... \times \Omega_n$ and $\mathcal F =\sigma(\mathcal F_1 \times ... \times \mathcal F_n)$.

Then there exists a unique probability measure $\mu$ on $(\Omega, \mathcal F)$ such that $\mu$ agrees with $\bar\mu$ on $\mathcal F_1 \times ... \times \mathcal F_n$.
\label{thm:caratheodory}
\end{theorem}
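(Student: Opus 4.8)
The plan is to recognize this as the classical Carathéodory extension procedure applied to the semiring of measurable rectangles $\mathcal R = \{A_1 \times \cdots \times A_n : A_i \in \mathcal F_i\}$. One first checks that $\mathcal R$ has the right algebraic structure: it is a $\pi$-system (the intersection of two rectangles is a rectangle), and in fact a semiring, since the set-difference of two rectangles decomposes into a finite disjoint union of rectangles by splitting coordinate-wise. Taking all but finitely many sets empty in hypothesis (b) shows $\bar\mu(\emptyset) = 0$ and yields finite additivity, so hypotheses (a) and (b) together say precisely that $\bar\mu$ is a premeasure of total mass $1$ on $\mathcal R$ with $\sigma(\mathcal R) = \mathcal F$. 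The theorem is then the statement that such a premeasure extends uniquely to a probability measure.

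For existence, I would define the outer measure
\begin{equation*}
\mu^*(E) = \inf\left\{\sum_{k=1}^\infty \bar\mu(R_k) : E \subseteq \bigcup_{k=1}^\infty R_k,\ R_k \in \mathcal R\right\}, \qquad E \subseteq \Omega,
\end{equation*}
and verify the three outer-measure axioms ($\mu^*(\emptyset)=0$, monotonicity, countable subadditivity), all of which are formal consequences of the infimum-over-covers definition. Next I would apply the Carathéodory criterion, calling $E$ measurable when $\mu^*(A) = \mu^*(A \cap E) + \mu^*(A \setminus E)$ for every $A \subseteq \Omega$; the family $\mathcal M$ of measurable sets is a $\sigma$-algebra on which $\mu^*$ restricts to a complete measure. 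The two substantive steps are (i) every rectangle lies in $\mathcal M$, whence $\mathcal F = \sigma(\mathcal R) \subseteq \mathcal M$ and $\mu := \mu^*|_{\mathcal F}$ is a measure, and (ii) $\mu^*$ agrees with $\bar\mu$ on $\mathcal R$. In (ii) the bound $\mu^*(R) \leq \bar\mu(R)$ is trivial (cover $R$ by itself), while the reverse $\bar\mu(R) \leq \sum_k \bar\mu(R_k)$ for any cover is exactly where countable additivity (b) is used: one disjointifies the cover inside $\mathcal R$ and sums. Finally $\mu(\Omega) = \bar\mu(\Omega) = 1$ by (a), so $\mu$ is a probability measure.

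For uniqueness I would invoke Dynkin's $\pi$-$\lambda$ theorem. If $\mu$ and $\mu'$ are two probability measures on $\mathcal F$ both agreeing with $\bar\mu$ on $\mathcal R$, then $\{E \in \mathcal F : \mu(E) = \mu'(E)\}$ is a $\lambda$-system containing the $\pi$-system $\mathcal R$, hence contains $\sigma(\mathcal R) = \mathcal F$; the finiteness $\mu(\Omega) = 1$ is what lets this argument close without any $\sigma$-finiteness bookkeeping.

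The main obstacle I expect is the pair of rectangle-level claims, namely step (i) and the reverse inequality of step (ii). Both go beyond formal manipulation of infima and genuinely require the semiring structure together with the countable additivity of $\bar\mu$: verifying the Carathéodory splitting identity for a rectangle $R$ means taking an arbitrary cover of a test set $A$, intersecting each covering rectangle with $R$ and with $R^c$ (the latter being a finite disjoint union of rectangles), and recombining the resulting sums using additivity. This bookkeeping, rather than any deep idea, is the crux of the argument.
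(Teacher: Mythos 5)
Your proof is correct and is the canonical argument: the measurable rectangles form a semiring, hypotheses (a) and (b) make $\bar\mu$ a premeasure of total mass one on that semiring, the outer-measure construction with the Carath\'eodory measurability criterion gives existence, and Dynkin's $\pi$-$\lambda$ theorem gives uniqueness (finiteness of probability measures is exactly what lets you skip the $\sigma$-finiteness bookkeeping, as you note). Be aware, though, that the paper itself contains no proof of this statement: it is quoted as the classical Carath\'eodory extension theorem and used purely as a black box, to certify that the joint law $\mathbb P_{\nu\Pi}$ constructed from the density $p_{\nu\Pi}$ in the multiagent divergence decomposition actually exists as a probability measure on the product space. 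So there is no paper argument to diverge from; you have supplied the standard textbook proof of a cited result. Your identification of the two substantive steps is also accurate --- Carath\'eodory measurability of rectangles and the inequality $\bar\mu(R) \leq \sum_k \bar\mu(R_k)$ for countable covers are the only places where the semiring decomposition and countable additivity of $\bar\mu$ do real work (the latter via disjointification of the cover inside $\mathcal R$); everything else is formal manipulation of infima.
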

\begin{theorem}[Multiagent Divergence Decomposition]
Let $\mathcal E = \prod_{k \in [K]} \mathcal M_k$ be a structured family of $K$-armed bandit problems and $\nu = (\nu_k)_{k \in [K]}, \nu' = (\nu'_k)_{k \in [K]} \in \mathcal E$ be two bandit problem instances. Then, for any decentralized policy $\Pi_t = (\pi_{m, t})_{m \in [M], t \in [T]}$ that uses a communication protocol satisfying Assumptions~\ref{ass:comm_protocol}, the following is true.
\begin{equation*}
    \mathbb E_{\nu\Pi}\left[\log\frac{d\mathbb P_{\nu\Pi}}{d\mathbb P_{\nu'\Pi}}\left(A_{1, 1}, X_{1, 1}, ..., A_{M, T}, X_{M, T}\right)\right] = \sum_{k=1}^K \mathbb E_{\nu\Pi}\left[N_k(T)\right]\mathbb D_{\mathsf {KL}}(\nu_{A_{k}}, \nu'_{A_{k}}).
\end{equation*}
Here, $\mathbb P_{\nu\Pi}$ and $\mathbb P_{\nu'\Pi}$ denote the product measures arising from the interaction of $\nu$ and $\nu'$ with $\Pi$, and $N_k(T)$ denotes the total number of pulls of arm $k$ across all $M$ agents at time $T$.
\label{thm:divergence_decomposition}
\end{theorem}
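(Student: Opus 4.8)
The plan is to mirror the classical single-agent divergence decomposition while verifying that the delayed message-passing dynamics contribute no environment-dependent factors to the likelihood ratio. First I would invoke Theorem~\ref{thm:caratheodory} to construct the joint measure $\mathbb P_{\nu\Pi}$ on the trajectory space $\prod_{m \in [M], t \in [T]}(\mathcal A \times \mathbb R)$ from the consistent family of finite-dimensional conditionals: at each coordinate $(m,t)$, conditioned on the global history, agent $m$ draws $A_{m,t}$ according to its (possibly randomized) policy $\pi_{m,t}$ and then receives $X_{m,t}\sim\nu_{A_{m,t}}$. Carrying out the same construction with $\nu'$ in place of $\nu$ yields $\mathbb P_{\nu'\Pi}$, and since the two measures differ only through the reward kernels they are mutually absolutely continuous whenever each $\nu_k \ll \nu'_k$, so the Radon--Nikodym derivative $d\mathbb P_{\nu\Pi}/d\mathbb P_{\nu'\Pi}$ is well defined on the $2MT$ raw action-reward coordinates.

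Next I would factor the likelihood ratio over agents and rounds. Ordering the coordinates by $(m,t)$ and applying the chain rule, the conditional law of $(A_{m,t},X_{m,t})$ given the history factors into a policy term for $A_{m,t}$ times a reward term $\nu_{A_{m,t}}(X_{m,t})$. The crucial claim is that the policy term is \emph{identical} under $\nu$ and $\nu'$. Although $\pi_{m,t}$ depends on messages $\bm q_{m'}(t')$ received from other agents, Assumption~\ref{ass:comm_protocol}(2) makes each message a deterministic, bijective, differentiable function of that sender's raw action-reward stream, so the messages inject no fresh randomness and carry no information about the reward distribution beyond the raw data already present in the global history. I would make this precise by changing variables from the observed messages (on which the agents actually act) back to the underlying raw rewards; Assumption~\ref{ass:comm_protocol}(3), which fixes $|\det \bm J_t| = \Lambda(m,t)$ as a function of $m$ and $t$ only, guarantees that the Jacobian factors introduced by this change of variables are the same under both environments and therefore cancel in the ratio. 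This is the main obstacle: the messages couple the agents' action choices across space and time in an intricate way, and the entire content of the communication assumptions is to certify that every such coupling is environment-independent. With the policy and communication terms cancelled, the ratio collapses to
\[
\log\frac{d\mathbb P_{\nu\Pi}}{d\mathbb P_{\nu'\Pi}} = \sum_{m=1}^M\sum_{t=1}^T \log\frac{d\nu_{A_{m,t}}}{d\nu'_{A_{m,t}}}(X_{m,t}).
\]

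Finally I would take $\mathbb E_{\nu\Pi}[\cdot]$ and evaluate term by term using the tower property. Conditioning on the history up to coordinate $(m,t)$ and on the event $\{A_{m,t}=k\}$, the inner expectation of $\log\bigl(d\nu_k/d\nu'_k\bigr)(X_{m,t})$ equals exactly $\mathbb D_{\mathsf{KL}}(\nu_k,\nu'_k)$, since under $\nu$ the reward $X_{m,t}$ is drawn from $\nu_k$ independently of the history given the action. Each pull of arm $k$ by any agent thus contributes a single copy of $\mathbb D_{\mathsf{KL}}(\nu_k,\nu'_k)$, and the total number of such pulls across all $M$ agents up to time $T$ is precisely $N_k(T)$ by definition. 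Regrouping the double sum over $(m,t)$ by arm and pulling the deterministic divergences out of the expectation then gives $\sum_{k=1}^K \mathbb E_{\nu\Pi}[N_k(T)]\,\mathbb D_{\mathsf{KL}}(\nu_k,\nu'_k)$, completing the argument.
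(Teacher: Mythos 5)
Your proposal is correct and takes essentially the same route as the paper's proof: construct the joint measure via Carath\'eodory's extension theorem, factor the likelihood ratio so that the environment-independent policy kernels and the message Jacobian factors $\Lambda(m,t)$ cancel (using parts 2 and 3 of Assumption~\ref{ass:comm_protocol}), and then apply the tower property and regroup the sum over $(m,t)$ by arm to obtain $\sum_{k} \mathbb E_{\nu\Pi}[N_k(T)]\,\mathbb D_{\mathsf{KL}}(\nu_k,\nu'_k)$. The only cosmetic difference is that the paper writes out the joint density $p_{\nu\Pi}$ explicitly before cancelling terms, whereas you argue the cancellation at the level of conditional factors, but the substance is identical.
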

\begin{proof}
Consider an agent $m$, and let the $\gamma$-neighborhood of this agent be given by $\mathcal N_\gamma(m)$, such that $N = \left|\mathcal N_\gamma(m) \right|$. At any instant $t$, let the agents actions be denoted by $A_{m, t}$, and the associated outcome variable be $X_{m, t}$. We denote the set of action-reward pairs for the agent at time $t$ as $H_t^m = (A_{m, 1}, X_{m, 1}, ..., A_{m, t}, X_{m, t})$.

At any instant $t$, the agent also receives messages from neighboring agents, delayed by their distance in the graph $\mathcal G$. By our third assumption we can write each message $Z_{m, m'}(t)$ sent fom agent $m$ to agent $m'$ at time $t$ as the following, for some deterministic, bijective differentiable function(s) $(f_i)_{i \in [L]}$ where $L$ is the length of the message.
\begin{equation}
    Z_{m, m'}(t) = f_i\left(A_{m, 1}, X_{m, 1}, ..., A_{m, t}, X_{m, t}\right)
\end{equation}
For each $t \in [T]$, let $\Omega_{t} = \left([K] \times \mathbb R\right)^{Mt} \subset \mathbb R^{2Mt}$ and $\mathcal F_{t} = \mathfrak B(\Omega_{T})$. As is the case with single-agent bandits, we can define coordinate projections that govern each of the random variables $A^m_t, X^m_t \ \forall t, m$ by creating an ordering of all elements of $H_t$.
\begin{align}
    A_{m, t}\left(a_{1, 1}, x_{1, 1}, ..., a_{M, T}, x_{M, T}\right) = a_{m, t} \\
    X_{m, t}\left(a_{1, 1}, x_{1, 1}, ..., a_{M, T}, x_{M, T}\right) = x_{m, t}
\end{align}
By our assumption on the nature of messages, we can express the density of any message $z_{m, m'}(t)$ as the following. Let $\bm F = (f_i)_{i \in [L]}$.
\begin{align}
    p(z_{m, m'}(t)) &= p(\bm F^{-1}\left(z_{m, m'}(t)\right)\left|\textsf{det}\left(\bm J(z_{m, m'}(t)\right)\right| \\
  &= p(\bm F^{-1}\left(\bm F(a_{m, 1}, x_{m, 1}, ..., a_{m, t}, x_{m, t}))\right)\left|\textsf{det}\left(\bm J(z_{m, m'}(t)\right)\right| \\
  &= p(a_{m, 1}, x_{m, 1}, ..., a_{m, t}, x_{m, t})\left|\textsf{det}\left(\bm J(z_{m, m'}(t)\right)\right| \\
  &= p(a_{m, 1}, x_{m, 1}, ..., a_{m, t}, x_{m, t})\Lambda(m, m', t)
\end{align}
This primarily implies that each message is completely specified by the corresponding inputs. With this probability space $(\Omega_{T}, \mathcal F_{T})$ we can then define a \textbf{decentralized policy} as a sequence $(\Pi_t)_{t=1}^T$, where $\Pi_t = (\pi_{m, t})_{m \in [M]}$ is a probability kernel from $(\Omega_{t-1}, \mathcal F_{t-1})$ to $([K]^M, 2^{[K]^M})$.

We now require a valid measure that connects $\Pi = (\Pi_t)_{t=1}^T$ and $\nu = (\nu_k)_{k \in [K]} \in \mathcal E$. The measure we will define will be similar to that of the canonical bandit model, however, we have a few key differences. First, we must note that individual elements $\pi_{m, t}$ of $\Pi_t$ factorize differently based on $\mathcal G$. Additionally, conditioned on $H^m_t$, $X_{m, t}$ follows the law $\nu_{A_{m, t}}$, i.e. it only depends on the corresponding arm pulled by the agent $m$. Therefore the conditions on the measure can be listed as follows.
\begin{enumerate}
    \item The conditional distribution of $A^m_t$ given $\cup_{ m \in [M]}\left(H^{m}_t\right)$ is $\pi_{m, t}\left(\cdot | \left(H^m_{t-1}\cup_{ m' \in \mathcal N_\gamma(m)}\left(H^{m'}_{t-d(m, m')-1}\right)\right)\right)$ almost surely. This condition is justified by the fact that each individual policy for an agent $m$ can only be dependent on information in the $\gamma$-neighborhood of the agent, and that information takes $d(m, m')$ steps to reach agent $m$ from any other agent $m'$.
    \item The conditional distribution of $X_{m, t}$ given $\bigcup_{ m \in [M]}\left(H^{m}_t\right)$ is $\nu_{A_{m,t}}$ almost surely. 
\end{enumerate}
Let $\lambda$ be a $\sigma$-finite measure on $(\mathbb R, \mathfrak B(\mathbb R))$ for which $\nu_k$ is absolutely continuous with respect to $\lambda$ for all $k \in [K]$. Let $p_k = \frac{d\nu_k}{d\lambda}$ be the Radon-Nikodym derivative of $\nu_k$ with respect to $\lambda$, and $\rho$ be the counting measure over $\mathfrak B(\mathbb R)$, We can define the density $p_{\nu\Pi} : \Omega \rightarrow \mathbb R$ with respect to the product measure $(\rho \times \mathbb R)^{MT}$ as the following.
\begin{multline}
p_{\nu\Pi}\left(a_{1, 1}, x_{1, 1}, ..., a_{M, T}, x_{M, T}\right) =\\ \prod_{m \in [M]} \pi_{m, T}\left(a_{m, T} | a_{1, 1}, x_{1, 1}, ..., a_{M, T-1}, x_{M, T-1}\right)p\left(a_{1, 1}, x_{1, 1}, ..., a_{M, T-1}, x_{M, T-1}\right)\nu_{a_{m, T}}(x_{m,T})
\end{multline}
\begin{multline}
= \left(\prod_{m \in [M]}\prod_{t \in [T]} \pi_{m, t}\left(a_{m, t} | \bigcup_{m' \in \mathcal N_\gamma(m) \cup \{m\}}\left\{a_{m', 1}, x_{m', 1}, ... a_{m', t-d(m, m')}, x_{m', t-d(m, m')}\right\}\right)\right) \times\\ \left(\prod_{m' \in [M]}\prod_{t \in [T]} \nu_{a_{m, t}}(x_{m, t})\right) \times \left(\prod_{m \in [M]}\prod_{m' \in \mathcal N_\gamma(m)}\prod_{t \in [T-d(m, m')]}\Lambda(m, m', t)\right)
\end{multline}
It can be easily shown that $p_{\nu\Pi}$ is a valid density, and satisfies the two properties listed earlier. By Theorem~\ref{thm:caratheodory}, we know that such a distribution exists. Let the corresponding measure be denoted by $\mathbb P_{\nu\Pi}$. We now prove a version of the canonical divergence decomposition in the presence of additional observations. 

In addition to $\nu$, let $\nu' = (\nu'_k)_{k \in [K]} \in \mathcal E$ be the reward distributions associated with a separate $k$-armed bandit problem, and $\mathbb P_{\nu'\pi}$ denote the joint measure for $\nu'$ under the same policy $\pi$. Assume that $\mathbb D_{\textsf{KL}}(\nu_k, \nu'_k) < \infty$, for all $k \in [K]$. We then have,
\begin{equation}
    \log\frac{d\mathbb P_{\nu\Pi}}{d\mathbb P_{\nu'\Pi}}\left(a_{1, 1}, x_{1, 1}, ..., a_{M, T}, x_{M, T}\right) =\sum_{m \in [M]}\sum_{t \in [T]}\log\left(\frac{\nu_{a_{m, t}}(x_{m, t})}{\nu'_{a_{m, t}}(x_{m, t})}\right)
\end{equation}
This follows from the chain rule of Radon-Nikodym derivatives and the fact that (a) the policy terms cancel out by the definitions of $p_{\nu\Pi}$ and $p_{\nu'\Pi}$, and (b) communication terms $\Lambda$ cancel out since they are independent of $\nu$. Taking expectations and replacing $H_T = \bigcup_{ m \in [M]}\left(H^{m}_T\right)$, we have,
\begin{align}
    \mathbb E_{\nu\Pi}\left[\log\frac{d\mathbb P_{\nu\Pi}}{d\mathbb P_{\nu'\Pi}}\left(H_T\right)\right] = \sum_{m \in [M]}\sum_{t \in [T]}\mathbb E_{\nu\pi}\left[\log\left(\frac{\nu_{A_{m, t}}(X_{m, t})}{\nu'_{A_{m,t}}(X_{m, t})}\right)\right] .
    \label{eqn:kl_divergence_rn}
\end{align}
Additionally, we also know that, for all $t \in [T], m \in [M]$,
\begin{align}
    \mathbb E_{\nu\Pi}\left[\log\left(\frac{\nu_{A_{m, t}}(X_{m, t})}{\nu'_{A_{m,t}}(X_{m, t})}\right)\right] = \mathbb E_{\nu\Pi}\left[\mathbb E_{\nu\Pi}\left[\log\left(\frac{\nu_{A_{m, t}}(X_{m, t})}{\nu'_{A_{m,t}}(X_{m, t})}\right)\right]\Bigg|A_{m, t}\right] = \mathbb E_{\nu\Pi}\left[\mathbb D_{\mathsf {KL}}(\nu_{A_{m, t}}, \nu'_{A_{m, t}}) \right].
\end{align}
Replacing the above identity in Equation~(\ref{eqn:kl_divergence_rn}), we have,
\begin{align}
    \mathbb E_{\nu\Pi}\left[\log\frac{d\mathbb P_{\nu\pi}}{d\mathbb P_{\nu'\pi}}\left(H_T\right)\right] &= \sum_{m \in [M]}\sum_{t \in [T]} \mathbb E_{\nu\Pi}\left[\mathbb D_{\mathsf {KL}}(\nu_{A_{m, t}}, \nu'_{A_{m, t}})\right] \\
    &= \sum_{k=1}^K \sum_{m \in [M]}\sum_{t \in [T]} \mathbb E_{\nu\Pi}\left[\mathbbm{1}\left\{A_{m,t} = k \right\}\right]\mathbb D_{\mathsf {KL}}(\nu_{A_{k}}, \nu'_{A_{k}})\\
    &= \sum_{k=1}^K \mathbb E_{\nu\Pi}\left[N_k(T)\right]\mathbb D_{\mathsf {KL}}(\nu_{k}, \nu'_{k}).
\end{align}
\end{proof}
\begin{theorem}[Bretagnolle-Huber Inequality]
Let $P$ and $Q$ be probability measures on the same measurable set $(\Omega, \mathcal F)$ and let $A$ be an arbitrary event. Then,
\begin{equation*}
    \mathbb P_P(A) + \mathbb P_Q(A^c) \geq \frac{1}{2}\exp\left(-\mathbb D_{\mathsf{KL}}(P, Q)\right).
\end{equation*}
Here $A^c$ denotes the complement event.
\label{thm:bretagnolle_huber}
\end{theorem}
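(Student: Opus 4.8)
The plan is to route the argument through the total variation distance and the Bhattacharyya coefficient (Hellinger affinity), which are the natural intermediaries between an event-based quantity on the left and the KL divergence on the right. First I would dispose of the degenerate case: if $\mathbb D_{\mathsf{KL}}(P,Q) = \infty$ the right-hand side vanishes and the claim is immediate, so I may assume $P$ is absolutely continuous with respect to $Q$ and fix a common dominating $\sigma$-finite measure $\lambda$ (for instance $\lambda = P + Q$), writing $p = dP/d\lambda$ and $q = dQ/d\lambda$.

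The first genuine step is to lower bound the left-hand side by a symmetric overlap quantity. Since $\mathbb P_Q(A^c) = 1 - \mathbb P_Q(A)$, we have $\mathbb P_P(A) + \mathbb P_Q(A^c) = 1 + (\mathbb P_P(A) - \mathbb P_Q(A)) \geq 1 - \|P - Q\|_{\mathsf{TV}}$, where I used $\mathbb P_P(A) - \mathbb P_Q(A) \geq -\|P-Q\|_{\mathsf{TV}}$. Invoking the identity $\|P-Q\|_{\mathsf{TV}} = 1 - \int \min\{p, q\}\, d\lambda$ then gives $\mathbb P_P(A) + \mathbb P_Q(A^c) \geq \int \min\{p, q\}\, d\lambda$, reducing the theorem to bounding this overlap integral from below.

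Next I would connect the overlap to the affinity $\int \sqrt{pq}\,d\lambda$. Writing $\sqrt{pq} = \sqrt{\min\{p,q\}}\,\sqrt{\max\{p,q\}}$ and applying Cauchy--Schwarz yields $\left(\int \sqrt{pq}\,d\lambda\right)^2 \leq \left(\int \min\{p,q\}\,d\lambda\right)\left(\int \max\{p,q\}\,d\lambda\right)$, and since $\max\{p,q\} \leq p + q$ the second factor is at most $2$; hence $\int \min\{p,q\}\,d\lambda \geq \tfrac12 \left(\int \sqrt{pq}\,d\lambda\right)^2$. Finally, Jensen's inequality applied to the convex exponential under $P$ gives $\int \sqrt{pq}\,d\lambda = \mathbb E_P\!\left[\exp\!\left(\tfrac12\log(q/p)\right)\right] \geq \exp\!\left(-\tfrac12\,\mathbb E_P[\log(p/q)]\right) = \exp\!\left(-\tfrac12\,\mathbb D_{\mathsf{KL}}(P,Q)\right)$. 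Chaining the three displays produces $\mathbb P_P(A) + \mathbb P_Q(A^c) \geq \tfrac12 \exp\!\left(-\mathbb D_{\mathsf{KL}}(P,Q)\right)$, as required.

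The only delicate point is the measure-theoretic bookkeeping: ensuring the densities are well defined against a common dominating measure and that the Jensen step is legitimate, which requires $P \ll Q$ and is precisely what the degenerate-case reduction secures. Everything else is a short composition of Cauchy--Schwarz and Jensen, so I do not anticipate any substantive obstacle beyond this routine care.
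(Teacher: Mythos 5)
Your proof is correct. Note, however, that the paper itself offers no proof of this statement: it imports the Bretagnolle--Huber inequality as a classical black-box lemma (used only to drive the divergence-decomposition argument in Theorem~\ref{thm:lower_bound_pulls_appx}), so there is no in-paper argument to compare yours against. Your derivation is the standard one: reduce the left-hand side to the overlap $\int \min\{p,q\}\,d\lambda$ (your total-variation detour is equivalent to, if slightly less direct than, simply bounding $\mathbb P_P(A) \geq \int_A \min\{p,q\}\,d\lambda$ and $\mathbb P_Q(A^c) \geq \int_{A^c}\min\{p,q\}\,d\lambda$ and summing), then pass to the Hellinger affinity via Cauchy--Schwarz using $\min\cdot\max = pq$ and $\int\max\{p,q\}\,d\lambda \leq 2$, and finish with Jensen applied to $\mathbb E_P[\exp(\tfrac12\log(q/p))]$. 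Each step is sound, and your handling of the degenerate case $\mathbb D_{\mathsf{KL}}(P,Q)=\infty$ correctly secures both the absolute continuity $P \ll Q$ and the finiteness of $\mathbb E_P[\log(p/q)]$ needed for the Jensen step, so the argument is complete.
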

\begin{theorem}[Asymptotic Lower Bound for Cooperative Multi-Agent Regret]
Let $\mathcal E = \prod_{k \in [K]} \mathcal M_k$ be a family of unstructured bandit problems over $K$ arms and $\nu = (\nu_k)_{k \in [K]} \in \mathcal E$ be a bandit problem such that $\mu(\nu_k) = \mu_k, \mu^* = \arg\max_{k \in [K]}\mu_k$ and $\Delta_k = \mu^* - \mu_k$. For any consistent cooperative multi-agent policy $\Pi = (\Pi_t)_{t \in [T]}$ on $M$ agents communicating over network $\mathcal G$ the following lower bound on the group regret holds for any horizon $T$.
\begin{equation*}
  R_\mathcal G(T) \geq \left(\sum_{k:\Delta_k > 0} \frac{\Delta_k}{\mathbb D^{\mathsf {inf}}_k}\right)\ln T.
\end{equation*}
Here, $\mathbb D^{\mathsf{inf}}_k = \inf_{\nu' \in \mathcal M_k}\left\{\mathbb D_{\mathsf {KL}}(\nu, \nu') : \mu(\nu') > \mu^*\right\}$, and $\mathbb D_{\mathsf{KL}}(\cdot, \cdot)$ denotes the Kullback-Leibler divergence of the first distribution from the second probability distribution.
\label{thm:lower_bound_pulls_appx}
\end{theorem}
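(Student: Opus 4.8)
The plan is to run the classical change-of-measure argument of Lai--Robbins (in the Burnetas--Katehakis form), reduced to the \emph{aggregate} pull count $N_k(T)$ across all $M$ agents. The crucial multi-agent ingredient—that the log Radon--Nikodym derivative between two product measures telescopes into a single sum weighted by $N_k(T)$, with all delay and communication terms $\Lambda$ cancelling—is exactly Theorem~\ref{thm:divergence_decomposition}. Consequently the remaining work is essentially the single-agent lower bound applied to $N_k(T)$, and the delays contribute nothing further here since they were already absorbed into the information structure underlying $\mathbb P_{\nu\Pi}$.

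First I would fix a suboptimal arm $k$ (so $\Delta_k>0$) and, for arbitrary $\eta>0$, use the definition of $\mathbb D^{\mathsf{inf}}_k$ to pick an alternative reward law $\nu'_k\in\mathcal M_k$ with $\mu(\nu'_k)>\mu^*$ and $\mathbb D_{\mathsf{KL}}(\nu_k,\nu'_k)\le \mathbb D^{\mathsf{inf}}_k+\eta$. Let $\nu'$ agree with $\nu$ on every arm except $k$; because $\mathcal E$ is unstructured this is a legal member of the family, and arm $k$ is now its unique optimal arm. Theorem~\ref{thm:divergence_decomposition} applied to this pair gives
\[
\mathbb D_{\mathsf{KL}}(\mathbb P_{\nu\Pi},\mathbb P_{\nu'\Pi}) = \mathbb E_{\nu\Pi}[N_k(T)]\,\mathbb D_{\mathsf{KL}}(\nu_k,\nu'_k),
\]
since all other arm terms vanish.

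Next I would apply Bretagnolle--Huber (Theorem~\ref{thm:bretagnolle_huber}) to the event $A=\{N_k(T)\ge MT/2\}$. The total number of pulls over the $M$ agents and $T$ rounds is $MT$, so under $\nu$ a Markov bound gives $\mathbb P_{\nu\Pi}(A)\le 2\,\mathbb E_{\nu\Pi}[N_k(T)]/(MT)$, while under $\nu'$ we have $A^c=\{\sum_{j\ne k}N_j(T)>MT/2\}$; since every arm $j\ne k$ is suboptimal under $\nu'$, consistency forces $\mathbb E_{\nu'\Pi}[\sum_{j\ne k}N_j(T)]=o(T^a)$, making $\mathbb P_{\nu'\Pi}(A^c)$ subpolynomially small. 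Substituting both bounds into
\[
\mathbb P_{\nu\Pi}(A)+\mathbb P_{\nu'\Pi}(A^c)\ge \tfrac12\exp\!\big(-\mathbb E_{\nu\Pi}[N_k(T)]\,\mathbb D_{\mathsf{KL}}(\nu_k,\nu'_k)\big),
\]
then taking logarithms and rearranging yields $\mathbb E_{\nu\Pi}[N_k(T)]\ge (1-o(1))\ln T/(\mathbb D^{\mathsf{inf}}_k+\eta)$. Taking $\liminf_{T\to\infty}$ and then $\eta\downarrow0$ gives $\liminf_T \mathbb E_{\nu\Pi}[N_k(T)]/\ln T\ge 1/\mathbb D^{\mathsf{inf}}_k$, and summing the regret decomposition $R_\mathcal G(T)=\sum_{k:\Delta_k>0}\Delta_k\,\mathbb E_{\nu\Pi}[N_k(T)]$ over suboptimal arms delivers the claimed asymptotic bound.

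The main obstacle is largely discharged by the already-established divergence decomposition; what remains is bookkeeping that genuinely differs from the single-agent case. The relevant quantity is the \emph{aggregate} pull count with total budget $MT$ (not $T$), so the threshold in $A$ and the Markov bounds must be scaled accordingly; and the consistency hypothesis must be read as holding uniformly over the whole family $\mathcal E$—in particular under $\nu'$, where the identity of the optimal arm has changed. I would also confirm that admissible $\nu'$ exists (i.e. $\mathbb D^{\mathsf{inf}}_k<\infty$, else the summand is vacuous), which is precisely where the unstructured assumption is used to guarantee the alternative instance lies in $\mathcal E$.
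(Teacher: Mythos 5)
Your proposal is correct and follows essentially the same route as the paper's proof: fix a suboptimal arm, choose an alternative $\nu'$ within $\eta$ of $\mathbb D^{\mathsf{inf}}_k$ (legal because $\mathcal E$ is unstructured), combine the divergence decomposition of Theorem~\ref{thm:divergence_decomposition} with Bretagnolle--Huber (Theorem~\ref{thm:bretagnolle_huber}) on the event that arm $k$ receives more than half the pull budget, use consistency to force both error probabilities to be subpolynomial, and sum the resulting per-arm bounds on $\mathbb E_{\nu\Pi}[N_k(T)]$ over suboptimal arms. The only cosmetic difference is in the last bookkeeping step: the paper lower-bounds $\mathbb P_{\nu\Pi}(A)+\mathbb P_{\nu'\Pi}(A^c)$ by the combined regrets $R_\mathcal G + R'_\mathcal G$ and applies consistency to the regret, whereas you apply Markov's inequality to the pull counts and apply consistency to those counts --- which works equally well, provided you also invoke consistency under $\nu$ itself (arm $k$ is suboptimal there) so that the term $2\,\mathbb E_{\nu\Pi}[N_k(T)]/(MT)$ on the left-hand side is itself $o(T^{a-1})$ before you take logarithms and rearrange.
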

\begin{proof}
This proof follows the standard approach for single-agent consistent bandit algorithms. Consider any suboptimal arm $i$ and let $\delta > 0$ be arbitrary. Consider $\nu' = (\nu'_k)_{k \in [K]} \in \mathcal E$ such that $\mathbb D_{\mathsf {KL}}(\nu_{k}, \nu'_{k}) \leq \mathbb D^{\mathsf {inf}}_i + \delta$, and $\mu(\nu'_i) > \mu^*$, which exists by the definition of $\mathbb D^{\mathsf {inf}}_i$. By Theorems~\ref{thm:bretagnolle_huber} and~\ref{thm:divergence_decomposition} we have the following for any event $A$.
\begin{align}
    \mathbb P_{\nu\Pi}(A) + \mathbb P_{\nu'\Pi}(A^c) &\geq \frac{1}{2}\exp\left(-\mathbb E_{\nu\Pi}[N_i(T)](\mathbb D^{\mathsf {inf}}_i + \delta)\right) \\ \intertext{Let $R_\mathcal G = R_\mathcal G(T, \nu, \Pi)$ be the regret obtained by $\Pi$ on $\nu$ and $R'_\mathcal G = R_\mathcal G(T, \nu', \Pi)$ be the regret obtained by $\Pi$ on $\nu'$. By choosing $A = \{N_i(T) > T/2\}$, we have,}
    R_\mathcal G + R'_\mathcal G &\geq \frac{T}{2}\left(\mathbb P_{\nu\Pi}(A)\Delta_i + \mathbb P_{\nu'\Pi}(A^c)(\mu'_i - \mu^*)\right) \\
  &\geq \frac{T}{2}\min\{\Delta_i, \mu'_i - \mu^*\}\left(\mathbb P_{\nu\Pi}(A) + \mathbb P_{\nu'\Pi}(A^c)\right) \\
  &\geq \frac{T}{4}\min\{\Delta_i, \mu'_i - \mu^*\}\exp\left(-\mathbb E_{\nu\Pi}[N_i(T)](\mathbb D^{\mathsf {inf}}_i + \delta)\right)\\ \intertext{Rearranging and taking limit inferior, we have,}
  \liminf_{T \rightarrow \infty}\frac{\mathbb E_{\nu\Pi}[N_i(T)]}{
  \ln(T)} &\geq \frac{1}{\mathbb D^{\mathsf {inf}}_i + \delta}\liminf_{T \rightarrow \infty}\frac{\ln\left(\frac{T\min\{\Delta_i, \mu'_i - \mu^*\}}{R_\mathcal G + R'_\mathcal G}\right)}{\ln(T)} \\ 
  &\geq \frac{1}{\mathbb D^{\mathsf {inf}}_i + \delta}\left(1-\limsup_{T \rightarrow \infty}\frac{\ln\left(R_\mathcal G + R'_\mathcal G\right)}{\ln(T)}\right) \\ \intertext{Using the fact that $\Pi$ is consistent, we have for some constant $a>0$ and constant $C_a$,}
  &\geq \frac{1}{\mathbb D^{\mathsf {inf}}_i + \delta}\left(1-\limsup_{T \rightarrow \infty}\frac{a\ln T + \ln(C_a)}{\ln(T)}\right).
\end{align}
Since $a > 0$ is arbitrary, taking the limit as $\delta$ goes to zero, we have, for any suboptimal arm $i$,
\begin{equation}
    \liminf_{T \rightarrow \infty}\frac{\mathbb E_{\nu\Pi}[N_i(T)]}{
  \ln(T)} \geq \frac{1}{\mathbb D^{\mathsf {inf}}_i}.
\end{equation}
Plugging this into the definition of regret and rearranging gets us the final result.
\end{proof}
\begin{corollary}[Lower Bound on Heavy-Tailed Cooperative Regret]
For any $\Delta \in (0, 1/4)$, there exist $K \geq 2$ distributions $\nu_1, ..., \nu_K$ satisfying
\begin{equation}
    \mathbb E_{X \sim \nu_k}[|X|^{1+\varepsilon}] \leq u, \text{ and } \mathbb E_{X \sim \nu_*}[X] - \mathbb E_{X \sim \nu_k}[X] = \Delta \forall k \in K,
\end{equation}
such that the following holds: Consider any consistent decentralized policy $\Pi_t = (\pi_{m, t})_{m \in [M], t \in [T]}$ that uses a communication protocol satisfying Assumptions~\ref{ass:comm_protocol}, on a bandit problem with reward distributions $\nu_1, ..., \nu_K$. Then, the policy satisfies,
\begin{equation*}
    R(T) \geq \left(\sum_{k=1}^K \frac{2^{1-\frac{1}{\varepsilon}}}{\Delta^{\frac{1}{\varepsilon}}} \right)\ln T.
\end{equation*}
Furthermore, there exists a set of $K$ distributions such that any lossless centralized algorithm over $M \leq K$ agents satisfying, for any suboptimal arm $k$, $\mathbb E[n_k(T)] = o(n^a)$ for $a > 0$ obtains regret of order  $\Omega\left(K^{\frac{\varepsilon}{1+\varepsilon}}T^{\frac{1}{1+\varepsilon}}\right)$.
\end{corollary}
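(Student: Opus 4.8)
The plan is to obtain both statements by specializing the information-theoretic machinery already in hand---the multiagent divergence decomposition (Theorem~\ref{thm:divergence_decomposition}), the asymptotic regret bound (Theorem~\ref{thm:lower_bound_pulls_appx}), and the Bretagnolle--Huber inequality (Theorem~\ref{thm:bretagnolle_huber})---to an explicit heavy-tailed family in the spirit of~\citet{bubeck2013bandits}. The common ingredient is a two-point ``spike'' construction: for a target gap $\Delta$, let every arm place its mass on $\{0, \Delta^{-1/\varepsilon}\}$, with the optimal arm taking the value $\Delta^{-1/\varepsilon}$ with probability $p_*$ and each suboptimal arm with probability $p_k = p_* - \Delta^{1+1/\varepsilon}$. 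One checks directly that (i) the mean gap is $(p_* - p_k)\Delta^{-1/\varepsilon} = \Delta$, (ii) the $(1+\varepsilon)$-moment equals $p_k\,\Delta^{-(1+\varepsilon)/\varepsilon}$, which is at most $u$ for a suitable $p_*$ (taking $p_* \asymp u\,\Delta^{(1+\varepsilon)/\varepsilon}$, so that $p_k > 0$ when $u \ge 1$), and (iii) the binary Kullback--Leibler divergence between the suboptimal and optimal laws is of order $(p_* - p_k)^2/p_k = O(\Delta^{1+1/\varepsilon})$. Pinning down (iii) with an explicit constant, valid uniformly over $\Delta \in (0, 1/4)$, is the one genuinely delicate computation.

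For the asymptotic claim I would substitute this into Theorem~\ref{thm:lower_bound_pulls_appx}. Since $\mathbb D^{\mathsf{inf}}_k \le \mathbb D_{\mathsf{KL}}(\nu_k, \nu_*) = O(\Delta^{1+1/\varepsilon})$, the theorem yields $\liminf_{T\to\infty}\mathbb E[N_k(T)]/\ln T \ge 1/\mathbb D^{\mathsf{inf}}_k \gtrsim \Delta^{-(1+1/\varepsilon)}$. As each suboptimal pull costs $\Delta$, arm $k$ contributes $\Delta \cdot \Delta^{-(1+1/\varepsilon)}\ln T = \Delta^{-1/\varepsilon}\ln T$ to the group regret; summing over the $K$ arms and tracking the constants arising from the spike placement produces the stated $\sum_k 2^{1-1/\varepsilon}\Delta^{-1/\varepsilon}\ln T$. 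Crucially, the multiagent structure enters only through the aggregate count $N_k(T)$ that Theorem~\ref{thm:lower_bound_pulls_appx} already controls for any protocol satisfying Assumption~\ref{ass:comm_protocol}, so neither $M$ nor $\mathcal G$ appears beyond that.

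For the minimax claim I would instead run a finite-time change-of-measure argument with a horizon-dependent gap. Begin from a symmetric base instance $\nu$ in which all $K$ arms share the spike law; a lossless centralized group of $M \le K$ agents produces $n = MT$ pulls over $T$ rounds, so by pigeonhole there is an arm $j$ with $\mathbb E_{\nu\Pi}[N_j(T)] \le n/K$. Form the alternative $\nu'$ that boosts arm $j$ by $\Delta$. Theorem~\ref{thm:divergence_decomposition} gives $\mathbb D_{\mathsf{KL}}(\mathbb P_{\nu\Pi}, \mathbb P_{\nu'\Pi}) = \mathbb E_{\nu\Pi}[N_j(T)]\,\mathbb D_{\mathsf{KL}}(\nu_j, \nu'_j) \le (n/K)\cdot O(\Delta^{1+1/\varepsilon})$, and applying Theorem~\ref{thm:bretagnolle_huber} to the event $\{N_j(T) > n/2\}$ forces $\Omega(\Delta\cdot n)$ regret on one of the two instances whenever this divergence is bounded by a constant. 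Equalizing the two effects with $\Delta \asymp (K/n)^{\varepsilon/(1+\varepsilon)}$ yields group regret $\Omega(K^{\varepsilon/(1+\varepsilon)} n^{1/(1+\varepsilon)})$; since $n = MT \ge T$, this is at least $\Omega(K^{\varepsilon/(1+\varepsilon)} T^{1/(1+\varepsilon)})$, the claimed $M$-free bound.

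The main obstacle I anticipate is not the information-theoretic skeleton---the decomposition and Bretagnolle--Huber steps are routine, and Theorem~\ref{thm:divergence_decomposition} has already absorbed all the multiagent bookkeeping into the counts $N_k(T)$---but the distribution design and its KL estimate. One must exhibit a single family that simultaneously respects the cap $\mathbb E|X|^{1+\varepsilon}\le u$, realizes the exact gap $\Delta$, and admits a two-sided KL bound of order $\Delta^{1+1/\varepsilon}$ with explicit constants throughout $\Delta \in (0,1/4)$; the interplay between the spike location $\Delta^{-1/\varepsilon}$ and the vanishing probability $p_k$ is what fixes the exponent $1/\varepsilon$, and any looser moment or divergence control would degrade it and break tightness against the \textsc{MP-UCB} upper bounds. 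A secondary subtlety is reconciling the consistency condition $\mathbb E[n_k(T)] = o(n^a)$ with a horizon-dependent gap $\Delta = \Delta(T)$: because the hard instance is rechosen for each $T$, consistency on each fixed instance does not contradict the polynomial minimax rate, but this should be stated carefully to avoid an apparent circularity.
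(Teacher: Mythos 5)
Your proposal is correct in substance, and for the main (asymptotic) claim it is essentially the paper's own proof: the paper uses exactly your two-point spike construction, taking $\nu_1 = (1-\alpha^{1+\varepsilon})\delta_0 + \alpha^{1+\varepsilon}\delta_{1/\alpha}$ with $\alpha = (2\Delta)^{1/\varepsilon}$ and suboptimal arms with spike probability $\alpha^{1+\varepsilon}-\Delta\alpha$, observes that these are scaled Bernoulli laws, bounds $\mathbb D_{\mathsf{KL}}(\mathcal B(\theta_2),\mathcal B(\theta_1)) \le (\theta_1-\theta_2)^2/(\theta_1(1-\theta_1))$ (your $(p_*-p_k)^2/p_k$ chi-square bound), and substitutes into Theorem~\ref{thm:lower_bound_pulls_appx}. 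Two minor points on your parametrization: choosing $p_* \asymp u\,\Delta^{(1+\varepsilon)/\varepsilon}$ requires $u>1$ strictly for $p_k>0$, and it yields a $u$-dependent constant rather than the stated $2^{1-1/\varepsilon}$; the paper's choice $\alpha=(2\Delta)^{1/\varepsilon}$ (spike at $(2\Delta)^{-1/\varepsilon}$, suboptimal spike probability exactly half the optimal one) is precisely what makes the construction work at $u=1$ and produces that constant. Where you genuinely diverge is the minimax claim: the paper's entire proof of that part is the sentence ``set $\Delta=(K/T)^{\varepsilon/(1+\varepsilon)}$'' applied to the problem-dependent bound, which is heuristic, since the asymptotic per-arm pull bound holds only as $T\to\infty$ for a fixed instance while the hard instance here depends on $T$. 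Your finite-time route --- symmetric base instance, pigeonhole to find an arm with $\mathbb E_{\nu\Pi}[N_j(T)] \le MT/K$, the divergence decomposition of Theorem~\ref{thm:divergence_decomposition}, and Bretagnolle--Huber (Theorem~\ref{thm:bretagnolle_huber}) on the event $\{N_j(T)>MT/2\}$ --- is the rigorous version of what the paper gestures at, and your closing remark about reconciling consistency with a horizon-dependent gap addresses exactly the circularity the paper glosses over. One loose end in your sketch: under a fully symmetric base instance there is no regret in $\nu$, so the forcing lands entirely on $\nu'$ and requires $\mathbb P_\nu(N_j(T)>MT/2)$ to be small; Markov plus pigeonhole gives only $2/K$, which is vacuous for $K\le 4$, so you need either the standard two-instance (gap in both directions) variant or to absorb small $K$ into the constant.
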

\begin{proof}
The proof proceeds in the standard construction outlined in~\cite{bubeck2013bandits}. Consider $\nu_1(x) = \left(1 - \alpha^{1+\varepsilon}\right) \delta(x) + \alpha^{1+\varepsilon}\delta(x - 1/\alpha)$, where $\alpha = \left(2\Delta\right)^{\frac{1}{\varepsilon}}$, $\delta(x - \gamma)$ is the Dirac distribution at $\gamma$, and let $\forall \ k \in [2, K], \nu_k(x) = \left(1 - \alpha^{1+\varepsilon} + \Delta\alpha\right)\delta(x) + \left(\alpha^{1+\varepsilon} - \Delta\alpha \right)\delta(x - 1/\alpha)$. We can see that
\begin{equation*}
    \mathbb E_{X \sim \nu_i}[|X|^{1+\varepsilon}] = 1 \ \forall i \in [K] \text{ and } \mathbb E_{X \sim \nu_1}[X] - \mathbb E_{X \sim \nu_i}[X] = \Delta \ \forall i \in [2, K].
\end{equation*}
Hence, $\nu_i$ satisfy the constraints stated in the Theorem. Now, we can see that $\nu_1$ corresponds to a scaled Bernoulli distribution with parameter $\alpha^{1+\varepsilon}$, and similarly, $\nu_i, i \in [2, K]$ correspond to a scaled Bernoulli distribution with parameter $\alpha^{1+\varepsilon} - \Delta\alpha$. Since an algorithm operating on $\nu_1, ..., \nu_K$ will exhibit identical behavior on reward distributions $\mathcal{B}(\alpha^{1+\varepsilon}), ..., \mathcal{B}(\alpha^{1+\varepsilon} - \Delta\alpha)$, we can note the following for Bernoulli distributions for two Bernoulli distributions with arm parameter $\mu_k$ optimal parameter $\mu^*$.
\begin{align}
    \mathbb D^{\mathsf {inf}}_k = \mu_k\log\left(\frac{\mu_k}{\mu_*}\right) + (1-\mu)\log\left(\frac{1-\mu_k}{1-\mu_*}\right) = \mathbb D_{\mathsf {KL}}(\mu_k, \mu^*)
\end{align}
Therefore we can apply Theorem~\ref{thm:lower_bound_pulls_appx} directly to obtain the following lower bound on the number of pulls of any suboptimal arm.
\begin{align}
    \mathbb E[N_k(T)] &\geq \left(\frac{1}{\mathbb D_{\mathsf{KL}}(\nu_k, \nu_*)}\right)\ln T \\
    & \stackrel{(a)}{=} \left(\frac{1}{\mathbb D_{\mathsf{KL}}(  \mathcal{B}(\alpha^{1+\varepsilon} - \Delta\alpha), \mathcal{B}(\alpha^{1+\varepsilon}))}\right)\ln T \\
    & \stackrel{(b)}{\geq} \left(\frac{\alpha^{\varepsilon-1}-\alpha^{2\varepsilon}}{\Delta^2}\right)\ln T \\
    & = \left(\frac{2^{1-\frac{1}{\varepsilon}}}{\Delta^{1+\frac{1}{\varepsilon}}} \right)\ln T.
\end{align}
Here, $(a)$ is obtained by the equivalence of $\nu_1, ..., \nu_K$ to Bernoulli distributions, and $(b)$ is obtained by the inequality $\mathbb D_{\mathsf{KL}}(\mathcal B(\theta_2), \mathcal B(\theta_1)) \leq \frac{(\theta_1 - \theta_2)^2}{\theta_1(1- \theta_1)}$. We now decompose the regret.
\begin{align}
    R(T) &= \sum_{k=2}^K \Delta_k \mathbb E[N_k(T)] \\
    &\geq \sum_{k=2}^K \left(\frac{2^{1-\frac{1}{\varepsilon}}}{\Delta^{\frac{1}{\varepsilon}}} \right)\ln T.
\end{align}
For the problem-independent version of the bound, we can set $\Delta = \left(\frac{K}{T}\right)^{\frac{\varepsilon}{1+\varepsilon}}$ to obtain the desired result.
\end{proof}
\subsection{Regret of the \textsc{Consensus-UCB} Algorithm}
\begin{proposition}[Polynomial Concentration of Empirical Mean~\cite{bubeck2013bandits}]
Let $X_1, ..., X_n$ be $n$ copies of a random variable $X$ such that $\mathbb E[X]=\mu$ and $\mathbb E[|X-\mu|^{1+\varepsilon}] \leq v, \mathbb E[|X|^{1+\varepsilon}] \leq u$ for some $v > 0, u>0, \varepsilon \in (0, 1]$. Then, with probability at least $1-\delta, \delta \in (0, 1)$, we have
\begin{equation*}
    \left|\frac{1}{n}\sum_{i=1}^n X_i - \mu\right| \leq \left(\frac{6v}{\delta n^\varepsilon}\right)^{\frac{1}{1+\varepsilon}}.
\end{equation*}
This bound is tight up to a constant factor.
\label{prop:poly_conc_mean}
\end{proposition}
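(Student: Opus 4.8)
The plan is to reduce the statement to a single moment bound on the centered sum followed by one application of Markov's inequality. The first thing I would note is that the hypothesis $\mathbb E[|X|^{1+\varepsilon}] \le u$ plays no role here: the claimed deviation depends only on the centered moment $v$. So I would pass immediately to the i.i.d.\ mean-zero variables $Y_i = X_i - \mu$, which satisfy $\mathbb E[|Y_i|^{1+\varepsilon}] \le v$, and work entirely with their partial sum $\sum_{i=1}^n Y_i = \sum_{i=1}^n X_i - n\mu$.

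The core step is to control $\mathbb E\big[\big|\sum_{i} Y_i\big|^{1+\varepsilon}\big]$. Because $1+\varepsilon \in (1,2]$ the variance can be infinite and Chebyshev is unavailable, so the right tool is the von Bahr--Esseen inequality (the Marcinkiewicz--Zygmund inequality for exponents in $[1,2]$): for independent mean-zero summands and $p = 1+\varepsilon \in [1,2]$,
\[
\mathbb E\Big[\Big|\sum_{i=1}^n Y_i\Big|^{p}\Big] \le c_p \sum_{i=1}^n \mathbb E\big[|Y_i|^{p}\big] \le c_p\, n v,
\]
with an absolute constant $c_p \le 2$. Applying Markov's inequality to the nonnegative variable $\big|\sum_i Y_i\big|^{1+\varepsilon}$, for any $t > 0$,
\[
\mathbb P\!\left(\Big|\tfrac1n\sum_{i=1}^n Y_i\Big| \ge t\right) = \mathbb P\!\left(\Big|\sum_{i=1}^n Y_i\Big|^{1+\varepsilon} \ge (nt)^{1+\varepsilon}\right) \le \frac{c_p\, n v}{(nt)^{1+\varepsilon}} = \frac{c_p\, v}{n^{\varepsilon} t^{1+\varepsilon}}.
\]
Choosing $t = \big(6v/(\delta n^\varepsilon)\big)^{1/(1+\varepsilon)}$ makes this tail at most $(c_p/6)\delta \le \delta$ since $c_p \le 2$; the complementary event is exactly the claimed inequality, and the non-sharp constant $6$ absorbs $c_p$ with room to spare.

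The accompanying tightness claim is separate and I would establish it by an explicit example rather than an inequality: take a two-point distribution placing a small mass $p \sim 1/n$ on a large atom $a \sim n^{1/(1+\varepsilon)}$ and the remaining mass at $0$, calibrated so that $\mathbb E[|Y|^{1+\varepsilon}] \asymp v$. With constant probability the $n$ draws miss (or over-count) the atom, biasing the empirical mean by the same polynomial order $(v/(\delta n^\varepsilon))^{1/(1+\varepsilon)}$, which shows the tail cannot be improved to an exponential one. The one genuine obstacle in the whole argument is the moment inequality for the sum: everything else is elementary, but without von Bahr--Esseen there is no direct way to bound the $(1+\varepsilon)$-th moment of $\sum_i Y_i$, since the finite-variance machinery does not apply. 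If one prefers not to cite it, the fallback is a truncation argument at level $B \sim (nv/\delta)^{1/(1+\varepsilon)}$, splitting the error into a truncation-bias term and a bounded-variance term and optimizing $B$; this is more computational and changes the constant but recovers the same rate.
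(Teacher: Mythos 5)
Your proof is correct, and it is essentially the argument behind the result the paper imports: the paper itself gives no proof of this proposition (it is cited verbatim from Bubeck et al.\ 2013), and the proof in that source is exactly your route --- center the variables, bound $\mathbb E\bigl[\bigl|\sum_i (X_i-\mu)\bigr|^{1+\varepsilon}\bigr]$ by the von Bahr--Esseen inequality, and finish with Markov's inequality, the constant $6$ arising because the one-sided bound with constant $3$ is applied to both tails (your two-sided version with $c_p\le 2$ lands at $\delta/3\le\delta$, which is fine). Your observation that the uncentered moment bound $u$ is irrelevant here is also right; $u$ is only used by the trimmed-mean estimator elsewhere in the paper. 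One small refinement to the tightness sketch: placing mass $p\sim 1/n$ on the atom only exhibits a matching deviation at constant $\delta$; to show the full polynomial $\delta$-dependence is unimprovable you should calibrate $p\sim\delta/n$ and $a\sim(vn/\delta)^{1/(1+\varepsilon)}$, so that with probability $\approx\delta$ a single draw hits the atom and shifts the empirical mean by $a/n=\bigl(v/(\delta n^{\varepsilon})\bigr)^{1/(1+\varepsilon)}$.
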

\begin{proposition}[Moment Estimation in Consensus Algorithms~\cite{landgren2016distributed}]
For $M$ agents communicating on graph $\mathcal G$, let $\bm L$ and $\bm A$ denote the graph Laplacian and adjacency matrix of $\mathcal G$, and $\bm P = \bm I_M - \kappa\cdot d_{\max}^{-1}\bm L$ is a row stochastic matrix, where $\bm I_M$ is the identity matrix of order $M$, $\kappa > 0$ is a constant and $d_{\max} = \max_{m \in \mathcal G} \textsf{deg}(m)$. Assume WLOG that the eigenvalues $\lambda_i$ of $\bm P$ are ordered such as $\lambda_1 =1 \geq ... \geq \lambda_M > -1$, and the corresponding eigenvectors are denoted by ${\bm u}_1, ..., {\bm u}_M$. Define
\begin{align}
    \epsilon &= \sqrt{M}\sum_{p=2}^M \frac{|\lambda_p|}{1-|\lambda_p|}, \\
    \omega^{+}_{pj} &= \sum_{d=1}^M u^d_pu^d_j\mathbbm{1}\left\{ (\bm u_p \bm u_j^\top)_{kk} \geq 0\right\}, \\
    \omega^{-}_{pj} &= \sum_{d=1}^M u^d_pu^d_j\mathbbm{1}\left\{ (\bm u_p \bm u_j^\top)_{kk} \leq 0\right\}, \\
    a_{ij}(k) &= \begin{cases}
         \omega^{+}_{ij}(\bm u_p \bm u_j^\top)_{kk} & \text{if } \lambda_p\lambda_j \geq 0 \text{ and } (\bm u_p \bm u_j^\top)_{kk} \geq 0, \\
         \omega^{-}_{ij}(\bm u_p \bm u_j^\top)_{kk} & \text{if } \lambda_p\lambda_j \geq 0 \text{ and } (\bm u_p \bm u_j^\top)_{kk} \leq 0, \\
         \omega_{ij}|(\bm u_p \bm u_j^\top)_{kk}| & \text{if } \lambda_p\lambda_j \leq 0, \\
    \end{cases}\\
    \epsilon^k &= M \sum_{p=1}^M \sum_{j=2}^M \frac{|\lambda_p\lambda_j|}{1-|\lambda_p\lambda_j|}a_{pj}(k).
\end{align}
Then, for rewards from arm $k \in [K]$ are drawn from an $\varepsilon$-heavy tailed distribution $\nu_k$ with mean $\mu_k$, and known upper bounds on the $(1+\varepsilon)^{th}$ moments $\mathbb E_{X \sim \nu_k}[|X-\mu_k|^{1+\varepsilon}] \leq v$ and $\mathbb E_{X \sim \nu_k}[|X|^{1+\varepsilon}] \leq u$, the following is true.
\begin{enumerate}
    \item The estimate $\hat{n}^m_k(t)$ satisfies
    \begin{equation*}
        N_k(t) - \epsilon \leq \hat{n}^m_k(t) \leq N_k(t) + \epsilon.
    \end{equation*}
    \item The estimator $\hat\mu_k^m(t)$ is unbiased.
    \item The variance of $\hat\mu_k^m(t)$ satisfies
    \begin{equation*}
        \mathbb E[|\hat\mu_k^m(t) - \mu_k|^{2}] \leq \frac{v}{M}\frac{\hat{n}^m_k(t) + \epsilon^k}{\hat{n}^m_k(t)^2}.
    \end{equation*}
\end{enumerate}
\label{prop:consensus_protocol}
\end{proposition}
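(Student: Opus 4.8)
The plan is to exploit the linearity of the consensus recursions together with the spectral structure of $\bm P$, which is symmetric (since $\bm L$ is symmetric for an undirected graph) and therefore admits an orthonormal eigendecomposition $\bm P^{j} = \tfrac1M\bm1\bm1^{\top} + \sum_{p=2}^{M}\lambda_p^{j}\,\bm u_p\bm u_p^{\top}$, with $\bm u_1 = \tfrac{1}{\sqrt M}\bm1$ and $\lambda_1=1$. First I would unroll both updates from zero initialization to obtain the closed forms $\hat{\bm n}_k(t) = \sum_{s=1}^{t}\bm P^{t-s+1}\bm\zeta_k(s)$ and $\hat{\bm s}_k(t) = \sum_{s=1}^{t}\bm P^{t-s+1}\bigl(\bm r_k(s)\circ\bm\zeta_k(s)\bigr)$, so that $\hat n_k^m(t)$ and $\hat s_k^m(t)$ become weighted sums of per-round indicators and rewards with the \emph{deterministic} weights $[\bm P^{t-s+1}]_{mv}$. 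This separation of the stationary ``consensus'' mode from the transient eigenmodes, together with the determinism of the weights, drives all three claims.

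For part (1), substituting the decomposition into $\hat n_k^m(t)$, the $\lambda_1=1$ mode contributes the network average $\tfrac1M\sum_{s}\bm1^{\top}\bm\zeta_k(s) = \tfrac{N_k(t)}{M}$, and the remaining transient is $\sum_{p=2}^{M}(\bm u_p)_m\sum_{s=1}^{t}\lambda_p^{t-s+1}(\bm u_p^{\top}\bm\zeta_k(s))$. Bounding $|(\bm u_p)_m|\le 1$ and $|\bm u_p^{\top}\bm\zeta_k(s)|\le\|\bm\zeta_k(s)\|_2\le\sqrt M$, then summing the geometric series $\sum_{j\ge1}|\lambda_p|^{j} = \tfrac{|\lambda_p|}{1-|\lambda_p|}$, bounds this transient by $\sqrt M\sum_{p=2}^{M}\tfrac{|\lambda_p|}{1-|\lambda_p|}=\epsilon$, which yields the claimed two-sided bound about the consensus value.

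For part (2), since the weights $[\bm P^{t-s+1}]_{mv}$ are deterministic, I would write $\hat s_k^m(t)-\mu_k\hat n_k^m(t) = \sum_{s,v}[\bm P^{t-s+1}]_{mv}\,\mathbbm1\{A_{v,s}=k\}(X_{v,s}-\mu_k)$ and condition on the realized pull pattern, under which each reward from a round where arm $k$ was pulled is drawn i.i.d.\ with mean $\mu_k$; hence $\mathbb E[\hat s_k^m(t)\mid\text{pulls}] = \mu_k\,\hat n_k^m(t)$, and dividing by the (conditionally fixed) $\hat n_k^m(t)$ and taking the tower expectation gives $\mathbb E[\hat\mu_k^m(t)]=\mu_k$.

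For part (3), writing $\hat\mu_k^m(t)-\mu_k = (\hat n_k^m(t))^{-1}\sum_{s,v}[\bm P^{t-s+1}]_{mv}\mathbbm1\{A_{v,s}=k\}(X_{v,s}-\mu_k)$, I would evaluate the conditional second moment. Because the weights are deterministic and the centered rewards are conditionally mean-zero given the natural filtration, the cross terms vanish and only the diagonal survives, giving $\mathbb E[|\hat\mu_k^m(t)-\mu_k|^2]\le v\,(\hat n_k^m(t))^{-2}\sum_{s,v}\bigl([\bm P^{t-s+1}]_{mv}\bigr)^2\mathbbm1\{A_{v,s}=k\}$ via the variance bound $v$. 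Expanding $\bigl([\bm P^{j}]_{mv}\bigr)^2$ with the spectral decomposition produces a leading $\tfrac{1}{M^2}$ term that sums to $\tfrac{N_k(t)}{M^2}\approx\tfrac{\hat n_k^m(t)}{M}$, plus cross-mode terms carrying factors $\lambda_p^{t-s+1}\lambda_j^{t-s+1}$ and products of eigenvector coordinates; summing each over $s$ as the geometric series $\sum_{\ell}|\lambda_p\lambda_j|^{\ell}=\tfrac{|\lambda_p\lambda_j|}{1-|\lambda_p\lambda_j|}$ and collecting signs reproduces the coefficients $a_{pj}(k)$ and hence the correction $\epsilon^k$. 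I expect this final step to be the main obstacle: the restriction of the coordinate sum to \emph{pulled} agents prevents the orthonormality of the $\bm u_p$ from collapsing the double sum, so the sign bookkeeping encoded in $\omega^{\pm}_{pj}$ and the case split defining $a_{pj}(k)$ is exactly what is needed to keep each cross-mode contribution as a valid upper bound rather than letting it cancel, and matching the resulting double spectral sum term-by-term to the stated $\epsilon^k$ is where the technical effort concentrates.
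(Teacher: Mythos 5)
The paper never actually proves this proposition: immediately after stating it, it writes ``We defer the readers to~\cite{landgren2016distributed} for a complete proof of this statement.'' So the only meaningful comparison is with the proof in Landgren et al., and your approach --- unrolling the recursions to $\hat{\bm n}_k(t)=\sum_{s\le t}\bm P^{t-s+1}\bm\zeta_k(s)$, splitting $\bm P^{j}$ into the stationary mode $\tfrac1M\bm 1\bm 1^{\top}$ plus transient eigenmodes, and summing geometric series --- is exactly the structure of that proof. There is, however, one substantive discrepancy you glossed over. Your part (1) derivation is correct mathematics, but it does not prove the literal statement: the $\lambda_1$ mode contributes $N_k(t)/M$, so what you actually obtain is $|\hat n_k^m(t)-N_k(t)/M|\le\epsilon$, whereas the proposition as printed asserts $|\hat n_k^m(t)-N_k(t)|\le\epsilon$. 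These differ by a factor of $M$ in the centering, and the printed version cannot be right: since $\bm 1^{\top}\bm P=\bm 1^{\top}$, the coordinates of $\hat{\bm n}_k(t)$ sum to $N_k(t)$, so each coordinate can only track $N_k(t)/M$. The cited source indeed centers the bound at $N_k(t)/M$; the restatement in this paper appears to have dropped the $1/M$. Your hedge ``about the consensus value'' silently substitutes the correct centering for the stated one --- a complete answer must either flag this as an error in the statement or concede that the statement as written is not what is proved.

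Two further gaps are worth naming. First, parts (2) and (3) rest on conditioning on the realized pull pattern and treating rewards in pulled rounds as i.i.d.\ mean-$\mu_k$ draws. Under adaptive sampling this is not rigorous: the indicator $\mathbbm{1}\{A_{v,s'}=k\}$ for $s'>s$ is a function of $X_{v,s}$ (and of other agents' rewards), so conditioning on the full pattern changes the conditional law of $X_{v,s}$, and empirical means of adaptively collected data are in general biased. This gap is inherited from the cited source, so you are no worse off than the ``official'' proof, but exact unbiasedness and the vanishing of cross terms in the variance computation require an independence-of-sampling assumption that should be made explicit. Second, in part (3) you only gesture at matching the cross-mode double sum to the coefficients $a_{pj}(k)$ and $\epsilon^k$ (which you correctly identify as the main technical effort), and you implicitly specialize to $\varepsilon=1$: a second-moment bound $\mathbb E[|X-\mu_k|^2]\le v$ does not follow from the stated $(1+\varepsilon)$-moment hypotheses unless $\varepsilon=1$, which is the regime in which the paper actually invokes this proposition.
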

We defer the readers to~\cite{landgren2016distributed} for a complete proof of this statement. Using these two results, we can derive the regret bound for the \textsc{Consensus-UCB} algorithm presented earlier.
\begin{theorem}
Consider the multi-agent co-operative bandit problem with graph $\mathcal G$ (with adjacency matrix $A$) on $M$ agents over $K$ arms with rewards drawn from heavy-tailed distributions, each with maximum admissible moment of order $1+\varepsilon$ and mean $\mu_k$. The \textsc{Consensus-UCB} algorithm obtains a total regret after $T$ iterations that follows the following bound.
\begin{equation*}
  R_\mathcal G(T) \leq \sum_{k=1}^K \Delta_k \left(M + M\epsilon + T^{2/3}\left(6v\left(\frac{1 + \epsilon^k}{\Delta_k^2}\right) + M\right)\right).
\end{equation*}
Here, $\epsilon^k =\sum_{p=1}^M \sum_{j=2}^M \frac{|\lambda_p\lambda_j|}{1-|\lambda_p\lambda_j|}a_{pj}(k)$ and $\epsilon = \sqrt{M}\sum_{p=2}^M \frac{|\lambda_p|}{1-|\lambda_p|}$, with $a_{pj}(k)$ as defined in Proposition~\ref{prop:consensus_protocol}.
\end{theorem}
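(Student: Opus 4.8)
The plan is to run a standard UCB-style regret decomposition at the level of the whole group, writing $R_\mathcal{G}(T) = \sum_{k:\Delta_k>0}\Delta_k\,\mathbb{E}[N_k(T)]$ and bounding $\mathbb{E}[N_k(T)]$, the \emph{total} number of pulls of each suboptimal arm across all $M$ agents, by the claimed quantity. First I would dispose of the warm-up: in the first $K$ rounds every agent pulls every arm once, contributing exactly $M$ to each $N_k(T)$, which accounts for the leading additive $M$.

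For the remaining rounds I would use the structural facts about the consensus estimator from Proposition~\ref{prop:consensus_protocol}. Because $\hat\mu_k^m(t)$ is unbiased with variance at most $\tfrac{v}{M}\cdot\tfrac{\hat n_k^m(t)+\epsilon^k}{\hat n_k^m(t)^2}$, Chebyshev's inequality applied with the confidence radius $\mathrm{UCB}_k^m(t)=\sqrt{\tfrac{6v t^{2/3}}{M}\cdot\tfrac{\hat n_k^m(t)+\epsilon^k}{\hat n_k^m(t)^2}}$ shows that $\bigl|\hat\mu_k^m(t)-\mu_k\bigr|\le\mathrm{UCB}_k^m(t)$ fails with probability at most $1/(6t^{2/3})$; the factor $t^{2/3}$ is tuned precisely so that the variance-to-radius-squared ratio is this polynomially decaying quantity, and this is the only place the heavy tail enters, since for $\varepsilon=1$ we have no exponential concentration and must pay with Chebyshev. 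I would then split $N_k(T)$ into the contribution from rounds where this confidence event holds for both arm $k$ and the optimal arm (the \emph{good} event) and from rounds where it fails (the \emph{bad} event).

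On the good event, the usual UCB argument gives that an agent pulls a suboptimal $k$ only if $2\,\mathrm{UCB}_k^m(t)\ge\Delta_k$; substituting the radius and using the sandwich $N_k(t)-\epsilon\le\hat n_k^m(t)\le N_k(t)+\epsilon$ from Proposition~\ref{prop:consensus_protocol} lets me invert this inequality into an upper bound $N_k(t)\lesssim \tfrac{6v(1+\epsilon^k)}{\Delta_k^2}t^{2/3}+O(M\epsilon)$. Evaluating at $t=T$ produces the main $\tfrac{6v(1+\epsilon^k)}{\Delta_k^2}T^{2/3}$ term together with the $M\epsilon$ correction. For the bad event I would sum the per-agent failure probabilities $1/(6t^{2/3})$ over the $M$ agents and over $t\le T$; this bounds the stray pulls and yields the remaining $MT^{2/3}$ contribution. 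Collecting the warm-up, good-event, and bad-event terms and multiplying by $\Delta_k$ gives the stated bound.

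The main obstacle I anticipate is that, unlike the logarithmic confidence radius of sub-Gaussian UCB, the radius here \emph{grows} like $t^{2/3}$, so the ``arm $k$ has been pulled enough'' threshold rises with $t$ and the arm is never permanently eliminated; the inversion step must therefore be carried out against the time-varying threshold and evaluated at the horizon, which is exactly what forces the $T^{2/3}$ rate rather than a logarithmic one. A secondary subtlety is that the variance bound in Proposition~\ref{prop:consensus_protocol} is stated in terms of the \emph{random} count $\hat n_k^m(t)$, so applying Chebyshev cleanly requires conditioning on the realized counts (or equivalently reading the bound as holding pointwise in $\hat n_k^m(t)$) and then propagating the $\epsilon,\epsilon^k$ corrections through the inversion without letting them inflate the rate.
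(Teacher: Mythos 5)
Your proposal is correct and follows essentially the same route as the paper's proof: the same three-event decomposition (confidence failure for the optimal arm, confidence failure for arm $k$, and the overlap condition $\Delta_k \le 2\,\mathrm{UCB}_k^m(t)$), the same use of the consensus proposition's unbiasedness, variance bound, and count sandwich $N_k(t)-\epsilon \le \hat n_k^m(t) \le N_k(t)+\epsilon$, Chebyshev-type concentration with the exponent tuned to $2/3$ to balance the threshold and failure-probability terms, and inversion of the UCB condition against the time-growing threshold evaluated at the horizon. The only difference is organizational — you bound the group count $N_k(T)$ directly, whereas the paper bounds each agent's regret $R_m(T)$ via a per-agent threshold on the shared count and then sums over the $M$ agents — and both accountings produce the identical final bound.
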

\begin{proof}
Consider any agent $m \in [M]$, and its associated regret $R_m(T)$ at time $T$. 
\begin{align}
    R_m(T) &= \sum_{k=1}^K \Delta_k \sum_{t=1}^T \mathbb P\left(A_{m, t} = k\right) \\ \intertext{For any set of constants $\eta_m^k \geq 0$, we have,}
    &\leq \sum_{k=1}^K \Delta_k \left(\eta_m^k + \sum_{t>\eta_m^k}^T \mathbb P\left(A_{m, t} = k, N_k(t) \geq \eta_m^k \right)\right)
\end{align}
We now examine when an arm $k$ is pulled by agent $m$. The arm is pulled when one of three events occurs (for all $\alpha \in (0, 1]$).
\begin{align}
    \text{Event (A): } &  \hat\mu_*^m(t-1) \leq \mu_* - \sqrt{\frac{6vt^\alpha}{M}\left(\frac{\hat{n}^m_*(t) + \epsilon^*}{\hat{n}^m_*(t)^2}\right)}. \\
    \text{Event (B): } &  \hat\mu_k^m(t-1) \geq \mu_k + \sqrt{\frac{6vt^\alpha}{M}\left(\frac{\hat{n}^m_k(t) + \epsilon^k}{\hat{n}^m_k(t)^2}\right)}.\\
    \text{Event (C): } &  \mu^* \leq \mu^k + 2\sqrt{\frac{6vt^\alpha}{M}\left(\frac{\hat{n}^m_k(t) + \epsilon^k}{\hat{n}^m_k(t)^2}\right)}.
\end{align}
From Proposition~\ref{prop:poly_conc_mean}, we know that by setting $\delta = t^{-\alpha}$, the probability of Event (A) or (B) occuring is,
\begin{equation}
    \mathbb P\left(\text{Event (A) or (B) occurs}\right) = 2\sum_{s=1}^T \frac{1}{s^{2\alpha}} \leq 2t^{1-2\alpha}. 
\end{equation}
Now, we see that for Event (C) to occur, the following must be true\cite{landgren2016distributed}.
\begin{align}
    \hat n_k(t) &< \ceil*{\epsilon + \frac{6vt^\alpha}{M}\left(\frac{\hat{n}^m_k(t) + \epsilon^k}{\Delta_k^2\hat{n}^m_k(t)^2}\right)} \\
    &\leq \ceil*{\epsilon + \frac{6vt^\alpha}{M}\left(\frac{1 + \epsilon^k}{\Delta_k^2}\right)} .
\end{align}
It then follows that Event (C) does not occur when $\hat n_k(t) \geq \ceil*{\epsilon + \frac{6vT^\alpha}{M}\left(\frac{1 + \epsilon^k}{\Delta_k^2}\right)}$. Hence, we can set $\eta_m^k = 1 + \epsilon + \frac{6vT^\alpha}{M}\left(\frac{1 + \epsilon^k}{\Delta_k^2}\right)$. Then, the regret can be simplified as the following.
\begin{align}
    R_m(T) &\leq \sum_{k=1}^K \Delta_k \left(1 + \epsilon + \frac{6vT^\alpha}{M}\left(\frac{1 + \epsilon^k}{\Delta_k^2}\right) + 2\sum_{t=1}^T t^{1-2\alpha}\right) \\
    &\leq \sum_{k=1}^K \Delta_k \left(1 + \epsilon + \frac{6vT^\alpha}{M}\left(\frac{1 + \epsilon^k}{\Delta_k^2}\right) + T^{2-2\alpha}\right) \\ \intertext{Setting $\alpha = 2/3$, we have,}
    &= \sum_{k=1}^K \Delta_k \left(1 + \epsilon + T^{2/3}\left(\frac{6v}{M}\left(\frac{1 + \epsilon^k}{\Delta_k^2}\right) + 1\right)\right).
\end{align}
Summing over all agents $m \in [M]$, we get the final result.
\begin{align}
    R_\mathcal G(T) \leq \sum_{k=1}^K \Delta_k \left(M + M\epsilon + T^{2/3}\left(6v\left(\frac{1 + \epsilon^k}{\Delta_k^2}\right) + M\right)\right).
\end{align}
\end{proof}

\subsection{Concentration Results for Robust Mean Estimators}
Here we restate the concentration results for robust estimators for heavy-tailed distributions. We refer the readers to~\cite{bubeck2013bandits} for full proofs.
\begin{lemma}[Online Trimmed Mean Estimator\cite{bubeck2013bandits}]
Consider $n$ copies $X_1, ..., X_n$ of a heavy-tailed random variable $X$ such that $\mathbb E[X] = \mu, \mathbb E[X^{1+\varepsilon}] \leq u$ for some $\varepsilon \in (0, 1]$. The online trimmed mean, for some $\delta \in (0, 1)$ is defined as
\begin{equation*}
    \hat{\mu}_O = \frac{1}{n}\sum_{i=1}^n X_i \mathbbm{1}\left\{|X_i| \leq \left(\frac{ui}{\log \delta^{-1}}\right)^{\frac{1}{1+\varepsilon}}\right\}.
\end{equation*}
With probability at least $1-\delta$, the following lower bound is true.
\begin{equation*}
    \hat\mu_O \leq \mu + 4u^{\frac{1}{1+\varepsilon}}\left(\frac{\log(\delta^{-1})}{n}\right)^{\frac{\varepsilon}{1+\varepsilon}}.
\end{equation*}
\end{lemma}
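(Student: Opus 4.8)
The plan is to follow the standard truncated-mean analysis, splitting the error into a deterministic truncation bias and a stochastic fluctuation term, and controlling the latter with Bernstein's inequality. Write $B_i = \left(ui/\log\delta^{-1}\right)^{1/(1+\varepsilon)}$ for the per-sample threshold and $Y_i = X_i\mathbbm{1}\{|X_i|\le B_i\}$, so that $\hat\mu_O = \tfrac1n\sum_{i=1}^n Y_i$. First I would decompose
\[
\hat\mu_O - \mu = \frac1n\sum_{i=1}^n\bigl(Y_i - \mathbb E[Y_i]\bigr) - \frac1n\sum_{i=1}^n \mathbb E\bigl[X_i\mathbbm{1}\{|X_i|>B_i\}\bigr],
\]
using $\mathbb E[Y_i] = \mu - \mathbb E[X_i\mathbbm{1}\{|X_i|>B_i\}]$. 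The second sum is the truncation bias and the first is the fluctuation; I will bound each by $2u^{1/(1+\varepsilon)}(\log\delta^{-1}/n)^{\varepsilon/(1+\varepsilon)}$, the bias deterministically and the fluctuation with probability at least $1-\delta$, so that the two contributions add to the claimed constant $4$.

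For the bias, the key estimate is the moment trick $\mathbb E[|X|\mathbbm{1}\{|X|>B\}] \le B^{-\varepsilon}\mathbb E[|X|^{1+\varepsilon}] \le uB^{-\varepsilon}$, which follows from $|X|\mathbbm{1}\{|X|>B\} \le |X|^{1+\varepsilon}B^{-\varepsilon}$ on the event $\{|X|>B\}$. Substituting $B_i$ gives $uB_i^{-\varepsilon} = u^{1/(1+\varepsilon)}(\log\delta^{-1}/i)^{\varepsilon/(1+\varepsilon)}$, so the bias is at most $\tfrac1n u^{1/(1+\varepsilon)}(\log\delta^{-1})^{\varepsilon/(1+\varepsilon)}\sum_{i=1}^n i^{-\varepsilon/(1+\varepsilon)}$. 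Since $\varepsilon/(1+\varepsilon)\le\tfrac12<1$, the partial sum is $O(n^{1/(1+\varepsilon)})$, bounded by $2n^{1/(1+\varepsilon)}$, which simplifies to exactly the target $2u^{1/(1+\varepsilon)}(\log\delta^{-1}/n)^{\varepsilon/(1+\varepsilon)}$.

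For the fluctuation I would apply Bernstein's inequality to the independent, non-identically distributed, centred variables $Z_i = Y_i - \mathbb E[Y_i]$. The two ingredients are the range bound $|Z_i|\le 2B_i\le 2B_n$ and the variance bound $\mathrm{Var}(Y_i)\le\mathbb E[Y_i^2]\le B_i^{1-\varepsilon}\mathbb E[|X_i|^{1+\varepsilon}]\le uB_i^{1-\varepsilon}$, where $|X|^{1-\varepsilon}\le B^{1-\varepsilon}$ on $\{|X|\le B\}$ is valid because $\varepsilon\le1$. Summing the variances gives $\sum_i uB_i^{1-\varepsilon} = u^{2/(1+\varepsilon)}(\log\delta^{-1})^{-(1-\varepsilon)/(1+\varepsilon)}\sum_i i^{(1-\varepsilon)/(1+\varepsilon)}\le u^{2/(1+\varepsilon)}(\log\delta^{-1})^{-(1-\varepsilon)/(1+\varepsilon)}n^{2/(1+\varepsilon)}$, while $B_n = (un/\log\delta^{-1})^{1/(1+\varepsilon)}$. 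Plugging the deviation level $s = 2u^{1/(1+\varepsilon)}(\log\delta^{-1})^{\varepsilon/(1+\varepsilon)}n^{1/(1+\varepsilon)}$ (that is, $n$ times the target) into Bernstein, both the variance term $V$ and the range term $B_n s$ scale as $u^{2/(1+\varepsilon)}n^{2/(1+\varepsilon)}(\log\delta^{-1})^{-(1-\varepsilon)/(1+\varepsilon)}$, so the exponent reduces to a constant multiple of $\log\delta^{-1}$.

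The main obstacle is precisely this last verification. Because the summands carry different truncation levels, the variance sum $\sum_i B_i^{1-\varepsilon}$ and the range $B_n$ must enter the Bernstein denominator with matching powers of $u$, $n$ and $\log\delta^{-1}$; one then checks that the exponent $s^2/\bigl(2(V + B_n s/3)\bigr)$ is at least $\log\delta^{-1}$. The powers of $n$ and $u$ cancel exactly and the powers of $\log\delta^{-1}$ combine to $1$, so the scaling is automatic; the only delicate point is tuning the numerical constant (using the one-sided range $B_n$ for the upper tail rather than $2B_n$) so that the exponent strictly dominates $\log\delta^{-1}$ while keeping the fluctuation bound within its $2u^{1/(1+\varepsilon)}(\log\delta^{-1}/n)^{\varepsilon/(1+\varepsilon)}$ budget. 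Adding the bias and fluctuation bounds then yields $\hat\mu_O \le \mu + 4u^{1/(1+\varepsilon)}(\log\delta^{-1}/n)^{\varepsilon/(1+\varepsilon)}$ with probability at least $1-\delta$.
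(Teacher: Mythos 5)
The paper contains no proof of this lemma: it is restated from \cite{bubeck2013bandits}, to which the reader is explicitly referred, and your argument is exactly that reference's proof---the bias--fluctuation decomposition, the truncation bound $\mathbb E\bigl[|X|\mathbbm{1}\{|X|>B_i\}\bigr]\le u B_i^{-\varepsilon}$, and Bernstein's inequality with variance proxy $\sum_i u B_i^{1-\varepsilon}$ and scale $B_n/3$. Your constant accounting is also sound: in units of $u^{1/(1+\varepsilon)}\left(\log(\delta^{-1})/n\right)^{\varepsilon/(1+\varepsilon)}$ the bias is at most $2$ and the fluctuation at most $\sqrt{2}+\tfrac13\le 2$, provided (as you correctly flag) one uses the one-sided, moment-generating-function form of Bernstein whose scale parameter is the bound $B_n$ on the uncentered truncated variables rather than the two-sided range $2B_n$, which is precisely what keeps the total within the stated constant $4$.
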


\begin{lemma}[Median-of-Means Estimator\cite{bubeck2013bandits}]Let $X_1, ..., X_n$ be copies of a heavy-tailed random variable $X$ such that $\mathbb E[X] =\mu$ and $\mathbb E[|X-\mu|^{1+\varepsilon}] \leq u$ for some $\varepsilon \in (0, 1]$. Let $k = \floor*{8\log(e^{1/8}/\delta) \land n/2}$ and $N = \floor*{n/k}$. Let the group-wise means be given by the following.
\begin{equation*}
    \hat\mu_1 = \frac{1}{N}\sum_{i=1}^N X_i, ..., \hat\mu_k = \frac{1}{N}\sum_{i=(k-1)N+1}^{kN} X_i.
\end{equation*}
Let $\hat\mu_M$ denote the median of $\hat\mu_1, ..., \hat\mu_k$. Then with probability at least $1-\delta$,
\begin{equation*}
    \hat\mu_M \leq \mu + (12v)^{\frac{1}{1+\varepsilon}}\left(\frac{16\log(e^{1/8}/\delta)}{n}\right)^{\frac{\varepsilon}{1+\varepsilon}}.
\end{equation*}
\end{lemma}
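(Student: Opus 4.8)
The plan is to prove this via the standard two-stage \emph{median-of-means} argument: first show that each individual block mean $\hat\mu_j$ concentrates around $\mu$ with some fixed constant probability, and then boost this to a high-probability ($1-\delta$) guarantee by taking the median over the $k$ blocks. The crucial structural point is that, since $\varepsilon$ may be strictly less than $1$, the variance of $X$ need not be finite, so the per-block concentration cannot invoke Chebyshev's inequality and must instead be extracted from the finite $(1+\varepsilon)$-moment $\mathbb E[|X-\mu|^{1+\varepsilon}]\le v$ directly. (I read the hypothesis moment bound as $v$, matching the quantity appearing in the conclusion.)

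First I would analyze a single block of size $N = \floor{n/k}$. Writing $S = \sum_{i=1}^N (X_i - \mu)$ as a sum of $N$ i.i.d.\ centered variables, I would invoke the von Bahr--Esseen inequality, which for exponent $p = 1+\varepsilon \in (1,2]$ gives $\mathbb E[|S|^{1+\varepsilon}] \le 2\sum_{i=1}^N \mathbb E[|X_i-\mu|^{1+\varepsilon}] \le 2Nv$. Applying Markov's inequality to $|S|^{1+\varepsilon}$ then yields, for any threshold $a>0$, the one-sided bound $\mathbb P(\hat\mu_j - \mu > a) \le \mathbb P(|S| > Na) \le 2v/(N^\varepsilon a^{1+\varepsilon})$. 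Choosing $a = (12v)^{1/(1+\varepsilon)}\bigl(16\log(e^{1/8}/\delta)/n\bigr)^{\varepsilon/(1+\varepsilon)}$, exactly the deviation appearing in the statement, and using $N \approx n/k$ with $k \approx 8\log(e^{1/8}/\delta)$ so that $16\log(e^{1/8}/\delta)/n \approx 2/N$, I would verify that $N^\varepsilon a^{1+\varepsilon} \ge 12\cdot 2^\varepsilon v \ge 8v$, hence each block overshoots the threshold with probability at most $1/4$.

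It remains to boost the constant-probability guarantee via the median. Let $W = \sum_{j=1}^k \mathbbm 1\{\hat\mu_j - \mu > a\}$ count the ``bad'' blocks; since the blocks use disjoint samples, $W$ is stochastically dominated by $\mathrm{Binomial}(k, 1/4)$. The median $\hat\mu_M$ exceeds $\mu + a$ only if at least half the blocks are bad, i.e.\ $W \ge k/2$. A Hoeffding bound gives $\mathbb P(W \ge k/2) \le \exp(-2k(1/2-1/4)^2) = \exp(-k/8)$, and substituting $k = \floor{8\log(e^{1/8}/\delta) \land n/2}$ drives this below $\delta$, which is precisely the claimed upper-tail control on $\hat\mu_M$. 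The main obstacle is the per-block step: the von Bahr--Esseen inequality is the essential substitute for variance-additivity and is what makes the bare $(1+\varepsilon)$-moment assumption sufficient; fixing its constant and tracking how the floors in $N$ and $k$ propagate into the final constants $12$ and $16$ is the only delicate bookkeeping, whereas the median-boosting step is routine.
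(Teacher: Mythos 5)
The paper itself contains no proof of this lemma: it is restated from \cite{bubeck2013bandits}, and the appendix explicitly defers all proofs of these concentration results to that reference. Your two-stage argument --- von Bahr--Esseen plus Markov to get a constant failure probability per block, then Hoeffding on the binomial count of bad blocks to boost to confidence $1-\delta$ --- is exactly the standard proof given in that source, and your bookkeeping is sound: with $a^{1+\varepsilon} = 12v\left(16\log(e^{1/8}/\delta)/n\right)^{\varepsilon}$, the bounds $k \le 8\log(e^{1/8}/\delta)$ and $N = \lfloor n/k\rfloor \ge n/(2k)$ give per-block probability $2v/(N^\varepsilon a^{1+\varepsilon}) \le 1/6 \le 1/4$, and the $e^{1/8}$ factor is precisely what absorbs the floor in $k$ so that $e^{-k/8}\le\delta$ (you are also right that the $u$ in the hypothesis is a typo for $v$). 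The one caveat is your closing claim that substituting $k$ ``drives this below $\delta$'': that holds only when the minimum defining $k$ is attained by $8\log(e^{1/8}/\delta)$; if instead $n/2 < 8\log(e^{1/8}/\delta)$, then $k = \lfloor n/2\rfloor$ and $e^{-k/8}$ can far exceed $\delta$, and no argument of this type can rescue the stated bound there (with $k\approx n/2$ the blocks have size $2$ and heavy-tailed samples cannot achieve this deviation at confidence $1-\delta$). This degenerate regime is a defect inherited from the lemma statement itself --- in the source and in this paper's algorithms the bound is only ever invoked with $n$ large relative to $\log(e^{1/8}/\delta)$ --- so it is not a flaw specific to your approach, but a complete write-up should state the restriction $8\log(e^{1/8}/\delta) \le n/2$ explicitly rather than assert the boosting step unconditionally.
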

\begin{lemma}[Catoni's estimator~\cite{catoni2012challenging}]Let $X_1, ..., X_n$ be copies of a heavy-tailed random variable $X$ such that $\mathbb E[X] =\mu$ and $\mathbb E[|X-\mu|^{2}] \leq u$. $\psi : \mathbb R \rightarrow \mathbb R$ be a continuous strictly increasing function such that
\begin{equation*}
    -\log(1-x+x^2/2) \leq \psi(x) \leq \log(1+x+x^2/2).
\end{equation*}
Let $\alpha_\delta = \sqrt{\frac{2\log(1/\delta}{n(v + \frac{2v\log(1/\delta)}{n-2\log(1/\delta}}}$. Let $\hat\mu_C$ be the unique value that satisfies the following equation.
\begin{equation*}
    \sum_{i=1}^n \psi(\alpha_\delta(X_i - \hat\mu_C) = 0.
\end{equation*}
Then, with probability at least $1-\delta$,
\begin{equation*}
    \hat\mu_C \leq \mu + 2\sqrt{\frac{v\log(\delta^{-1})}{n}}.
\end{equation*}
\end{lemma}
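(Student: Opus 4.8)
The plan is to follow Catoni's M-estimation argument, exploiting the monotonicity of the estimating equation to reduce the deviation bound to a one-sided exponential tail computation. Write $\alpha = \alpha_\delta$ and define the empirical functional
\begin{equation*}
r(\theta) = \frac{1}{n}\sum_{i=1}^n \psi\bigl(\alpha(X_i - \theta)\bigr).
\end{equation*}
Since $\psi$ is continuous and strictly increasing, $r$ is continuous and strictly decreasing in $\theta$, and the sandwiching hypotheses force $\psi$ to be unbounded in both directions, so $r$ has a unique zero, namely $\hat\mu_C$. The crucial consequence of monotonicity is that $\{\hat\mu_C > \theta\} = \{r(\theta) > 0\}$ for every fixed $\theta$; hence it suffices to show that at $\theta = \mu + 2\sqrt{v\log(\delta^{-1})/n}$ one has $r(\theta) \leq 0$ with probability at least $1-\delta$, which I attack by bounding $\mathbb P(r(\theta) > 0)$.

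First I would exponentiate the upper influence bound. From $\psi(x) \leq \log(1 + x + x^2/2)$ we get $\exp(\psi(x)) \leq 1 + x + x^2/2$, so writing $Y_i = \alpha(X_i - \theta)$ and taking expectations,
\begin{equation*}
\mathbb E\bigl[e^{\psi(Y_i)}\bigr] \leq 1 + \alpha(\mu - \theta) + \tfrac{\alpha^2}{2}\bigl(v + (\mu - \theta)^2\bigr),
\end{equation*}
using $\mathbb E[(X_i-\theta)^2] = \mathbb E[(X_i-\mu)^2] + (\mu-\theta)^2 \leq v + (\mu-\theta)^2$. Applying $1 + z \leq e^z$ and then Markov's inequality together with independence,
\begin{equation*}
\mathbb P\bigl(r(\theta) > 0\bigr) = \mathbb P\Bigl(e^{\sum_i \psi(Y_i)} > 1\Bigr) \leq \prod_{i=1}^n \mathbb E\bigl[e^{\psi(Y_i)}\bigr] \leq \exp\Bigl(n\alpha(\mu-\theta) + \tfrac{n\alpha^2}{2}\bigl(v + (\mu-\theta)^2\bigr)\Bigr).
\end{equation*}

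The final step is to substitute $\theta - \mu = \epsilon$ with $\epsilon = 2\sqrt{v\log(\delta^{-1})/n}$ and verify that the specific choice of $\alpha_\delta$ makes the exponent $-n\alpha\epsilon + \tfrac{n\alpha^2}{2}(v + \epsilon^2)$ at most $\log\delta = -\log(\delta^{-1})$. This is a quadratic in $\alpha$, and $\alpha_\delta$ is engineered so that, evaluated there, the exponent clears the threshold; the denominator term $2v\log(\delta^{-1})/(n - 2\log(\delta^{-1}))$ appearing inside $\alpha_\delta$ is precisely the correction that absorbs the $\epsilon^2$ contribution, which is why the hypothesis $n > 2\log(\delta^{-1})$ is implicit. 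I expect this algebraic verification — showing that the tuned $\alpha_\delta$ drives the Chernoff exponent below $-\log(\delta^{-1})$ at the stated deviation — to be the main obstacle, since it requires tracking the $O(\log(\delta^{-1})/n)$ correction terms carefully rather than only the leading $\sqrt{v\log(\delta^{-1})/n}$ order. The matching lower tail, and hence a two-sided bound, would follow by the symmetric argument using $\psi(x) \geq -\log(1 - x + x^2/2)$, but only the upper inequality is needed for the statement as written.
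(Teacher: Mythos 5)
The paper never actually proves this lemma: the appendix restates it from Catoni's work and explicitly defers all proofs of the robust-estimator concentration results to the cited references. So the relevant comparison is with Catoni's original argument, and your proposal is precisely that argument: the monotone reduction $\{\hat\mu_C > \theta\} = \{r(\theta) > 0\}$, the Chernoff--Markov step through $e^{\psi(x)} \leq 1 + x + x^2/2$, and the tuned $\alpha_\delta$. The skeleton is sound, and the algebra you defer does close, most cleanly if you first prove the deviation $\epsilon_0 = \sqrt{2vL/(n-2L)}$ with $L = \log(\delta^{-1})$ rather than substituting the stated bound directly: the given $\alpha_\delta$ satisfies $\alpha_\delta^2 = 2L(n-2L)/(n^2 v)$, whence $n\alpha_\delta\epsilon_0 = 2L$ and $\tfrac{n\alpha_\delta^2}{2}(v + \epsilon_0^2) = L$, so the Chernoff exponent equals $-L$ exactly and $\mathbb P(\hat\mu_C > \mu + \epsilon_0) \leq \delta$.

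The one genuine gap is your claim that only $n > 2\log(\delta^{-1})$ is implicitly required. That condition merely makes $\alpha_\delta$ well defined; to pass from $\epsilon_0$ to the stated bound you need $\epsilon_0 \leq 2\sqrt{vL/n}$, which holds if and only if $n \geq 4L$. Equivalently, if you substitute $\epsilon = 2\sqrt{vL/n}$ directly as you propose, the requirement that the exponent be at most $-L$ reduces, with $x = 2L/n$, to $2\sqrt{2(1-x)} - (1-x)(1+2x) \geq 1$, which holds precisely for $x \leq 1/2$ and fails for $2L < n < 4L$. So the verification you flagged as the main obstacle genuinely breaks in that regime, and the hypothesis $n \geq 4\log(\delta^{-1})$ (standard in Catoni's and Bubeck et al.'s statements, silently omitted in the paper's restatement) must be added. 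Two minor points: the sandwich does not force $\psi$ to be unbounded, since $-\log(1-x+x^2/2) \to -\infty$ and $\log(1+x+x^2/2) \to +\infty$ as $x \to +\infty$, so the envelope is vacuous at infinity; existence of the root instead follows from $\psi(0)=0$, strict monotonicity, and the intermediate value theorem (or from the uniqueness already hypothesized in the statement). Also, the $u$ in $\mathbb E[|X-\mu|^2] \leq u$ and the $v$ appearing in $\alpha_\delta$ and the conclusion are the same constant --- a typo carried over from the paper.
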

\subsection{Proofs for Message-Passing Protocol}
\begin{lemma}
For graph $\mathcal G$ and agent $m$, consider a subgraph $\mathcal G_\gamma(m)$ that includes all agents that have a shortest path of length at most $\gamma$ from agent $m$, along with the corresponding paths. Let $S^m_k(t)$ denote the set of all reward samples (across all agents) possessed by agent $m$ for arm $k$ at time $t$, and $N_{\mathcal G_\gamma(m)}^k(t)$ denote the total number of times arm $k$ has been pulled until time $t$ across all agents in $\mathcal G_\gamma(m)$. Then, we have, for all $k \in [K], m \in [M]$,
\begin{align*}
    N_{\mathcal G_\gamma(m)}^k(t) \geq \left|S^m_k(t)\right| \geq \max\left\{0, N_{\mathcal G_\gamma(m)}^k(t) + (M-1) (1-\gamma)\right\}.
\end{align*}
\label{lemma:broadcast_delay}
\end{lemma}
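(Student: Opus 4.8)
The plan is to express $|S^m_k(t)|$ exactly as a sum of \emph{delayed} per-agent pull counts and then compare that sum, term by term, against $N^k_{\mathcal{G}_\gamma(m)}(t)$, which is the same sum but with the delays removed. First I would pin down the message-passing semantics. By the forwarding rule, a message originating at $v$ is discarded after $\gamma$ hops, so agent $m$ ever sees a sample from $v$ only when $d(m,v)\le\gamma$, i.e. only when $v\in\mathcal{G}_\gamma(m)$; this already confines all of $m$'s information to the subgraph $\mathcal{G}_\gamma(m)$. Moreover, a reward obtained by $v$ from arm $k$ at round $s$ first reaches $m$ at round $s+d(m,v)-1$ (along a shortest path), so at time $t$ agent $m$ holds precisely those samples of $v$ produced at rounds $s\le t-d(m,v)+1$, and no later ones. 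Writing $n^v_k(s)$ for the number of pulls of arm $k$ by agent $v$ up to round $s$ (with $n^v_k(s)=0$ for $s\le 0$, and the self-term $v=m$ contributing its full count $n^m_k(t)$ since it incurs no delay), and identifying each sample with the (agent, round) pair that produced it so that copies arriving via multiple paths are counted once, I get the identity
\[ |S^m_k(t)| = \sum_{v \in \mathcal{G}_\gamma(m)} n^v_k\big(t - d(m,v) + 1\big), \qquad N^k_{\mathcal{G}_\gamma(m)}(t) = \sum_{v \in \mathcal{G}_\gamma(m)} n^v_k(t). \]

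For the upper bound, each $n^v_k(\cdot)$ is nondecreasing and $t-d(m,v)+1\le t$ for every $v$, so the first sum is dominated termwise by the second, giving $|S^m_k(t)|\le N^k_{\mathcal{G}_\gamma(m)}(t)$. For the lower bound I would control the per-agent gap $n^v_k(t)-n^v_k(t-d(m,v)+1)$: this counts the pulls of arm $k$ by $v$ in the window of rounds $\{t-d(m,v)+2,\dots,t\}$, whose length is $d(m,v)-1\le\gamma-1$; since each agent pulls exactly one arm per round, the gap is at most $\gamma-1$. As there are at most $M-1$ agents $v\ne m$ in $\mathcal{G}_\gamma(m)$ (the self-term gap being $0$), summing yields $N^k_{\mathcal{G}_\gamma(m)}(t)-|S^m_k(t)|\le(M-1)(\gamma-1)$, i.e. $|S^m_k(t)|\ge N^k_{\mathcal{G}_\gamma(m)}(t)+(M-1)(1-\gamma)$. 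Combining with the trivial bound $|S^m_k(t)|\ge 0$ produces the stated maximum.

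The termwise comparisons are routine; the step demanding the most care is the delay bookkeeping — establishing exactly which rounds' samples have reached $m$ by time $t$, handling the $s\le 0$ and self-agent boundary cases, and justifying that the relevant window has length $d(m,v)-1$ rather than $d(m,v)$, since an off-by-one here feeds directly into the additive $(M-1)(1-\gamma)$ slack. A secondary, benign point is that I replace the exact neighborhood size $|\mathcal{G}_\gamma(m)|-1$ by the looser $M-1$; this only weakens the lower bound and is harmless, since the neighborhood count is what ultimately gets absorbed into the clique- and independence-number regret bounds downstream.
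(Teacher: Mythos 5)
Your proposal is correct and follows essentially the same route as the paper's proof: decompose $|S^m_k(t)|$ into per-agent pull counts truncated at the delayed times $t-d(m,v)+1$, bound each agent's gap by the window length $d(m,v)-1\leq\gamma-1$, and sum over at most $M-1$ other agents. Your delay bookkeeping is in fact slightly more careful than the paper's (which loosely describes the window as running from $t-d(m,m')+1$ to $t$ while still using the correct $d(m,m')-1$ bound), but the argument is the same.
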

\begin{proof}
Let $S^m_k(t)$ denote the set of all reward samples (across all agents) possessed by agent $m$ for arm $k$ at time $t$. Similarly, let $P_m^k(t)$ denote the set of reward samples obtained by agent $m$ for its own pulls of arm $k$ until time $t$. We know, then that $P_m^k(t) = P_m^k(t-1) \text{ if arm $k$ was pulled at time $t$, and }P_m^k(t) = P_m^k(t-1) \cup \{X_{m, t}\} \text{ otherwise.}$

Additionally, any message from an agent $m' \in \mathcal G$ takes $d(m, m') - 1$ iterations to reach agent $m$. Therefore:
\begin{align}
    S^m_k(t) = P_m^k(t) \cup \left\{ \bigcup_{m' \in \mathcal G \setminus \{m\}} P_m^k\left(t-d(m',m)+1\right)\right\}.
\end{align}
Note that $P_m^k(t)$ and $P_{m'}^{k'}(t')$ are disjoint for all $m \neq m', k, k', t, t'$. Let $n(S)$ denote the cardinality of $S$. Then,
\begin{align}
    n\left(S^m_k(t)\right) = n\left(P_m^k(t)\right) + \left\{ \sum_{m' \in \mathcal G \setminus \{m\}} n\left(P_m^k\left(t-d(m',m)+1\right)\right)\right\}.
\end{align}
Now, in the iterations $t-d(m, m')+1$ to $t$, agent $m'$ can pull arm $k$ at most $d(m, m') - 1$ times and at least 0 times. Therefore,
\begin{align}
    N_{\mathcal G_\gamma(m)}^k(t) \geq n\left(S^m_k(t)\right) &\geq \max\left\{0, N_{\mathcal G_\gamma(m)}^k(t) - \sum_{m' \in \mathcal G_\gamma(m) \setminus \{m\}}(d(m, m')-1)\right\}\\
    &\geq \max\left\{0, N_{\mathcal G_\gamma(m)}^k(t) + |\mathcal G_\gamma(m)|(1-\gamma)\right\}
\end{align}
The final statement is obtained by the fact that $|\mathcal G_\gamma(m)| \leq M-1$.
\end{proof}
\begin{theorem}
\textsc{Decentralized MP-UCB} with message life parameter $\gamma$ and mean estimator $\hat\mu_R(n, \delta)$ that satisfies Assumption~\ref{ass:robust_mean_estimator} with constants $c$ and $v$ obtains a total regret after $T$ iterations that follows the following bound.
\begin{align*}
    R_\mathcal G(T) \leq 4Cv^{\frac{1}{\varepsilon}}\chi\left(\overline{\mathcal G}_\gamma\right)\left(\sum_{k : \Delta_k > 0} (2\Delta_k)^{-1/\varepsilon}\right)\ln T + \left(3M + \gamma\chi\left(\bar{\mathcal G_\gamma}\right)\left(M-1\right)\right)\left(\sum_{k : \Delta_k > 0}\Delta_k\right).
\end{align*}
Here, $C$ is a positive constant independent of $T, K, M$, and $\chi(\cdot)$ refers to the chromatic number.
\end{theorem}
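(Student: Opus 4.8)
\textit{Proof proposal.} The plan is to exploit the clique covering of $\mathcal G_\gamma$ --- equivalently, a proper coloring of $\overline{\mathcal G}_\gamma$ using $\chi(\overline{\mathcal G}_\gamma) = \bar\chi(\mathcal G_\gamma)$ colors, which is exactly the clique cover number --- to split the group regret into contributions from disjoint cliques, and then to treat each clique as a single ``super-agent'' whose effective sample count is the collective count of the clique. Fix a minimum clique cover $\bm C_\gamma$ of $\mathcal G_\gamma$; each agent lies in exactly one clique $\mathcal C$, and any two agents of $\mathcal C$ are within distance $\gamma$ in $\mathcal G$, so under the message-passing protocol of lifetime $\gamma$ every agent eventually receives each clique-mate's samples with delay at most $\gamma-1$. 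Writing $N_{\mathcal C}^k(t)$ for the total pulls of arm $k$ by agents of $\mathcal C$ up to time $t$, the clique-restricted form of Lemma~\ref{lemma:broadcast_delay} gives, for every $m \in \mathcal C$,
\[
N_{\mathcal C}^k(t) \geq |S^m_k(t)| \geq \max\{0,\, N_{\mathcal C}^k(t) - (|\mathcal C|-1)(\gamma-1)\}.
\]
This sandwiches every agent's confidence width around the clique's collective count up to the additive delay penalty $(|\mathcal C|-1)(\gamma-1)$, and is the crux of the argument.

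Next I would run the standard optimism decomposition inside each clique. A pull of a suboptimal arm $k$ by agent $m$ at time $t$ forces at least one of: (A) the optimal arm's confidence bound underestimates $\mu^*$; (B) agent $m$'s robust estimate for arm $k$ exceeds $\mu_k + \mathrm{UCB}_k^{(m)}(t)$; or (C) the confidence is still too wide, $2\,\mathrm{UCB}_k^{(m)}(t) \geq \Delta_k$. Since the algorithm invokes the estimator at $\delta = 1/t^2$, Assumption~\ref{ass:robust_mean_estimator} bounds $\mathbb P(A)$ and $\mathbb P(B)$ each by $t^{-2}$, so summing over $t$ and over the $|\mathcal C|$ agents contributes only a constant additive term per agent. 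The essential point is Event (C): solving $2\,\mathrm{UCB}_k^{(m)}(t) \geq \Delta_k$ for the sample count forces $|S^m_k(t)| \leq \beta_k \ln t$ with $\beta_k = \Theta\big(v^{1/\varepsilon}(2/\Delta_k)^{(1+\varepsilon)/\varepsilon}\big)$, and the lower bound above then converts this into $N_{\mathcal C}^k(t) \leq \beta_k \ln T + (|\mathcal C|-1)(\gamma-1)$. Hence once the collective count passes this threshold, Event (C) is ruled out simultaneously for every agent in the clique --- precisely where the delay term enters and where treating the clique collectively pays off.

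To assemble the bound I would set $\eta_k = \beta_k \ln T + (|\mathcal C|-1)(\gamma-1)$ and split $N_{\mathcal C}^k(T)$ into pulls made while $N_{\mathcal C}^k(t) < \eta_k$ (at most $\eta_k$, plus an overshoot of at most $|\mathcal C|$ from several agents pulling $k$ in the crossing round) and pulls made while $N_{\mathcal C}^k(t) \geq \eta_k$ (which require (A) or (B), hence $O(|\mathcal C|)$ in expectation). This yields $\mathbb E[N_{\mathcal C}^k(T)] \leq \beta_k \ln T + (|\mathcal C|-1)(\gamma-1) + O(|\mathcal C|)$. Multiplying by $\Delta_k$, using $\Delta_k \beta_k = \Theta\big(v^{1/\varepsilon}(2\Delta_k)^{-1/\varepsilon}\big)$, and summing over the $\chi(\overline{\mathcal G}_\gamma)$ cliques (with $\sum_{\mathcal C}|\mathcal C| = M$ and $|\mathcal C|-1 \leq M-1$) produces the leading term $4Cv^{1/\varepsilon}\chi(\overline{\mathcal G}_\gamma)\big(\sum_{k}(2\Delta_k)^{-1/\varepsilon}\big)\ln T$ and collects the lower-order terms into $\big(3M + \gamma\,\chi(\overline{\mathcal G}_\gamma)(M-1)\big)\sum_{k}\Delta_k$, the constants being absorbed into $C$.

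The main obstacle is the interaction between communication delay and simultaneous decision-making inside a clique: because the $|\mathcal C|$ agents act in parallel on slightly stale and mutually differing views of the shared sample pool, the single-agent UCB count cannot be reused directly. The clique-restricted Lemma~\ref{lemma:broadcast_delay} resolves this by uniformly sandwiching every agent's count around the collective count, at the price of the additive $\gamma$-dependent penalty. A secondary care point is that the robust estimator acts on a random, adaptively-sized set $S^m_k(t)$; this is handled in the usual way by a union bound over the sample size together with the fact that the rewards in $S^m_k(t)$ are i.i.d.\ draws from $\nu_k$ regardless of which agent produced them.
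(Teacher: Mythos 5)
Your proposal is correct and follows essentially the same route as the paper's proof: decompose the group regret over a minimum clique cover of $\mathcal G_\gamma$, use the broadcast-delay lemma to sandwich each agent's sample count $|S^m_k(t)|$ around the clique's collective pull count (incurring the additive $(|\mathcal C|-1)(\gamma-1)$ penalty), run the three-event optimism decomposition with the rate assumption at $\delta = 1/t^2$ (union-bounded over the random sample size) to kill events (A) and (B), and rule out event (C) once the collective count exceeds the $\Theta\bigl(v^{1/\varepsilon}\Delta_k^{-(1+\varepsilon)/\varepsilon}\ln T\bigr)$ threshold, before summing over the $\chi(\overline{\mathcal G}_\gamma)$ cliques. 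The only differences are cosmetic: you state the delay penalty with the tighter clique size $|\mathcal C|-1$ where the paper immediately relaxes to $M-1$, and you track the per-round failure probability as $t^{-2}$ where the paper's union bound yields $2/t^3$; neither affects the final bound.
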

\begin{proof}
It can be seen that $A_\gamma$ hence contains edges between two vertices $i$ and $j$ if there exists at least one path of length at most $\gamma$ between them. Let a clique covering of $\mathcal G_\gamma$ be given by $\bm{C}_\gamma$. We begin by decomposing the group regret.
\begin{align}
    R_{\mathcal G}(T) &= \sum_{m=1}^M R_m(T) \\
    &\leq \sum_{\mathcal C \in \bm{C}_\gamma} \sum_{m \in \mathcal C} \sum_{k=1}^K \Delta_{k}\mathbb E[n_m^k(T)] \\
    &= \sum_{\mathcal C \in \bm{C}_\gamma}\sum_{k=1}^K \Delta_k \left(\sum_{m\in\mathcal C}\sum_{t=1}^T \mathbb P\left(A_{m, t} = k\right)\right) \\ \intertext{Consider the cumulative regret $R_{\mathcal C}(T)$ within the clique $\mathcal C$. For some time $T^k_{\mathcal C}$, assume that each agent has pulled arm $k$ for $\eta_m^k$ trials, where $\eta_{\mathcal C}^k = \sum_{m \in \mathcal C}\eta_m^k$. Then,}
    R_{\mathcal C}(T)&\leq \sum_{k=1}^K \Delta_k \left(\eta_{\mathcal C}^k + \sum_{m \in \mathcal C} \sum_{t=T^k_\mathcal C}^T \mathbb P\left(A_{m, t} = k, N_k^\mathcal C(t) \geq \eta^k_\mathcal C \right)\right). \label{eqn:regret_bare}
\end{align}
Here $N_k^{\mathcal C}(t)$ denotes the number of times arm $k$ has been pulled by any agent in $\mathcal C$. We now examine the probability of agent $m \in \mathcal C$ pulling arm $k$. Note that an arm is pulled when one of three events occurs:
\begin{align}
    \text{Event (A): } &  \hat\mu_*^m(t-1) \leq \mu_* - v^{\frac{1}{1+\varepsilon}}\left(\frac{2c\ln t}{|S_m^*(t)|}\right)^{\frac{\varepsilon}{1+\varepsilon}} \\
    \text{Event (B): } &  \hat\mu_k^m(t-1) \geq \mu_k + v^{\frac{1}{1+\varepsilon}}\left(\frac{2c\ln t}{|S^m_k(t)|}\right)^{\frac{\varepsilon}{1+\varepsilon}} \\
    \text{Event (C): } &  \mu^* \leq \mu^k + 2v^{\frac{1}{1+\varepsilon}}\left(\frac{2c\ln t}{|S^m_k(t)|}\right)^{\frac{\varepsilon}{1+\varepsilon}} 
\end{align}
Now, let us examine the occurence of event $(C)$:
\begin{align}
  \Delta_k \leq 2v^{\frac{1}{1+\varepsilon}}\left(\frac{2c\ln t}{|S^m_k(t)|}\right)^{\frac{\varepsilon}{1+\varepsilon}} \\
  \implies |S^m_k(t)| \leq 2cv^{\frac{1}{\varepsilon}}\ln t\left(\frac{2}{\Delta_k}\right)^{1+\frac{1}{\varepsilon}} 
\end{align}
We know that for the subgraph $\mathcal C$, Lemma~\ref{lemma:broadcast_delay} holds for each $m \in \mathcal C$ with delay $\gamma$. Hence, $N_m^k(t) \geq N^{\mathcal C}_k(t) - (M - 1)(1-\gamma)$ for all $t$. Therefore, if we set $\eta^k_\mathcal C = \ceil*{2cv^{\frac{1}{\varepsilon}}\ln t\left(\frac{2}{\Delta_k}\right)^{1+\frac{1}{\varepsilon}} + (M - 1)(\gamma - 1)}$, we know that event $(C)$ will not occur. Additionally, using the union bound over $N^*_m(t)$ and $N^k_m(t)$, and Assumption~\ref{ass:robust_mean_estimator}, we have:
\begin{align}
    \mathbb P(\text{Event (A) or (B) occurs}) \leq 2\sum_{s=1}^t \frac{1}{s^4} \leq \frac{2}{t^3}.
\end{align}
Combining all probabilities, and inserting in Equation~(\ref{eqn:regret_bare}), we have,
\begin{align}
    R_{\mathcal C}(T)&\leq \sum_{k=1}^K \Delta_k \left(\eta_{\mathcal C}^k + \sum_{m \in \mathcal C} \sum_{t=T^k_\mathcal C}^T \mathbb P\left(A_{m, t} = k, N_k^\mathcal C(t) \geq \eta^k_\mathcal C \right)\right)\\
    &\leq \sum_{k=1}^K \Delta_k \left(\ceil*{2cv^{\frac{1}{\varepsilon}}\ln t\left(\frac{2}{\Delta_k}\right)^{1+\frac{1}{\varepsilon}} + (M - 1)(\gamma - 1)} + \sum_{m \in \mathcal C} \sum_{t=1}^T \frac{2}{t^3}\right) \\
    &\leq \sum_{k=1}^K \Delta_k \left(\ceil*{2cv^{\frac{1}{\varepsilon}}\ln t\left(\frac{2}{\Delta_k}\right)^{1+\frac{1}{\varepsilon}} + (M - 1)(\gamma - 1)} + 4|\mathcal C|\right) \\
    &\leq \sum_{k=1}^K \Delta_k \left(2cv^{\frac{1}{\varepsilon}}\ln t\left(\frac{2}{\Delta_k}\right)^{1+\frac{1}{\varepsilon}} + (M - 1)(\gamma - 1) + 1 + 4|\mathcal C|\right).
\end{align}
We can now substitute this result in the total regret.
\begin{align}
    R_\mathcal G(T) &\leq \sum_{\mathcal C \in \bm{C}} R_\mathcal C(T) \\
    &\leq \sum_{\mathcal C \in \bm{C}}\sum_{k=1}^K \Delta_k \left(2cv^{\frac{1}{\varepsilon}}\ln t\left(\frac{2}{\Delta_k}\right)^{1+\frac{1}{\varepsilon}} + (M - 1)(\gamma - 1) + 1 + 4|\mathcal C|\right) \\
    &= \sum_{k=1}^K \frac{4cv^{\frac{1}{\varepsilon}}\chi\left(\bar{\mathcal G_\gamma}\right)}{(2\Delta_k)^{1/\varepsilon}}\log T + \left(3M + \gamma\chi\left(\bar{\mathcal G_\gamma}\right)\left(M-1\right)\right)\left(\sum_{k=1}^K\Delta_k\right).
\end{align}
\end{proof}
\begin{theorem}
For any horizon $T > K$, and robust mean estimator that satisfies Assumption~\ref{ass:robust_mean_estimator} with constants $c$ and $v$, the group regret under \textsc{Centralized MP-UCB} is bounded by the following.
\begin{align*}
    R_\mathcal G(T) \leq \alpha(\mathcal G_\gamma)\left(\sum_{k:\Delta_k>0}\Delta_k^{-1/\varepsilon}\right)\left(2^{1+1/\varepsilon}cv^{1/\varepsilon}\right)\ln T + \left(\alpha(\mathcal G_\gamma)M\gamma + M + \alpha(\mathcal G_\gamma)\right)\left(\sum_{k: \Delta_k > 0} \Delta_k\right)
\end{align*}
For $T \leq K, R_\mathcal G(T) \leq MK$. $\alpha(\cdot)$ denotes the independence number.
\end{theorem}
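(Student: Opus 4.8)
The plan is to reuse the single-agent UCB machinery from the \textsc{Decentralized MP-UCB} proof, but to organize the accounting around the leader--follower partition so that the clique-covering number $\bar\chi(\mathcal G_\gamma)$ is replaced by the independence number $\alpha(\mathcal G_\gamma)$. First I would fix the partition: let $V'$ be the chosen independent set of $\mathcal G_\gamma$ (the leaders), which has at most $\alpha(\mathcal G_\gamma)$ elements, and for each follower $v$ use its assigned leader $l(v)$ to form the sets $F(v^*) = \{v : l(v) = v^*\} \cup \{v^*\}$. By maximality of $V'$ every agent is either a leader or $\mathcal G_\gamma$-adjacent to one, so the $F(v^*)$ partition $V$ and it suffices to bound $\sum_{v^* \in V'}\sum_{v \in F(v^*)} R_v(T)$, with the number of summands over $v^*$ being at most $\alpha(\mathcal G_\gamma)$.

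Second, I would establish the follower-to-leader transfer $R_v(T) \le R_{l(v)}(T) + \gamma\big(\sum_{k:\Delta_k>0}\Delta_k\big)$ for every follower $v$. After the initial $d(v,l(v)) \le \gamma$ rounds the follower merely replays the leader's most recent observed action, so the number of suboptimal pulls of any arm $k$ by the follower exceeds that of the leader by at most $\gamma$ (the startup rounds plus pulls ``in flight'' across the copying delay). Summing over $F(v^*)$ gives $\sum_{v \in F(v^*)} R_v(T) \le |F(v^*)|\,R_{v^*}(T) + (|F(v^*)|-1)\gamma\sum_{k}\Delta_k$, and since $\sum_{v^*}|F(v^*)| = M$, the collected delay terms contribute the additive $M\gamma$-type remainder.

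Third, I would bound each leader's regret $R_{v^*}(T)$ by the standard UCB decomposition, exploiting that the leader aggregates the observations of its entire $\gamma$-neighborhood. Exactly as in the decentralized argument, a pull of a suboptimal arm $k$ by $v^*$ forces one of events (A), (B), (C); choosing $\delta = 1/t^2$ in Assumption~\ref{ass:robust_mean_estimator} and a union bound gives $\mathbb P(\text{(A) or (B)}) \le 2/t^3$, whose sum over $t$ is an $O(1)$ additive constant, while event (C) is impossible once $|S^{v^*}_k(t)| > 2cv^{1/\varepsilon}(\ln t)(2/\Delta_k)^{1+1/\varepsilon}$. Invoking Lemma~\ref{lemma:broadcast_delay} on the neighborhood of $v^*$ with delay $\gamma$ converts this threshold on the leader's sample count into a bound on the total neighborhood pull count $N^k$, picking up the $(M-1)(\gamma-1)$ slack; multiplying by $\Delta_k$ and using the identity $\Delta_k(2/\Delta_k)^{1+1/\varepsilon} = 2^{1+1/\varepsilon}\Delta_k^{-1/\varepsilon}$ yields the leading $\ln T$ term per neighborhood, crucially independent of $|F(v^*)|$ because the follower copies cancel against the $|F(v^*)|$ prefactor from the transfer step.

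Finally, summing over the at most $\alpha(\mathcal G_\gamma)$ leaders produces the stated $\alpha(\mathcal G_\gamma)$ prefactor on both the $\ln T$ term and the additive remainder, and the $T \le K$ case follows from the round-robin initialization giving $R_\mathcal G(T) \le MK$. I expect the main obstacle to be the third step: one must carefully disentangle the leader's own pulls from the follower copies that feed back into $|S^{v^*}_k(t)|$, since the copying simultaneously accelerates the leader's estimates and, through the delay, injects $\gamma$-dependent slack that has to be tracked in the correct direction of Lemma~\ref{lemma:broadcast_delay} so that the $|F(v^*)|$ dependence cancels cleanly and only $\alpha(\mathcal G_\gamma)$ survives.
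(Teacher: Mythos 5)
Your proposal is correct in substance, but it organizes the accounting differently from the paper's proof of this theorem. The paper never bounds the leader's individual regret here: it defines the good event directly on the \emph{total cluster pull count}, $N_k^m(t)+\sum_{f\in F(m)}N_k^f(t)\ge\beta_k^m$, rewrites each follower's pull probabilities as time-shifted copies of the leader's via $a_f(t)=a_m\left(t-d(m,f)\right)$, and kills event (C) once the cluster count exceeds $\beta_k^m=\left\lceil cv^{1/\varepsilon}\ln T\left(2/\Delta_k\right)^{1+1/\varepsilon}+\gamma\left(|\mathcal N_\gamma(m)|+1\right)\right\rceil$. In this arrangement the $\ln T$ threshold enters exactly once per cluster, and the factor $(|F(m)|+1)$ multiplies only the $O(1)$ sum of the $2/t^3$ failure probabilities. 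You instead first prove the leader's regret scaled down by $1/\left(|F(v^*)|+1\right)$ and then multiply back up by the cluster size plus the $\gamma$ transfer cost; that is precisely how the paper proves the individual-regret Corollary, so your route in effect derives the theorem from the corollary. This buys modularity (the per-agent bound comes for free), at the cost of slightly larger lower-order terms (dividing and re-multiplying picks up extra $M\gamma$-type slack), which is harmless for the stated bound since $\alpha(\mathcal G_\gamma)\ge 1$.

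One step does need tightening: Lemma~\ref{lemma:broadcast_delay} by itself cannot produce the $1/\left(|F(v^*)|+1\right)$ cancellation you invoke in your third step. That lemma lower-bounds $|S_k^{v^*}(t)|$ by the \emph{total} neighborhood pull count, but says nothing about how many of those pulls are the leader's own; used alone, the threshold applies to $N_{v^*}^k(t)$ undivided, and after re-multiplying by $|F(v^*)|+1$ the group bound degrades to the no-cooperation rate $O(MK\Delta^{-1/\varepsilon}\ln T)$. The cancellation actually comes from the copying relation of your second step, $N_f^k(t)\ge N_{v^*}^k(t)-d(v^*,f)$, summed over $f\in F(v^*)$ to give $N^{\mathcal G_\gamma(v^*)}_k(t)\ge\left(|F(v^*)|+1\right)N_{v^*}^k(t)-\sum_{f\in F(v^*)}d(v^*,f)$, which is exactly the inequality the paper isolates in the corollary proof. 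You flag this disentangling as the main obstacle and the needed fact already appears in your second step, so this is a bookkeeping gap rather than a missing idea, but the formal proof must route through the copying relation, not Lemma~\ref{lemma:broadcast_delay}, at that point.
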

\begin{proof}
We begin by decomposing the group regret. 
\begin{align}
    R_\mathcal G(T) &= \sum_{m \in \mathcal G} R_m(T) \\
    &= \sum_{m \in V'} \left(R_m(T) + \sum_{f \in F(m)} R_f(T)\right) \\
    &= \sum_{t=1}^T\sum_{k: \Delta_k > 0} \sum_{m \in V'} \Delta_k \left(\mathbb P\left\{A_{m, t} = k \right\} + \sum_{f \in F(m)}  \mathbb P\left\{a_f(t) = k \right\}\right) \\ \intertext{For constants $\eta_k^m, \eta_k^f > 0, m \in V', f\in F(m)$, let $U_m^k$ be the event when $\eta_k^m + \sum_{f \in F(m)}\eta_k^f \leq N_k^m(t) + \sum_{f \in F(m)} N_k^f(t)$. Then we have,}
    &= \sum_{k: \Delta_k > 0}\sum_{m \in V'} \Delta_k\left(\eta_k^m + \sum_{f \in F(m)}\eta_k^f +\sum_{t=1}^T \mathbb P\left\{A_{m, t} = k ; U_k^m\right\} + \sum_{f \in F(m)}\sum_{t=1}^T \mathbb P\left\{a_f(t) = k ; U_k^m\right\}\right) \\ \intertext{We know that $a_f(t) = a_m(t-d(f,m))$. Let $\beta_k^m = \eta_k^m + \sum_{f \in F(m)}\eta_k^f$ for brevity. Therefore,}
    &\leq \sum_{k: \Delta_k > 0}\sum_{m \in V'} \Delta_k\left(\beta_k^m +\sum_{t=1}^T \mathbb P\left\{A_{m, t} = k ; U_k^m\right\}  + \sum_{t=1}^{T-d(m,f)} \mathbb P\left\{A_{m, t} = k ; U_k^m\right\}\right)+ \sum_{m \in V'}\sum_{f \in F(m)}d(m,f) \\
    &\leq \sum_{k: \Delta_k > 0}\sum_{m \in V'} \Delta_k\left(\beta_k^m + (|F(m)|+1)\left(\sum_{t=1}^T \mathbb P\left\{A_{m, t} = k  ; U_k^m\right\}\right)\right)
\end{align}
We see that a suboptimal arm is pulled when one of three events occurs.
\begin{align}
    \text{Event (A): } &  \hat\mu_*^m(t-1) \leq \mu_* - v^{\frac{1}{1+\varepsilon}}\left(\frac{2c\ln t}{|S_m^*(t)|}\right)^{\frac{\varepsilon}{1+\varepsilon}} \\
    \text{Event (B): } &  \hat\mu_k^m(t-1) \geq \mu_k + v^{\frac{1}{1+\varepsilon}}\left(\frac{2c\ln t}{|S^m_k(t)|}\right)^{\frac{\varepsilon}{1+\varepsilon}} \\
    \text{Event (C): } &  \mu_* \leq \mu_k + 2v^{\frac{1}{1+\varepsilon}}\left(\frac{2c\ln t}{|S^m_k(t)|}\right)^{\frac{\varepsilon}{1+\varepsilon}} 
\end{align}
Now, let us examine the occurence of event $(C)$:
\begin{align}
  \Delta_k &\leq 2v^{\frac{1}{1+\varepsilon}}\left(\frac{2c\ln t}{|S^m_k(t)|}\right)^{\frac{\varepsilon}{1+\varepsilon}} \\
  \implies |S^m_k(t)| &\leq cv^{\frac{1}{\varepsilon}}\ln t\left(\frac{2}{\Delta_k}\right)^{1+\frac{1}{\varepsilon}} \\ \intertext{Since agent $m$ can communicate only with its neighborhood $\mathcal N_\gamma(m)$, and $F(m) \subseteq \mathcal N_\gamma(m)$. Therefore $\sum_{f \in F(m) \cup \{m\}} N_f^k(t) \leq   \sum_{f \in N_\gamma(m) \cup \{m\}} N_f^k(t)$, and since each message from a neighbor $f$ takes time $d(m, f) - 1$ time to reach agent $m$, we have that $\sum_{f \in F(m) \cup \{m\}} N_f^k(t) - \sum_{f \in N_\gamma(m) \cup \{m\}}(d(f, m)-1) \leq |S^m_k(t)|$. Using this in the previous equation and the fact that $d(m, f) \leq \gamma$, we have}
  \implies \sum_{f \in F(m) \cup \{m\}}N_f^k(t) &\leq cv^{\frac{1}{\varepsilon}}\ln t\left(\frac{2}{\Delta_k}\right)^{1+\frac{1}{\varepsilon}} + \gamma(|\mathcal N_\gamma(m)| + 1)
\end{align}
Therefore, we know that if $\beta_k^m \geq \ceil*{cv^{\frac{1}{\varepsilon}}\ln T\left(\frac{2}{\Delta_k}\right)^{1+\frac{1}{\varepsilon}} + \gamma(|\mathcal N_\gamma(m)| + 1)}$ then Event (C) does not occur. Additionally, using the union bound over $N^*_m(t)$ and $N^k_m(t)$, and Assumption~\ref{ass:robust_mean_estimator}, we have:
\begin{align}
    \mathbb P(\text{Event (A) or (B) occurs}) \leq 2\sum_{s=1}^t \frac{1}{s^4} \leq \frac{2}{t^3}.
\end{align}
Combining all probabilities, and inserting in the individual regret, we have,
\begin{align}
    R_\mathcal G(T) &\leq \sum_{k: \Delta_k > 0}\sum_{m \in V'} \Delta_k\left(\beta_k^m + (|F(m)|+1)\left(\sum_{t=1}^T \mathbb P\left\{A_{m, t} = k  ; U_k^m\right\}\right)\right) \\
    &\leq \sum_{k: \Delta_k > 0}\sum_{m \in V'} \Delta_k\left(\ceil*{ cv^{\frac{1}{\varepsilon}}\ln T\left(\frac{2}{\Delta_k}\right)^{1+\frac{1}{\varepsilon}} + \gamma(|\mathcal N_\gamma(m)| + 1)} + (|F(m)|+1)\left(\sum_{t=1}^T \frac{2}{t^3}\right)\right) \\
    &\leq \sum_{k: \Delta_k > 0}\sum_{m \in V'} \Delta_k\left( cv^{\frac{1}{\varepsilon}}\ln T\left(\frac{2}{\Delta_k}\right)^{1+\frac{1}{\varepsilon}} + \gamma(|\mathcal N_\gamma(m)| + 1) + (|F(m)|+2)\right) \\ \intertext{Since $|V'| \leq \alpha(\mathcal G_\gamma)$, where $\alpha(\cdot)$ denotes the independence number, we have,}
    &\leq \sum_{k: \Delta_k > 0}\left( 2^{1+1/\varepsilon}\Delta_k^{-1/\varepsilon}cv^{1/\varepsilon}\alpha(\mathcal G_\gamma)\ln T + \sum_{m \in V'} \Delta_k(\gamma(|\mathcal N_\gamma(m)| + 1) + (|F(m)|+2))\right)\\
    &\leq \left(\sum_{k:\Delta_k>0}\Delta_k^{-1/\varepsilon}\right)\left(2^{1+1/\varepsilon}cv^{1/\varepsilon}\alpha(\mathcal G_\gamma)\right)\ln T + \left(\alpha(\mathcal G_\gamma)M\gamma + M + \alpha(\mathcal G_\gamma)\right)\left(\sum_{k: \Delta_k > 0} \Delta_k\right)
\end{align}
\end{proof}
% \newpage
\begin{theorem}
For any agent $m \in \mathcal G$, let $m^*$ denote its corresponding leader agent ($m = m^*$ for leaders), and $F(m^*)$ denote the corresponding set of follower agents. Then, for any horizon $T > K$, individual regret for agent $m$ in \textsc{Centralized MP-UCB} is bounded by the following.
\begin{align*}
    R_m(T) \leq \sum_{k: \Delta_k > 0} \left(\frac{2^{1+\frac{1}{\varepsilon}}cv^{\frac{1}{\varepsilon}}}{\Delta^{1/\varepsilon}\left(|F(m^*)| + 1\right)}\right)\ln T + \left(\gamma + 3\right)\left(\sum_{k: \Delta_k > 0}\Delta_k + 1\right).
\end{align*}
For $T \leq K, R_m(T) \leq K$.
\end{theorem}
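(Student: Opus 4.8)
The plan is to mirror the proof of the preceding (group) centralized theorem, but to track the pulls of a single agent $m$ rather than summing over the leader set $V'$. I would begin by decomposing the individual regret as $R_m(T) = \sum_{k:\Delta_k>0}\Delta_k\,\mathbb{E}[n_m^k(T)]$, and then reduce everything to controlling the pull counts of the leader $m^*$ associated with $m$. Since every follower copies the leader's most recent \emph{observed} action, and this action travels the distance $d(m^*,f)\le\gamma$ in $\mathcal{G}_\gamma$, the follower count satisfies $n_f^k(T)=n_{m^*}^k(T-d(m^*,f))$, so that $n_{m^*}^k(T)-\gamma\le n_f^k(T)\le n_{m^*}^k(T)$ for every $f\in F(m^*)$. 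In particular $n_m^k(T)\le n_{m^*}^k(T)+\gamma$, and this single inequality absorbs the only place where the identity of $m$ (leader versus follower) matters.

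Next I would establish the key amplification step. Summing the previous relation over the whole cluster gives $\sum_{f\in F(m^*)\cup\{m^*\}} n_f^k(T)\ge (|F(m^*)|+1)\,n_{m^*}^k(T)-|F(m^*)|\gamma$. The left-hand side is exactly the quantity that the UCB argument of the group theorem controls: reusing Events (A), (B), (C) centred at the leader $m^*$ and its robust estimates, the non-occurrence of Event (C) forces $\sum_{f\in F(m^*)\cup\{m^*\}} n_f^k(t)\le cv^{1/\varepsilon}\ln t\,(2/\Delta_k)^{1+1/\varepsilon}+\gamma(|\mathcal{N}_\gamma(m^*)|+1)$, while Assumption~\ref{ass:robust_mean_estimator} together with a union bound over $S_m^*$ and $S_k^m$ caps the probability of Events (A) or (B) by $2/t^3$, whose sum over $t$ is a constant. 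Combining these with the amplification inequality yields $n_{m^*}^k(T)\le \big(cv^{1/\varepsilon}\ln T\,(2/\Delta_k)^{1+1/\varepsilon}\big)/(|F(m^*)|+1)$ plus lower-order $\gamma$-dependent terms.

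Finally I would substitute back, using $n_m^k(T)\le n_{m^*}^k(T)+\gamma$ and the identity $\Delta_k(2/\Delta_k)^{1+1/\varepsilon}=2^{1+1/\varepsilon}\Delta_k^{-1/\varepsilon}$, to recover the leading term $\sum_{k:\Delta_k>0}\frac{2^{1+1/\varepsilon}cv^{1/\varepsilon}}{\Delta_k^{1/\varepsilon}(|F(m^*)|+1)}\ln T$. The residual contributions — the $+\gamma$ from the copy delay, the $|F(m^*)|\gamma/(|F(m^*)|+1)\le\gamma$ from the amplification, and the constant from the $\sum_t 2/t^3$ tail — would then be collected into the stated additive term $(\gamma+3)(\sum_{k:\Delta_k>0}\Delta_k+1)$. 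The case $T\le K$ is immediate, since each agent pulls each arm at most once during the initialization phase, giving $R_m(T)\le K$.

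I expect the main obstacle to be the bookkeeping of the delay terms when passing between the leader's count, the individual follower counts, and the cluster total: the inequalities point in opposite directions (the amplification needs a \emph{lower} bound on the cluster sum in terms of $n_{m^*}^k$, whereas bounding $R_m$ needs an \emph{upper} bound on $n_m^k$ in terms of $n_{m^*}^k$), so care is needed to ensure that the $\gamma$-overheads accumulate only into the stated additive constant and do not contaminate the $\ln T$ coefficient.
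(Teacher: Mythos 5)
Your overall strategy is the same as the paper's---decompose $R_m(T)$ arm by arm, relate follower counts to the leader's count with a $\pm\gamma$ correction, amplify by the cluster size $|F(m^*)|+1$ through the Event (A)/(B)/(C) analysis, and absorb the $2/t^3$ tail---and your leading $\ln T$ term comes out correctly. However, there is a concrete gap in your amplification step: you import the Event (C) consequence from the \emph{group} theorem, which bounds the cluster sum by $cv^{1/\varepsilon}\ln t\,(2/\Delta_k)^{1+1/\varepsilon} + \gamma\left(|\mathcal N_\gamma(m^*)|+1\right)$. After dividing by $|F(m^*)|+1$, the delay overhead becomes $\gamma\left(|\mathcal N_\gamma(m^*)|+1+|F(m^*)|\right)/\left(|F(m^*)|+1\right)$, which is \emph{not} $O(\gamma)$: since followers are assigned to the neighboring leader of maximum degree, a leader's follower set $F(m^*)$ can be far smaller than its neighborhood $\mathcal N_\gamma(m^*)$ (even empty), so this ratio can scale like $\gamma|\mathcal N_\gamma(m^*)|$. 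Your additive term would then depend on the neighborhood size, and the stated bound $(\gamma+3)\left(\sum_{k:\Delta_k>0}\Delta_k+1\right)$ is not recovered. The paper avoids this by never invoking the neighborhood-wide bound: it lower-bounds the leader's own sample set using the follower cluster only, $|S^{m^*}_k(t)| \geq \left(|F(m^*)|+1\right)N^k_{m^*}(t) - \sum_{f\in F(m^*)}d(m^*,f)$, so the overhead $\sum_{f\in F(m^*)}d(m^*,f)/\left(|F(m^*)|+1\right) \leq \gamma|F(m^*)|/\left(|F(m^*)|+1\right) \leq \gamma$ stays bounded by $\gamma$ regardless of how large the neighborhood is.

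A second, smaller slip: you convert follower regret to leader regret \emph{per arm}, via $n_m^k(T)\leq n_{m^*}^k(T)+\gamma$, which costs an extra $\gamma\sum_{k:\Delta_k>0}\Delta_k$ on top of the leader's $(\gamma+3)\sum_{k}\Delta_k$; the total $(2\gamma+3)\sum_k\Delta_k$ exceeds the claimed $(\gamma+3)\left(\sum_k\Delta_k+1\right)$ whenever $\gamma\sum_k\Delta_k > \gamma+3$. The paper instead compares whole action sequences: after the initial phase the follower replays the leader's choices shifted by $d(m^*,m)\leq\gamma$ rounds, so $R_m(T)\leq R_{m^*}(T)+\gamma$, a single additive $\gamma$ across all arms. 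Finally, your stated worry is slightly misplaced: neither issue contaminates the $\ln T$ coefficient; both only inflate the additive constant, and both are fixed by restricting all delay accounting to the follower cluster and to the shifted-sequence comparison.
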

\begin{proof}
Let us consider any ``leader'' agent $m \in V'$. Let the subset of its neighbors in $\mathcal N_\gamma(m)$ that are assigned as its followers be $F(m)$. For any ``follower'' $f \in F(m)$ of the leader agent, we have the following.
\begin{align}
    a_f(t) &= a_m\left(t-d(m, f)\right) \\ \intertext{Therefore, for any arm $k$, we have,}
    \implies N_m^k\left(t-d(m,f)\right) &\leq N_f^k(t) \leq N^k_m\left(t-d(m,f)\right) + d(m,f) \\
    \implies N^k_m(t) - d(m,f) &\leq N^k_f(t) \leq N^k_m\left(t\right) + d(m,f).
\end{align}
Summing over agents $f \in F(m) \cup \{m\}$, we have,
\begin{equation}
    \left(|F(m)| + 1\right)N^k_m(t) - \sum_{f \in F(m)}d(m, f) \leq |S^m_k(t)| \leq \left(|F(m)| + 1\right) N^k_m(t) + \sum_{f \in F(m)}d(m, f)
    \label{eqn:individual_pulls}
\end{equation}
Now, let us consider the regret incurred by agent $m$.
\begin{align}
    R_m(T) &= \sum_{k: \Delta_k > 0}\Delta_k \mathbb E[N_m^k(T)] \\
    &= \sum_{k: \Delta_k > 0} \Delta_k \sum_{t=1}^T \mathbb P\left(A_{m, t} = k\right) \\ \intertext{For any constant $\eta_k > 0$, we have,}
    &\leq \sum_{k: \Delta_k > 0} \Delta_k \left(\eta_k + \sum_{t=1}^T \mathbb P\left(A_{m, t} = k, N_m^k(t) \geq \eta_k \right)\right).
\end{align}
We see that a suboptimal arm is pulled when one of three events occurs.
\begin{align}
    \text{Event (A): } &  \hat\mu_*^m(t-1) \leq \mu_* - v^{\frac{1}{1+\varepsilon}}\left(\frac{2c\ln t}{|S_m^*(t)|}\right)^{\frac{\varepsilon}{1+\varepsilon}} \\
    \text{Event (B): } &  \hat\mu_k^m(t-1) \geq \mu_k + v^{\frac{1}{1+\varepsilon}}\left(\frac{2c\ln t}{|S^m_k(t)|}\right)^{\frac{\varepsilon}{1+\varepsilon}} \\
    \text{Event (C): } &  \mu_* \leq \mu_k + 2v^{\frac{1}{1+\varepsilon}}\left(\frac{2c\ln t}{|S^m_k(t)|}\right)^{\frac{\varepsilon}{1+\varepsilon}} 
\end{align}
Now, let us examine the occurence of event $(C)$:
\begin{align}
  \Delta_k &\leq 2v^{\frac{1}{1+\varepsilon}}\left(\frac{2c\ln t}{|S^m_k(t)|}\right)^{\frac{\varepsilon}{1+\varepsilon}} \\
  \implies |S^m_k(t)| &\leq cv^{\frac{1}{\varepsilon}}\ln t\left(\frac{2}{\Delta_k}\right)^{1+\frac{1}{\varepsilon}} \\ \intertext{Replacing Equation~\ref{eqn:individual_pulls}, we get the following.}
  \implies N^k_m(t) &\leq \frac{cv^{\frac{1}{\varepsilon}}}{\left(|F(m)| + 1\right)}\ln t\left(\frac{2}{\Delta_k}\right)^{1+\frac{1}{\varepsilon}} + \frac{\sum_{f \in F(m)}d(m, f)}{\left(|F(m)| + 1\right)}
\end{align}
Hence, we know that whenever $N^k_m(t) > \frac{cv^{\frac{1}{\varepsilon}}}{\left(|F(m)| + 1\right)}\ln t\left(\frac{2}{\Delta_k}\right)^{1+\frac{1}{\varepsilon}} + \frac{\sum_{f \in F(m)}d(m, f)}{\left(|F(m)| + 1\right)}$, Event (C) will not occur. Hence, if we set $\eta_k = \ceil*{\frac{cv^{\frac{1}{\varepsilon}}}{\left(|F(m)| + 1\right)}\ln T\left(\frac{2}{\Delta_k}\right)^{1+\frac{1}{\varepsilon}} + \frac{\sum_{f \in F(m)}d(m, f)}{\left(|F(m)| + 1\right)}}$ only events (A) and (B) will occur. Additionally, using the union bound over $N^*_m(t)$ and $N^k_m(t)$, and Assumption~\ref{ass:robust_mean_estimator}, we have:
\begin{align}
    \mathbb P(\text{Event (A) or (B) occurs}) \leq 2\sum_{s=1}^t \frac{1}{s^4} \leq \frac{2}{t^3}.
\end{align}
Combining all probabilities, and inserting in the individual regret, we have,
\begin{align}
    R_m(T) &\leq \sum_{k: \Delta_k > 0} \Delta_k \left(\eta_k + \sum_{t=1}^T \mathbb P\left(A_{m, t} = k, N_m^k(t) \geq \eta_k \right)\right) \\
    &\leq \sum_{k: \Delta_k > 0} \Delta_k \left(\ceil*{\frac{cv^{\frac{1}{\varepsilon}}}{\left(|F(m)| + 1\right)}\ln T\left(\frac{2}{\Delta_k}\right)^{1+\frac{1}{\varepsilon}} + \frac{\sum_{f \in F(m)}d(m, f)}{\left(|F(m)| + 1\right)}} + \sum_{t=1}^T \frac{2}{t^3}\right) \\
    &\leq \sum_{k: \Delta_k > 0} \Delta_k \left(\frac{cv^{\frac{1}{\varepsilon}}}{\left(|F(m)| + 1\right)}\ln T\left(\frac{2}{\Delta_k}\right)^{1+\frac{1}{\varepsilon}} + \frac{\sum_{f \in F(m)}d(m, f)}{\left(|F(m)| + 1\right)} + 3\right) \\
    &\leq \sum_{k: \Delta_k > 0} \Delta_k \left(\frac{cv^{\frac{1}{\varepsilon}}}{\left(|F(m)| + 1\right)}\ln T\left(\frac{2}{\Delta_k}\right)^{1+\frac{1}{\varepsilon}} + \gamma + 3\right)\\
    &=\sum_{k: \Delta_k > 0} \left(\frac{2^{1+\frac{1}{\varepsilon}}cv^{\frac{1}{\varepsilon}}}{\Delta^{1/\varepsilon}\left(|F(m)| + 1\right)}\ln T\right) + \left(\gamma + 3\right)\sum_{k: \Delta_k > 0}\Delta_k
\end{align}
Let us now consider a follower agent $f \in F(m)$. We know that after at most $\gamma$ steps, the follower mimics agent $m$ exactly. Therefore, $R_f(T) \leq R_m(T) + \gamma$. Combining this with the earlier bound, we get the final form of the statement.
\end{proof}
% \newpage
\begin{theorem}
For any horizon $T > K$, and robust mean estimator that satisfies Assumption~\ref{ass:robust_mean_estimator} with constants $c$ and $v$, the group regret under \textsc{KMP-UCB} is bounded by the following.
\begin{align*}
R_\mathcal G(T) \leq \alpha(\mathcal G_\gamma)\left(\sum_{k: \Delta_k > 0} \Delta_k^{-1/\varepsilon}\right)\left(2cv^{\frac{1}{\varepsilon}}\right)\ln T + \alpha(\mathcal G_\gamma)\left(\gamma + 2 + M\right)\left(\sum_{k: \Delta_k > 0} \Delta_k\right).
\end{align*}
For $T \leq K, R_\mathcal G(T) \leq MK$. $\alpha(\cdot)$ denotes the independence number.
\end{theorem}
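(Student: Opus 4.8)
The plan is to mirror the structure of the proof for \textsc{Centralized MP-UCB}, replacing its explicit leader/follower partition with the \emph{implicit} one induced by the summary-sharing rule. First I would fix a maximal independent set $V'$ of $\mathcal G_\gamma$; since any maximal independent set is dominating, every agent $u \in V$ lies within distance $\gamma$ (in $\mathcal G$) of some $m \in V'$, and $|V'| \leq \alpha(\mathcal G_\gamma)$. I would then assign each non-$V'$ agent to one dominating $m \in V'$, write $G(m)$ for the agents assigned to $m$, and decompose the group regret as $R_\mathcal G(T) = \sum_{m \in V'}\big(R_m(T) + \sum_{u \in G(m)} R_u(T)\big)$. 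It then suffices to bound, for each $m \in V'$ and each suboptimal arm $k$, the total number of arm-$k$ pulls inside the neighborhood $\mathcal N_\gamma(m) \cup \{m\}$, and to sum over the at most $\alpha(\mathcal G_\gamma)$ members of $V'$.

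The core step is a counting lemma analogous to Lemma~\ref{lemma:broadcast_delay}. Because every message carries its originator tag $\langle v', t'\rangle$, the pooled sample set $S^{u}_k(t)$ of any agent contains no duplicates, so the count $N^{m^*}_k(t)$ that an agent actually uses is the largest pool size available in its $\gamma$-neighborhood. I would argue that the agents holding the locally largest pools can always be thinned to an independent set (breaking ties by index), and that every other agent either already uses one of their summaries or holds an even larger pool; consequently, within the at most $\gamma$ trials of communication delay, the effective count employed by each agent in $\mathcal N_\gamma(m)\cup\{m\}$ is at least the best-informed pool size there, minus a slack of at most $\gamma(|\mathcal N_\gamma(m)|+1)$ due to in-flight messages. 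This lower bounds the effective count by the neighborhood's cumulative arm-$k$ pulls up to the delay correction, which is exactly what converts a single UCB threshold into a bound on the whole neighborhood.

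With this in hand I would run the standard optimism argument per neighborhood, writing the three failure events (A) $\hat\mu^m_*(t-1)\le \mu_*-v^{1/(1+\varepsilon)}(2c\ln t/|S^m_*(t)|)^{\varepsilon/(1+\varepsilon)}$, (B) the symmetric over-estimate of arm $k$, and (C) $\mu_*\le \mu_k+2v^{1/(1+\varepsilon)}(2c\ln t/|S^m_k(t)|)^{\varepsilon/(1+\varepsilon)}$. Assumption~\ref{ass:robust_mean_estimator} together with a union bound over the optimal and arm-$k$ counts gives $\mathbb P(\text{A or B})\le 2/t^3$, which sums to a constant over $t$. Inverting event (C) shows it cannot hold once the effective count exceeds a threshold of order $v^{1/\varepsilon}\Delta_k^{-1-1/\varepsilon}\ln T$; combined with the counting lemma this caps the neighborhood's total arm-$k$ pulls. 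Multiplying by $\Delta_k$, summing over suboptimal $k$, and summing over the $\le\alpha(\mathcal G_\gamma)$ members of $V'$ produces the leading term $\alpha(\mathcal G_\gamma)\big(\sum_{k:\Delta_k>0}\Delta_k^{-1/\varepsilon}\big)(2cv^{1/\varepsilon})\ln T$, while the delay slack and the constant tail contributions collect into $\alpha(\mathcal G_\gamma)(\gamma+2+M)\sum_{k:\Delta_k>0}\Delta_k$. The case $T\le K$ is immediate, since the first $K$ rounds are pure initialization, giving $R_\mathcal G(T)\le MK$.

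The hard part is the counting lemma. Unlike the centralized algorithm, where followers deterministically copy an assigned leader so that the leader's pool equals the cluster's pulls by construction, here the ``who defers to whom'' relation is data-dependent and changes with $t$: the best-informed agent in a neighborhood is whoever currently holds the largest pool, and this identity shifts as pulls accumulate and messages propagate. I would need to show carefully that the locally best-informed agents can always be reduced to an independent set dominating $V$, that deferral propagates correctly under the up-to-$\gamma$ delay without any double-counting of tagged samples, and that the max-pool rule never lets a neighborhood's effective count lag its true cumulative pulls by more than the $\gamma$-delay slack. Establishing this uniformly in $t$, so that it can be plugged into the per-round optimism bound, is the delicate step; everything after it is the routine heavy-tailed UCB computation.
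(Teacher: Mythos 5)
Your skeleton matches the paper's proof — decomposition over a maximal (hence dominating) independent set $V'$ of $\mathcal G_\gamma$, the three failure events, inversion of event (C) to get a threshold of order $cv^{1/\varepsilon}\Delta_k^{-1-1/\varepsilon}\ln T$, the $2/t^3$ union bound, and summation over $|V'| \leq \alpha(\mathcal G_\gamma)$ neighborhoods. However, the step you correctly identify as the hard part — the counting lemma — is where your route breaks down, and the paper resolves it by an idea your proposal is missing. You propose to track the (data-dependent, time-varying) set of locally best-informed agents and ``thin'' them to an independent set, then argue deferral propagates correctly; as you concede, this is unestablished, and it is genuinely problematic: the identity of the largest-pool agent shifts every round, the thinned set need not dominate $V$, and nothing forces the deferral relation to be consistent across time. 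The paper never needs any of this. It fixes the hub $m \in V'$ and observes that for every $m' \in \mathcal N_\gamma(m) \cup \{m\}$, symmetry of distance gives $m \in \mathcal N_\gamma(m')$, so the max-pool rule of \textsc{KMP-UCB} automatically yields
\begin{equation*}
N_k^{m^*}(t) \;=\; \max_{u \in \mathcal N_\gamma(m') \cup \{m'\}} \left|S_k^{u}\left(t-d(m', u)\right)\right| \;\geq\; \left|S_k^{m}\left(t-d(m', m)\right)\right| \;\geq\; N^{\dagger}_k(t) - \gamma,
\end{equation*}
where $N^{\dagger}_k(t)$ is the neighborhood's cumulative number of arm-$k$ pulls: whoever the current ``best-informed'' agent is, each agent's effective count is at least the \emph{fixed} hub's (delayed) pool size, and the hub's pool aggregates the whole neighborhood. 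The deferral structure is thus irrelevant; only the fact that the max dominates one particular term matters.

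A secondary consequence of this gap is quantitative: your slack of $\gamma\left(|\mathcal N_\gamma(m)|+1\right)$ per neighborhood would produce an additive term of order $\alpha(\mathcal G_\gamma)\,\gamma M \sum_{k:\Delta_k>0}\Delta_k$, which does not match the claimed $\alpha(\mathcal G_\gamma)(\gamma + 2 + M)\sum_{k:\Delta_k>0}\Delta_k$; you assert the constants collect correctly but they do not under your lemma. The $O(\gamma)$ slack (rather than $O(\gamma M)$) in the stated bound comes precisely from the fixed-hub argument above, where the only delay paid is the single hop from $m'$ to $m$. With that substitution your proof goes through; without it, the delicate step you flag remains open.
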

\begin{proof}
Consider the maximal independent set of $\mathcal G_\gamma$ given by $V'$. We can decompose the group regret based on $V'$ as follows.
\begin{align}
    R_\mathcal G(T) &= \sum_{m \in V'}R_m(T) + \sum_{m \in V \setminus V'}R_m(T) \\
    &\leq \sum_{m \in V'} \left(R_m(T) + \sum_{m' \in \mathcal N_\gamma(m)}R_{m'}(T)\right)
\end{align}
Now, for any agent $m$, consider the total regret for all agents in $\mathcal N_\gamma(m) \cup \{m\}$.
\begin{align}
    R_m(T) + \sum_{m' \in \mathcal N_\gamma(m)}R_{m'}(T) &= \sum_{k: \Delta_k > 0} \Delta_k\left(\sum_{m' \in \mathcal N_\gamma(m) \cup \{m\}} \sum_{t=1}^T \mathbb P\left(A_{m', t} = k\right)\right) \\ \intertext{For any set of constants $\eta^{m'}_k > 0, k \in [K], m' \in \mathcal N_\gamma(m) \cup \{m\}$, and $N^\dagger_k(t) = \sum_{m' \in \mathcal N_\gamma(m) \cup \{m\}} N^m_k(t), \beta_k = \sum_{m' \in \mathcal N_\gamma(m) \cup \{m\}} \eta_k^m$,}
    &\leq \sum_{k: \Delta_k > 0} \Delta_k\left(\beta_k + \sum_{m' \in \mathcal N_\gamma(m) \cup \{m\}}\sum_{t=1}^T \mathbb P\left(A_{m', t} = k, N^\dagger_k(t) > \beta_k\right)\right)
\end{align}
For any arm $k \in [K]$, at any trial $t$, any agent $m' \in \mathcal N_\gamma(m) \cup \{m\}$ chooses the confidence bound based on the pair $\hat\mu_k^{m^*}(t), N_k^{m^*}(t)$, where $N_k^{m^*}(t) = \max_{u \in \mathcal N_\gamma(m') \cup \{m'\}} |S_k^{u}(t-d(m, u))|$, and $\hat\mu_k^{m^*}(t)$ is the corresponding robust mean. Now, we know that $m'$ is in the $\gamma$-neighborhood of $m$, therefore,
\begin{align*}
    N_k^{m^*}(t) &= \max_{u \in \mathcal N_\gamma(m') \cup \{m'\}} |S_k^{u}(t-d(m, u))| \\
    &\geq |S_k^{m}(t-d(m, u))| \\
    &= N^\dagger_k(t-d(m, u)) \\
    &\geq N^\dagger_k(t) - d(m, u) \\
    &\geq N^\dagger_k(t) - \gamma.
\end{align*}
Now, any agent $m' \in \mathcal N_\gamma(m) \cup \{m\}$ pulls a suboptimal arm $k$ when one of three events occurs.
\begin{align}
    \text{Event (A): } &  \hat\mu_*^{m^*}(t-1) \leq \mu_* - v^{\frac{1}{1+\varepsilon}}\left(\frac{2c\ln t}{N_*^{m^*}(t)}\right)^{\frac{\varepsilon}{1+\varepsilon}} \\
    \text{Event (B): } &  \hat\mu_k^{m^*}(t-1) \geq \mu_k + v^{\frac{1}{1+\varepsilon}}\left(\frac{2c\ln t}{N_k^{m^*}(t)}\right)^{\frac{\varepsilon}{1+\varepsilon}} \\
    \text{Event (C): } &  \mu_* \leq \mu_k + 2v^{\frac{1}{1+\varepsilon}}\left(\frac{2c\ln t}{N_k^{m^*}(t)}\right)^{\frac{\varepsilon}{1+\varepsilon}} 
\end{align}
Now, let us examine the occurence of event $(C)$:
\begin{align}
  \Delta_k &\leq 2v^{\frac{1}{1+\varepsilon}}\left(\frac{2c\ln t}{N_k^{m^*}(t)}\right)^{\frac{\varepsilon}{1+\varepsilon}} \\
  \implies N_k^{m^*}(t) &\leq cv^{\frac{1}{\varepsilon}}\ln t\left(\frac{2}{\Delta_k}\right)^{1+\frac{1}{\varepsilon}}
\end{align}
Therefore, by setting $\beta_k = \ceil*{cv^{\frac{1}{\varepsilon}}\ln T\left(\frac{2}{\Delta_k}\right)^{1+\frac{1}{\varepsilon}} + \gamma}$, Event (C) will not occur for any agent $m' \in \mathcal N_\gamma(m) \cup \{m\}$. Additionally, using the union bound over $N^{m^*}_*(t)$ and $N^{m^*}_k(t)$, and Assumption~\ref{ass:robust_mean_estimator}, we have:
\begin{align}
    \mathbb P(\text{Event (A) or (B) occurs}) \leq 2\sum_{s=1}^t \frac{1}{s^4} \leq \frac{2}{t^3}.
\end{align}
Replacing this in the total regret for $m' \in \mathcal N_\gamma(m) \cup \{m\}$, we have,
\begin{align}
    R_m(T) + \sum_{m' \in \mathcal N_\gamma(m)}R_{m'}(T) &\leq \sum_{k: \Delta_k > 0} \Delta_k\left(\beta_k + \sum_{m' \in \mathcal N_\gamma(m) \cup \{m\}}\sum_{t=1}^T \mathbb P\left(A_{m', t} = k, N^\dagger_k(t) > \beta_k\right)\right) \\
    &\leq \sum_{k: \Delta_k > 0} \Delta_k\left(\ceil*{cv^{\frac{1}{\varepsilon}}\ln T\left(\frac{2}{\Delta_k}\right)^{1+\frac{1}{\varepsilon}} + \gamma} + \sum_{m' \in \mathcal N_\gamma(m) \cup \{m\}}\sum_{t=1}^T \frac{2}{t^3}\right) \\
    &\leq \sum_{k: \Delta_k > 0} \Delta_k\left(cv^{\frac{1}{\varepsilon}}\ln T\left(\frac{2}{\Delta_k}\right)^{1+\frac{1}{\varepsilon}} + \gamma + 2 + |\mathcal N_\gamma(m)| \right) \\ \intertext{Summing over all agents $m \in V'$, we have,}
    R_\mathcal G(T) &\leq \alpha(\mathcal G_\gamma)\left(\sum_{k: \Delta_k > 0} \Delta_k^{-1/\varepsilon}\right)\left(2cv^{\frac{1}{\varepsilon}}\right)\ln T + \alpha(\mathcal G_\gamma)\left(\gamma + 2 + M\right)\left(\sum_{k: \Delta_k > 0} \Delta_k\right).
\end{align}
\end{proof}
\newpage
\subsection{Full versions of the KMP-UCB algorithm}
\begin{algorithm}[H]
\caption{\textsc{KMP-UCB}}
\label{alg:mp_ucb_aug}
\small
\begin{algorithmic}[1] %[1] enables line numbers
\STATE \textbf{Input}: $K$, $\varepsilon$, $\hat\mu_R(n, \delta)$, $c, v$.
\STATE Set $S^m_k = \phi\ \forall k \in [K]$, $Q_m(t) = \phi\ \forall m \in [M]$ 
\STATE Set $W_m = \bf 1^{|\mathcal N_\gamma(m)| \times 2K}\ \forall m \in [M]$.
\FOR{For each iteration $t \in [T]$}
\FOR{For each agent $m \in [M]$}
\IF{$t \leq K$}
\STATE $A_{m, t} = t$.
\ELSE
\FOR{Arm $k \in [K]$}
\STATE $\hat\mu^{(m)}_k(t) = \hat\mu_R(S^m_k, 2\ln t)$.
\ENDFOR
\STATE $W_m = \left(\hat\mu^{(m)}_k(t), |S^m_k|\right)_{k \in [K]}$.
\FOR{Arm $k \in [K]$}
\STATE $ m^* = \argmax_m \{s^m_k : (\mu^m_k, s^m_k) \in W\} $.
\STATE $\hat\mu^{m, *}_k, |S^{m, *}_k| = W_{m^*, k}$
\STATE $\text{UCB}_k^{(m)}(t) = v^{\frac{1}{1+\varepsilon}}\left(\frac{2c\ln t}{|S^{m,*}_k|}\right)^{\frac{\varepsilon}{1+\varepsilon}}$.
\ENDFOR
\STATE $A_{m, t} = \arg\max_{k \in [K]} \left\{\hat\mu_k ^{m, *}(t) + \text{UCB}_k^{(m)}(t)\right\}$.
\ENDIF
\STATE $X_{m, t} = \textsc{Pull}(A_{m, t})$.
\STATE $S_{A_{m,t}}^m = S_{A_{m,t}}^m \cup \{X_{m, t}\}$
\STATE $Q_m(t)$ = \textsc{PruneDeadMessages}($Q_m(t)$).
\STATE $q_m(t) = \left\langle m, t, \gamma, A_{m, t}, X_{m, t}, \hat{\bf{\mu}}^{(m)}(t), \bf N^{m}(t) \right\rangle$
\STATE $Q_m(t) = Q_m(t) \cup \{q_m(t)\}$.
\STATE Set $l =l -1 \ \forall \langle m',t',l, a', x', \bf d', \bf n'\rangle$ in $Q_m(t)$.
\FOR{Each neighbor $m'$ in $\mathcal N_1(m)$}
\STATE \textsc{SendMessages}$(m, m', Q_m(t))$.
\ENDFOR
\ENDFOR
\FOR{For each agent $m \in [M]$}
\STATE $Q_m(t+1) = \phi$.
\FOR{Each neighbor $m'$ in $\mathcal N_1(m)$}
\STATE $Q' = $\textsc{ReceiveMessages}$(m',m)$
\STATE $Q_m(t+1)$ = $Q_m(t+1) \cup Q'$.
\ENDFOR
\FOR{$\langle m',t',l',a',x', \bf d', \bf n'\rangle \ \in Q_m(t+1)$}
\STATE $S_{a'}^m = S_{a'}^m \cup \{x'\}$.
\STATE $W_{m'} = (\bf d', \bf n')$
\ENDFOR
\ENDFOR
\ENDFOR
\end{algorithmic}
\end{algorithm}
\newpage
\bibliographystyle{icml2019}
\bibliography{example_paper}

\begin{thebibliography}{51}
\providecommand{\natexlab}[1]{#1}
\providecommand{\url}[1]{\texttt{#1}}
\expandafter\ifx\csname urlstyle\endcsname\relax
  \providecommand{\doi}[1]{doi: #1}\else
  \providecommand{\doi}{doi: \begingroup \urlstyle{rm}\Url}\fi

\bibitem[Agrawal \& Goyal(2012)Agrawal and Goyal]{agrawal2012analysis}
Agrawal, S. and Goyal, N.
\newblock Analysis of thompson sampling for the multi-armed bandit problem.
\newblock In \emph{Conference on Learning Theory}, pp.\  39--1, 2012.

\bibitem[Anantharam et~al.(1987)Anantharam, Varaiya, and
  Walrand]{anantharam1987asymptotically}
Anantharam, V., Varaiya, P., and Walrand, J.
\newblock Asymptotically efficient allocation rules for the multiarmed bandit
  problem with multiple plays-part i: Iid rewards.
\newblock \emph{IEEE Transactions on Automatic Control}, 32\penalty0
  (11):\penalty0 968--976, 1987.

\bibitem[Aysal \& Barner(2010)Aysal and Barner]{aysal2010convergence}
Aysal, T.~C. and Barner, K.~E.
\newblock Convergence of consensus models with stochastic disturbances.
\newblock \emph{IEEE Transactions on Information Theory}, 56\penalty0
  (8):\penalty0 4101--4113, 2010.

\bibitem[Bar-On \& Mansour(2019)Bar-On and Mansour]{bar2019individual}
Bar-On, Y. and Mansour, Y.
\newblock Individual regret in cooperative nonstochastic multi-armed bandits.
\newblock \emph{arXiv preprint arXiv:1907.03346}, 2019.

\bibitem[Barabasi(2005)]{barabasi2005origin}
Barabasi, A.-L.
\newblock The origin of bursts and heavy tails in human dynamics.
\newblock \emph{Nature}, 435\penalty0 (7039):\penalty0 207, 2005.

\bibitem[Bistritz \& Leshem(2018)Bistritz and Leshem]{bistritz2018distributed}
Bistritz, I. and Leshem, A.
\newblock Distributed multi-player bandits-a game of thrones approach.
\newblock In \emph{Advances in Neural Information Processing Systems}, pp.\
  7222--7232, 2018.

\bibitem[Borak et~al.(2005)Borak, H{\"a}rdle, and Weron]{borak2005stable}
Borak, S., H{\"a}rdle, W., and Weron, R.
\newblock Stable distributions.
\newblock In \emph{Statistical tools for finance and insurance}, pp.\  21--44.
  Springer, 2005.

\bibitem[Br{\^a}nzei \& Peres(2019)Br{\^a}nzei and
  Peres]{branzei2019multiplayer}
Br{\^a}nzei, S. and Peres, Y.
\newblock Multiplayer bandit learning, from competition to cooperation.
\newblock \emph{arXiv preprint arXiv:1908.01135}, 2019.

\bibitem[Bubeck et~al.(2013)Bubeck, Cesa-Bianchi, and
  Lugosi]{bubeck2013bandits}
Bubeck, S., Cesa-Bianchi, N., and Lugosi, G.
\newblock Bandits with heavy tail.
\newblock \emph{IEEE Transactions on Information Theory}, 59\penalty0
  (11):\penalty0 7711--7717, 2013.

\bibitem[Bubeck et~al.(2019)Bubeck, Li, Peres, and Sellke]{bubeck2019non}
Bubeck, S., Li, Y., Peres, Y., and Sellke, M.
\newblock Non-stochastic multi-player multi-armed bandits: Optimal rate with
  collision information, sublinear without.
\newblock \emph{arXiv preprint arXiv:1904.12233}, 2019.

\bibitem[Catoni(2012)]{catoni2012challenging}
Catoni, O.
\newblock Challenging the empirical mean and empirical variance: a deviation
  study.
\newblock In \emph{Annales de l'IHP Probabilit{\'e}s et statistiques},
  volume~48, pp.\  1148--1185, 2012.

\bibitem[Cesa-Bianchi et~al.(2013)Cesa-Bianchi, Gentile, and
  Zappella]{cesa2013gang}
Cesa-Bianchi, N., Gentile, C., and Zappella, G.
\newblock A gang of bandits.
\newblock In \emph{Advances in Neural Information Processing Systems}, pp.\
  737--745, 2013.

\bibitem[Cesa-Bianchi et~al.(2019)Cesa-Bianchi, Gentile, and
  Mansour]{cesa2019delay}
Cesa-Bianchi, N., Gentile, C., and Mansour, Y.
\newblock Delay and cooperation in nonstochastic bandits.
\newblock \emph{The Journal of Machine Learning Research}, 20\penalty0
  (1):\penalty0 613--650, 2019.

\bibitem[Crovella et~al.(1998)Crovella, Taqqu, and
  Bestavros]{crovella1998heavy}
Crovella, M.~E., Taqqu, M.~S., and Bestavros, A.
\newblock Heavy-tailed probability distributions in the world wide web.
\newblock \emph{A practical guide to heavy tails}, 1:\penalty0 3--26, 1998.

\bibitem[De~Vany et~al.(1999)De~Vany, Lee, et~al.]{de1999information}
De~Vany, A.~S., Lee, C., et~al.
\newblock \emph{Information cascades in multi-agent models}.
\newblock University of California, Irivine, Department of Economics, 1999.

\bibitem[Dubey \& Pentland(2019)Dubey and Pentland]{dubey2019thompson}
Dubey, A. and Pentland, A.~S.
\newblock Thompson sampling on symmetric alpha-stable bandits.
\newblock In \emph{Proceedings of the Twenty-Eighth International Joint
  Conference on Artificial Intelligence, {IJCAI-19}}, pp.\  5715--5721.
  International Joint Conferences on Artificial Intelligence Organization, 7
  2019.
\newblock \doi{10.24963/ijcai.2019/792}.
\newblock URL \url{https://doi.org/10.24963/ijcai.2019/792}.

\bibitem[Eom \& Jo(2015)Eom and Jo]{eom2015tail}
Eom, Y.-H. and Jo, H.-H.
\newblock Tail-scope: Using friends to estimate heavy tails of degree
  distributions in large-scale complex networks.
\newblock \emph{Scientific reports}, 5:\penalty0 09752, 2015.

\bibitem[Gentile et~al.(2014)Gentile, Li, and Zappella]{gentile2014online}
Gentile, C., Li, S., and Zappella, G.
\newblock Online clustering of bandits.
\newblock In \emph{International Conference on Machine Learning}, pp.\
  757--765, 2014.

\bibitem[Gentile et~al.(2017)Gentile, Li, Kar, Karatzoglou, Zappella, and
  Etrue]{gentile2017context}
Gentile, C., Li, S., Kar, P., Karatzoglou, A., Zappella, G., and Etrue, E.
\newblock On context-dependent clustering of bandits.
\newblock In \emph{Proceedings of the 34th International Conference on Machine
  Learning-Volume 70}, pp.\  1253--1262. JMLR. org, 2017.

\bibitem[Glover \& Kochenberger(2018)Glover and
  Kochenberger]{glover2018tutorial}
Glover, F. and Kochenberger, G.
\newblock A tutorial on formulating qubo models.
\newblock \emph{arXiv preprint arXiv:1811.11538}, 2018.

\bibitem[Hernandez-Campos et~al.(2004)Hernandez-Campos, Marron, Samorodnitsky,
  and Smith]{hernandez2004variable}
Hernandez-Campos, F., Marron, J., Samorodnitsky, G., and Smith, F.~D.
\newblock Variable heavy tails in internet traffic.
\newblock \emph{Performance Evaluation}, 58\penalty0 (2-3):\penalty0 261--284,
  2004.

\bibitem[Konovalov(2010)]{konovalov2010information}
Konovalov, A.
\newblock Information cascades and experts institutions.
\newblock \emph{New Institutional Economics: Research Methods and Research
  Methods and Tools}, pp.\ ~66, 2010.

\bibitem[Korda et~al.(2013)Korda, Kaufmann, and Munos]{korda2013thompson}
Korda, N., Kaufmann, E., and Munos, R.
\newblock Thompson sampling for 1-dimensional exponential family bandits.
\newblock In \emph{Advances in Neural Information Processing Systems}, pp.\
  1448--1456, 2013.

\bibitem[Korda et~al.(2016)Korda, Sz{\"o}r{\'e}nyi, and
  Shuai]{korda2016distributed}
Korda, N., Sz{\"o}r{\'e}nyi, B., and Shuai, L.
\newblock Distributed clustering of linear bandits in peer to peer networks.
\newblock In \emph{Journal of machine learning research workshop and conference
  proceedings}, volume~48, pp.\  1301--1309. International Machine Learning
  Societ, 2016.

\bibitem[Landgren et~al.(2016{\natexlab{a}})Landgren, Srivastava, and
  Leonard]{landgren2016distributed}
Landgren, P., Srivastava, V., and Leonard, N.~E.
\newblock On distributed cooperative decision-making in multiarmed bandits.
\newblock In \emph{2016 European Control Conference (ECC)}, pp.\  243--248.
  IEEE, 2016{\natexlab{a}}.

\bibitem[Landgren et~al.(2016{\natexlab{b}})Landgren, Srivastava, and
  Leonard]{landgren2016distributed2}
Landgren, P., Srivastava, V., and Leonard, N.~E.
\newblock Distributed cooperative decision-making in multiarmed bandits:
  Frequentist and bayesian algorithms.
\newblock In \emph{2016 IEEE 55th Conference on Decision and Control (CDC)},
  pp.\  167--172. IEEE, 2016{\natexlab{b}}.

\bibitem[Landgren et~al.(2018)Landgren, Srivastava, and
  Leonard]{landgren2018social}
Landgren, P., Srivastava, V., and Leonard, N.~E.
\newblock Social imitation in cooperative multiarmed bandits: Partition-based
  algorithms with strictly local information.
\newblock In \emph{2018 IEEE Conference on Decision and Control (CDC)}, pp.\
  5239--5244. IEEE, 2018.

\bibitem[Leskovec \& Sosi{\v{c}}(2016)Leskovec and
  Sosi{\v{c}}]{leskovec2016snap}
Leskovec, J. and Sosi{\v{c}}, R.
\newblock Snap: A general-purpose network analysis and graph-mining library.
\newblock \emph{ACM Transactions on Intelligent Systems and Technology (TIST)},
  8\penalty0 (1):\penalty0 1, 2016.

\bibitem[L{\'e}vy(1925)]{levy1925calcul}
L{\'e}vy, P.
\newblock Calcul des probabilit{\'e}s, vol. 9.
\newblock \emph{Gauthier-Villars Paris}, 1925.

\bibitem[Li et~al.(2016)Li, Karatzoglou, and Gentile]{li2016collaborative}
Li, S., Karatzoglou, A., and Gentile, C.
\newblock Collaborative filtering bandits.
\newblock In \emph{Proceedings of the 39th International ACM SIGIR conference
  on Research and Development in Information Retrieval}, pp.\  539--548, 2016.

\bibitem[Linial(1992)]{linial1992locality}
Linial, N.
\newblock Locality in distributed graph algorithms.
\newblock \emph{SIAM Journal on computing}, 21\penalty0 (1):\penalty0 193--201,
  1992.

\bibitem[Liu \& Zhao(2010{\natexlab{a}})Liu and Zhao]{liu2010decentralized}
Liu, K. and Zhao, Q.
\newblock Decentralized multi-armed bandit with multiple distributed players.
\newblock In \emph{2010 Information Theory and Applications Workshop (ITA)},
  pp.\  1--10. IEEE, 2010{\natexlab{a}}.

\bibitem[Liu \& Zhao(2010{\natexlab{b}})Liu and Zhao]{liu2010distributed}
Liu, K. and Zhao, Q.
\newblock Distributed learning in multi-armed bandit with multiple players.
\newblock \emph{IEEE Transactions on Signal Processing}, 58\penalty0
  (11):\penalty0 5667--5681, 2010{\natexlab{b}}.

\bibitem[Liu \& Zhao(2010{\natexlab{c}})Liu and Zhao]{liu2010distributed2}
Liu, K. and Zhao, Q.
\newblock Distributed learning in cognitive radio networks: Multi-armed bandit
  with distributed multiple players.
\newblock In \emph{2010 IEEE International Conference on Acoustics, Speech and
  Signal Processing}, pp.\  3010--3013. IEEE, 2010{\natexlab{c}}.

\bibitem[Lucas(2014)]{lucas2014ising}
Lucas, A.
\newblock Ising formulations of many np problems.
\newblock \emph{Frontiers in Physics}, 2:\penalty0 5, 2014.

\bibitem[Lugosi \& Mendelson(2019)Lugosi and Mendelson]{lugosi2019robust}
Lugosi, G. and Mendelson, S.
\newblock Robust multivariate mean estimation: the optimality of trimmed mean.
\newblock \emph{arXiv preprint arXiv:1907.11391}, 2019.

\bibitem[Mart{\'\i}nez-Rubio et~al.(2019)Mart{\'\i}nez-Rubio, Kanade, and
  Rebeschini]{martinez2018decentralized}
Mart{\'\i}nez-Rubio, D., Kanade, V., and Rebeschini, P.
\newblock Decentralized cooperative stochastic multi-armed bandits.
\newblock In \emph{Advances in Neural Information Processing Systems}, 2019.

\bibitem[McCulloch \& Neuhaus(2011)McCulloch and
  Neuhaus]{mcculloch2011prediction}
McCulloch, C.~E. and Neuhaus, J.~M.
\newblock Prediction of random effects in linear and generalized linear models
  under model misspecification.
\newblock \emph{Biometrics}, 67\penalty0 (1):\penalty0 270--279, 2011.

\bibitem[Medina \& Yang(2016)Medina and Yang]{medina2016no}
Medina, A.~M. and Yang, S.
\newblock No-regret algorithms for heavy-tailed linear bandits.
\newblock In \emph{International Conference on Machine Learning}, pp.\
  1642--1650, 2016.

\bibitem[Moallemi(2007)]{moallemi2007message}
Moallemi, C.~C.
\newblock \emph{A message-passing paradigm for optimization}, volume~68.
\newblock 2007.

\bibitem[Neu et~al.(2010)Neu, Antos, Gy{\"o}rgy, and
  Szepesv{\'a}ri]{neu2010online}
Neu, G., Antos, A., Gy{\"o}rgy, A., and Szepesv{\'a}ri, C.
\newblock Online markov decision processes under bandit feedback.
\newblock In \emph{Advances in Neural Information Processing Systems}, pp.\
  1804--1812, 2010.

\bibitem[Pike-Burke et~al.(2017)Pike-Burke, Agrawal, Szepesvari, and
  Grunewalder]{pike2017bandits}
Pike-Burke, C., Agrawal, S., Szepesvari, C., and Grunewalder, S.
\newblock Bandits with delayed, aggregated anonymous feedback.
\newblock \emph{arXiv preprint arXiv:1709.06853}, 2017.

\bibitem[Shahrampour et~al.(2017)Shahrampour, Rakhlin, and
  Jadbabaie]{shahrampour2017multi}
Shahrampour, S., Rakhlin, A., and Jadbabaie, A.
\newblock Multi-armed bandits in multi-agent networks.
\newblock In \emph{2017 IEEE International Conference on Acoustics, Speech and
  Signal Processing (ICASSP)}, pp.\  2786--2790. IEEE, 2017.

\bibitem[Shao et~al.(2018)Shao, Yu, King, and Lyu]{shao2018almost}
Shao, H., Yu, X., King, I., and Lyu, M.~R.
\newblock Almost optimal algorithms for linear stochastic bandits with
  heavy-tailed payoffs.
\newblock In \emph{Advances in Neural Information Processing Systems}, pp.\
  8430--8439, 2018.

\bibitem[Suomela(2013)]{suomela2013survey}
Suomela, J.
\newblock Survey of local algorithms.
\newblock 45\penalty0 (2):\penalty0 24, 2013.

\bibitem[Thadakamaila et~al.(2004)Thadakamaila, Raghavan, Kumara, and
  Albert]{thadakamaila2004survivability}
Thadakamaila, H., Raghavan, U.~N., Kumara, S., and Albert, R.
\newblock Survivability of multiagent-based supply networks: a topological
  perspect.
\newblock \emph{IEEE Intelligent Systems}, 19\penalty0 (5):\penalty0 24--31,
  2004.

\bibitem[Thompson(1933)]{thompson1933likelihood}
Thompson, W.~R.
\newblock On the likelihood that one unknown probability exceeds another in
  view of the evidence of two samples.
\newblock \emph{Biometrika}, 25\penalty0 (3/4):\penalty0 285--294, 1933.

\bibitem[Vakili et~al.(2013)Vakili, Liu, and Zhao]{vakili2013deterministic}
Vakili, S., Liu, K., and Zhao, Q.
\newblock Deterministic sequencing of exploration and exploitation for
  multi-armed bandit problems.
\newblock \emph{IEEE Journal of Selected Topics in Signal Processing},
  7\penalty0 (5):\penalty0 759--767, 2013.

\bibitem[Vernade et~al.(2018)Vernade, Carpentier, Zappella, Ermis, and
  Brueckner]{vernade2018contextual}
Vernade, C., Carpentier, A., Zappella, G., Ermis, B., and Brueckner, M.
\newblock Contextual bandits under delayed feedback.
\newblock \emph{arXiv preprint arXiv:1807.02089}, 2018.

\bibitem[Xia et~al.()Xia, Qin, Ma, Yu, and Liu]{xia2016budgeted}
Xia, Y., Qin, T., Ma, W., Yu, N., and Liu, T.-Y.
\newblock Budgeted multi-armed bandits with multiple plays.

\bibitem[Yu et~al.(2018)Yu, Shao, Lyu, and King]{yupure}
Yu, X., Shao, H., Lyu, M.~R., and King, I.
\newblock Pure exploration of multi-armed bandits with heavy-tailed payoffs.
\newblock In \emph{Conference on Uncertainty in Artificial Intelligence}, pp.\
  937--946, 2018.

\end{thebibliography}
\end{document}